\documentclass{article} 
\usepackage{iclr2026_conference,times}
\iclrfinalcopy

\usepackage{amsmath,amsfonts,bm}

\def\eqref#1{equation~\ref{#1}}

\def\1{\bm{1}}

\DeclareMathAlphabet{\mathsfit}{\encodingdefault}{\sfdefault}{m}{sl}
\SetMathAlphabet{\mathsfit}{bold}{\encodingdefault}{\sfdefault}{bx}{n}

\newcommand{\Var}{\mathrm{Var}}

\usepackage{amssymb}
\usepackage{longtable}
\usepackage{dsfont}
\usepackage{etoc} 
\usepackage{xcolor}
\definecolor{RoyalBlue}{HTML}{5A678A}  
\definecolor{MoBlue}{HTML}{18273E}
\definecolor{refblue}{HTML}{3B6DE2}   
\definecolor{eqmagenta}{HTML}{B33B6A} 

\let\oldtextbf\textbf
\renewcommand{\textbf}[1]{{\color{MoBlue}\oldtextbf{#1}}}

\usepackage{titlesec}
\titleformat{\paragraph}
  [runin]                
  {\bfseries\color{MoBlue}} 
  {}                     
  {0pt}                  
  {}                     

\usepackage[colorlinks=true,
    linkcolor=eqmagenta, 
    citecolor=refblue,   
    urlcolor=refblue]{hyperref}

\usepackage{url}
\usepackage{enumitem}
\setlist{leftmargin=1.5em, itemsep=0.5em, topsep=0em, partopsep=0em}

\usepackage{amsthm}

\newtheoremstyle{royalblue}
  {\topsep}   
  {\topsep}   
  {\itshape}  
  {}          
  {\color{RoyalBlue}\bfseries} 
  {.}         
  { }         
  {}          

\newcommand{\thmref}[1]{\textcolor{RoyalBlue}{\bfseries\upshape Theorem~\ref{#1}}}

\theoremstyle{royalblue}

\usepackage{graphicx}   
\usepackage{caption}    

\newtheorem{theorem}{Theorem}[section]
\newtheorem{definition}{Definition}[section]
\newtheorem{assumption}{Assumption}[section]
\newtheorem{lemma}{Lemma}[section]

\usepackage{booktabs}   
\usepackage{multirow}
\usepackage{amsmath}

\definecolor{RankBlue}{HTML}{648FFF}   
\definecolor{RankPurple}{HTML}{785EF0} 
\definecolor{RankOrange}{HTML}{FE6100}  

\newcommand{\rkA}[1]{\textcolor{refblue}{#1}}    
\newcommand{\rkB}[1]{\textcolor{eqmagenta}{#1}}  

\newcommand{\stdsize}{\scriptsize}
\newcommand{\mstd}[2]{%
  #1{\stdsize\ensuremath{\;\pm\;}}{\stdsize #2}%
}

\title{\vspace{-1.2em}
Behavior Learning (BL):\vspace{0.2em}\\
{\Large Learning Hierarchical Optimization Structures from Data}
}

\author{Zhenyao Ma\thanks{Correspondence to: zhenyaoma@stu.xmu.edu.cn; Code contact: yue.liang@student.uni-tuebingen.de}\\
Xiamen University
\And
Yue Liang\\
University of Tübingen 
\And
Dongxu Li \\
Xi'an Jiaotong University \\
}

\begin{document}

\maketitle
\vspace{-1em}
\begin{abstract}
Inspired by behavioral science, we propose \textit{Behavior Learning} (BL), a novel general-purpose machine learning framework that learns interpretable 
and identifiable optimization structures from data, ranging from single optimization problems to hierarchical compositions. It unifies predictive performance, intrinsic interpretability, and identifiability, with broad applicability to scientific domains involving optimization.  BL parameterizes a compositional utility function built from intrinsically interpretable modular blocks, which induces a data distribution for prediction and generation. Each block represents and can be written in symbolic form as a utility maximization problem (UMP), a foundational paradigm in behavioral science and a universal framework of optimization. BL supports architectures ranging from a single UMP to hierarchical compositions, the latter modeling hierarchical optimization structures. Its smooth and monotone variant (IBL) guarantees identifiability. Theoretically, we establish the universal approximation property of BL, and analyze the M-estimation properties of IBL. Empirically, BL demonstrates strong predictive performance, intrinsic interpretability and scalability to high-dimensional data. Code: \href{https://github.com/MoonYLiang/Behavior-Learning}{\texttt{MoonYLiang/Behavior-Learning}} on GitHub; install via \texttt{pip install blnetwork}.
\begin{figure}[htbp] 
    \centering
    \includegraphics[width=\textwidth]{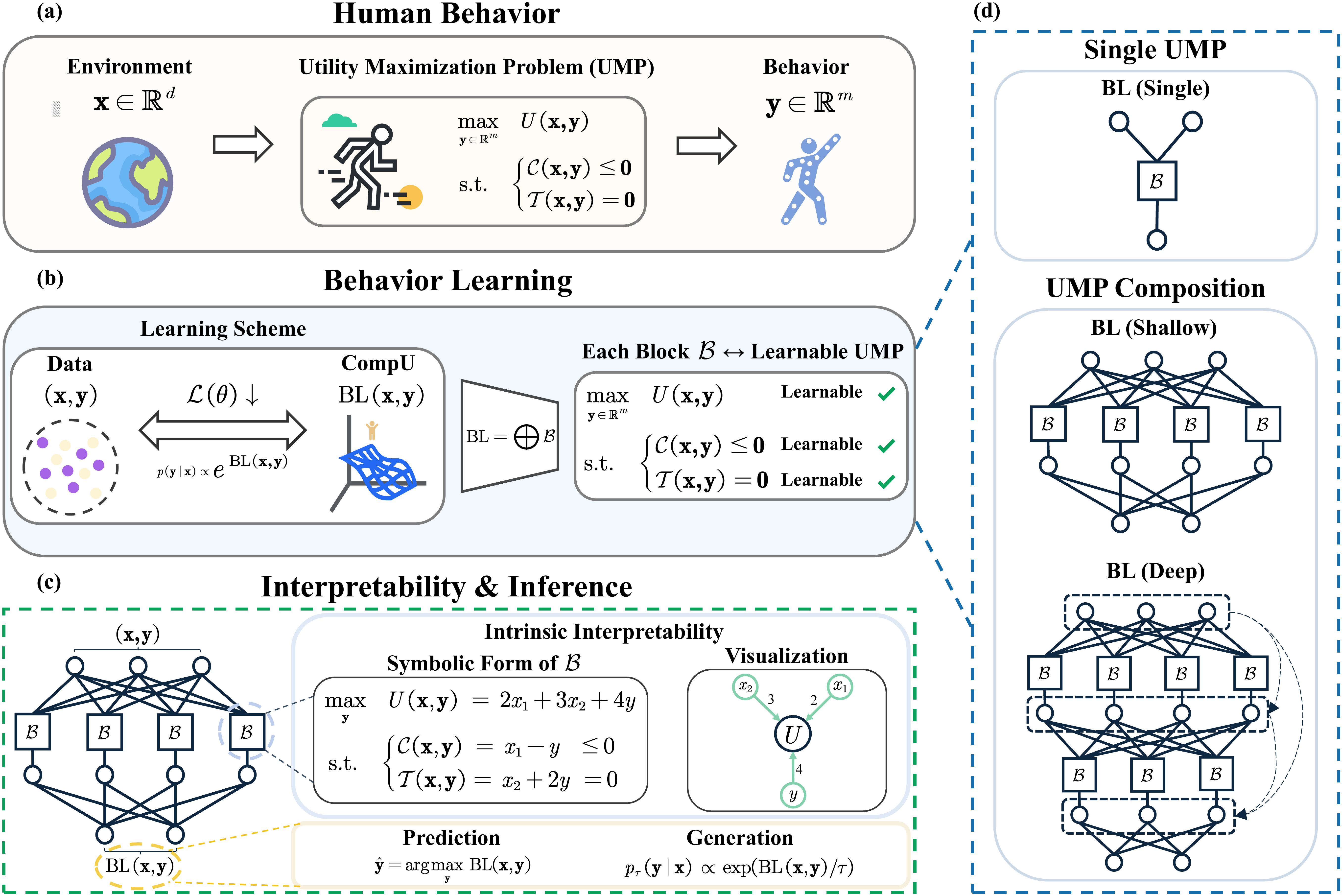}
    \caption{\textbf{Behavior Learning (BL).} 
(a) Human behavior modeled as a UMP.
(b) Learning scheme of BL, where CompU denotes the compositional utility function.
(c) BL offers intrinsic interpretability (via symbolic form as a hierarchical optimization structure), identifiability, and inference capability.
(d) Three architectural variants of BL, from single UMP to deep compositions.}
    \label{fig:abstract}
\end{figure}
\end{abstract}

\section{Introduction}

Scientific research often grapples with phenomena that resist precise formalization \citep{anderson1972more, mitchell2009complexity}, including human and social domains \citep{simon1955behavioral, arthur2009complexity}. Such phenomena are difficult to predict and even harder to falsify through theory alone. Interpretable machine learning (Interpretable ML) \citep{molnar2020interpretable}, with its powerful approximation capabilities and built-in transparency, offers a promising alternative for modeling such phenomena. Yet a long-standing tension remains unresolved: model predictive performance and intrinsic interpretability often trade off—a challenge commonly known as the \textit{performance–interpretability trade-off} \citep{arrieta2020explainable}. High-performing models such as deep neural networks \citep{lecun2015deep} typically lack transparency, while intrinsically interpretable models struggle to capture complex nonlinear patterns. 

Some efforts have been made to mitigate the performance–interpretability trade-off. For example, \citet{hastie2017generalized, alvarez2018towards, angelino2018learning, InteEBM, koh2020concept, InteNAM, InteIGANN, Intekans, plonsky2025predicting} demonstrate varied strengths. However, two fundamental limitations remain, restricting their scientific applicability. (i) \textit{Insufficient alignment with scientific theories}. Most approaches focus on extending existing machine learning methods to achieve interpretability, rather than developing a scientifically grounded framework (e.g., based on optimization problems or differential equations). This often hinders alignment with scientific theories and limits the ability to extract scientific knowledge from learned models \citep{roscher2020explainable, bereska2024mechanistic, longo2024explainable}. (ii) \textit{Non-uniqueness of interpretations}. Most models are \textit{non-identifiable}—their interpretations are not uniquely determined by observable predictions in a mathematical sense \citep{ran2017parameter, InteIdentifiable}. As a result, such models cannot support reliable estimation of ground-truth parameters \citep{newey1994large, van2000asymptotic}, and may even lack Popperian falsifiability \citep{popper2005logic}, ultimately limiting their scientific credibility. These limitations naturally raise a key question: can we design an interpretable ML framework that mitigates the performance–interpretability trade-off while being scientifically grounded and identifiable?

Inspired by behavioral science, we propose \textbf{Behavior Learning (BL)}: \textit{a general-purpose machine learning framework that learns interpretable and identifiable (hierarchical) optimization structures from data}. It unifies high predictive performance, intrinsic interpretability, and identifiability, with broad applicability to scientific domains involving optimization. As illustrated in Figure~\ref{fig:abstract}, BL builds on one of the most fundamental paradigms in behavioral science—utility maximization—which posits that human behavior arises from solving a \textit{utility maximization problem} (UMP) \citep{samuelson1948foundations, debreu1959theory, mas1995microeconomic}. Motivated by this paradigm, BL learns interpretable optimization structures from data. It models responses ($y$) as drawn from a probability distribution induced by a UMP or a composition of multiple interacting UMPs. This distribution is parameterized by a compositional utility function $\mathrm{BL}(\mathbf{x}, \mathbf{y})$, constructed from intrinsically interpretable modular blocks $\mathcal{B}(\mathbf{x}, \mathbf{y})$. Each block is a learnable penalty-based formulation that represents an \textbf{optimization problem} (UMP), which can be written in symbolic form and offers transparency comparable to linear regression. 

BL admits \textit{hierarchical structure}, mainly in three architectural variants: BL(Single), defined by a single block; BL(Shallow), a moderately layered composition of blocks; and BL(Deep), a deep hierarchical composition of multiple blocks. The latter two model, and can be symbolically interpreted as, \textbf{hierarchical optimization structures}. All variants are trained end-to-end to induce a conditional Gibbs distribution for prediction and generation. By refining the penalty functions in each block into smooth and monotone forms, we develop \textit{Identifiable BL} (IBL), the identifiable variant of BL. Under mild conditions, IBL guarantees unique intrinsic interpretability. This property ensures the scientific credibility of its explanations and further supports recovery of the ground-truth model under appropriate conditions.

While motivated by behavioral science, \textit{BL is not domain-specific}. It applies broadly to any scientific domain where observed outcomes arise as solutions to optimization problems—such as macroeconomics \citep{ramsey1928mathematical, ljungqvist2018recursive}, statistical physics \citep{gibbs1902elementary, landau2013statistical}, or evolutionary biology \citep{wright1932roles, fisher1999genetical}. This generality is supported by a key theoretical insight (Theorem~\ref{thm:ump-general}): any optimization problem can be equivalently written as a UMP. This makes BL a general-purpose modeling framework for data-driven inverse optimization \citep{ahuja2001inverse} across diverse scientific disciplines.

We study BL both theoretically and empirically. Theoretically, we show that both BL and IBL admit universal approximation under mild assumptions (Section~\ref{arch:bl}). For IBL, we further establish its M-estimation properties (Section~\ref{arch:ibl}), including identifiability, consistency, universal consistency, asymptotic normality, and asymptotic efficiency. Empirically, we evaluate BL across four tasks. Standard prediction tasks (Section~\ref{expr:sta-pred}) demonstrate its strong predictive performance. A qualitative case study (Section~\ref{expr:case-study}) illustrates its intrinsic interpretability. Prediction on high-dimensional inputs (Section~\ref{expr:high-dim}) further demonstrates its scalability to high-dimensional data. 
Further discussion and related work are provided in Section~\ref{app:discussion} and Section~\ref{related-work}, respectively. We also provide guidance on how to scientifically explain BL(Deep) and architectural details in Section~\ref{sci-BLdeep} and Section~\ref{app:arch}, respectively.

Overall, our key contributions are threefold. (i) We propose Behavior Learning (BL), a novel general-purpose machine learning framework inspired by behavioral science, which unifies high predictive performance, intrinsic interpretability, identifiability, and scalability. (ii) For scientific research, BL offers a scientifically grounded, interpretable, and identifiable machine learning approach for modeling complex phenomena that defy precise formalization. BL applies broadly to scientific disciplines involving optimization. (iii) At the paradigm level, BL learns from data the optimization structure of either a single optimization problem or a hierarchical composition of problems through distributional modeling, contributing a new methodology to data-driven inverse optimization.

\section{Behavior Learning (BL)}

\subsection{Utility Maximization Problem (UMP)}

The modeling of human behavior, particularly in behavioral science and decision theory, often begins with the assumption that observed outcomes arise from a latent optimization process. A canonical formulation of this idea is the Utility Maximization Problem (UMP) \citep{mas1995microeconomic}, in which an agent selects actions \(\mathbf{y} \in \mathcal{Y}\) in response to contextual features \(\mathbf{x} \in \mathcal{X}\) by solving:
\begin{equation}
\max_{\mathbf{y} \in \mathcal{Y}}\, U(\mathbf{x},\mathbf{y})
\quad\text{s.t.}\quad
\mathcal{C}(\mathbf{x},\mathbf{y}) \le 0,\;\;
\mathcal{T}(\mathbf{x},\mathbf{y}) = 0
\end{equation}
Here, \(U(\cdot)\) denotes a subjective utility function encoding the agent’s internal preferences or goals. The inequality constraint \(\mathcal{C}(\cdot)\) captures resource constraints, while the equality constraint \(\mathcal{T}(\cdot)\) encodes either endogenous belief consistency or exogenous conservation laws.

The UMP can be recast as a cost–benefit framework, where the agent trades off utility gains against constraint violations. Formally, under mild regularity conditions, it admits an unconstrained penalty reformulation at the level of local optimality  \citep{han1979exact}, as formalized below.
\begin{theorem}[Local Exact Penalty Reformulation for UMP]
\label{thm:penalty-ump}
Let $\mathcal{X}\subset\mathbb{R}^{d_x}$ and $\mathcal{Y}\subset\mathbb{R}^{d_y}$ be nonempty compact sets, and let
$U:\mathcal{X}\times\mathcal{Y}\to\mathbb{R}$,
$\mathcal{C}:\mathcal{X}\times\mathcal{Y}\to\mathbb{R}^m$, and
$\mathcal{T}:\mathcal{X}\times\mathcal{Y}\to\mathbb{R}^p$
be $C^1$.
Assume that for a given $\mathbf{x}\in\mathcal{X}$, the Han--Mangasarian constraint qualification holds at any strict local maximizer $\mathbf{y}^\star$ of the UMP. 
Then there exist $\lambda_0>0$, $\lambda_1\in\mathbb{R}^m_{++}$, and $\lambda_2\in\mathbb{R}^p_{++}$ such that
$\mathbf{y}^\star$ is a local maximizer of
\begin{equation}
\max_{\mathbf{y}\in\mathcal{Y}}\;
\lambda_0\,\phi\!\bigl(U(\mathbf{x},\mathbf{y})\bigr)
-\lambda_1^\top \rho\!\bigl(\mathcal{C}(\mathbf{x},\mathbf{y})\bigr)
-\lambda_2^\top \psi\!\bigl(\mathcal{T}(\mathbf{x},\mathbf{y})\bigr).
\end{equation}
Here $\phi:\mathbb{R}\to\mathbb{R}$ is strictly increasing and $C^1$, $\rho(z):=\max\{z,0\}$, and $\psi(z):=|z|$.
\end{theorem}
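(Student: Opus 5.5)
The plan is to reduce the statement to the classical exact-penalty theorem of Han and Mangasarian, after first changing the objective from $U$ to $\phi\circ U$. Fix $\mathbf{x}$ and drop it from notation. The first step is to record that, because $\phi$ is strictly increasing, the UMP with objective $U$ and the UMP with objective $\tilde U:=\phi\circ U$ have the same feasible set and the same local and strict local maximizers. Since $\phi\in C^1$, $\tilde U$ is again $C^1$. The Han--Mangasarian constraint qualification involves only the constraint maps $\mathcal{C},\mathcal{T}$ (a Mangasarian--Fromovitz-type condition on the active constraints at $\mathbf{y}^\star$), so it continues to hold at $\mathbf{y}^\star$ for the transformed problem. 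Thus $\mathbf{y}^\star$ is a strict local maximizer of $\max_{\mathbf{y}\in\mathcal{Y}}\tilde U(\mathbf{y})$ subject to $\mathcal{C}\le 0$ and $\mathcal{T}=0$.

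The second step is to convert to minimization, $\min -\tilde U$, and invoke the Han--Mangasarian exact penalty result (\citealp{han1979exact}). For a strict local minimizer satisfying the constraint qualification, there exist finite weights such that
\[
P(\mathbf{y}) = -\tilde U(\mathbf{y}) + \sum_i \mu_i \max\{\mathcal{C}_i(\mathbf{y}),0\} + \sum_j \nu_j |\mathcal{T}_j(\mathbf{y})|
\]
has $\mathbf{y}^\star$ as a local minimizer, for all weights above a threshold. If I want a self-contained argument, I would proceed as follows.
\begin{itemize}
\item Take multipliers $(u,v)$ from the KKT conditions, which hold under MFCQ.
\item Choose $\mu_i>|u_i|$ and $\nu_j>|v_j|$.
\item Use strictness of the maximizer together with a contradiction argument along a sequence $\mathbf{y}_k\to\mathbf{y}^\star$ with $P(\mathbf{y}_k)<P(\mathbf{y}^\star)$. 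Project $\mathbf{y}_k$ onto the feasible set using the MFCQ-based error bound $\operatorname{dist}(\mathbf{y},\text{feasible})\le c\,(\|\rho(\mathcal{C})\|_1+\|\mathcal{T}\|_1)$ near $\mathbf{y}^\star$.
\item Use the Lipschitz continuity of $\tilde U$ on compact $\mathcal{Y}$ to bound the loss in objective by the penalty, provided the weights exceed $c\cdot\mathrm{Lip}(\tilde U)$.
\end{itemize}

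The third step is to choose the coefficients. Set $\lambda_0=1$, and take $\lambda_1=(\mu_i)$ and $\lambda_2=(\nu_j)$. Since only a lower threshold is required, these can be enlarged freely, so strict positivity, $\lambda_1\in\mathbb{R}^m_{++}$ and $\lambda_2\in\mathbb{R}^p_{++}$, is immediate. Negating $P$ gives exactly the displayed objective $\lambda_0\phi(U)-\lambda_1^\top\rho(\mathcal{C})-\lambda_2^\top\psi(\mathcal{T})$, with $\mathbf{y}^\star$ as a local maximizer over $\mathcal{Y}$.

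The main obstacle is the exact-penalty step itself, specifically obtaining the local error bound from the constraint qualification, and making sure the constraint $\mathbf{y}\in\mathcal{Y}$ causes no trouble. For the latter, I would either assume $\mathbf{y}^\star$ is interior to $\mathcal{Y}$ or absorb $\mathcal{Y}$ into the local neighborhood. Since the paper permits citing \citet{han1979exact}, I would lean on that result directly, and verify in detail only the invariance of maximizers and of the constraint qualification under $\phi$.
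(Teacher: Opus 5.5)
Your proposal is correct and follows the paper's proof essentially step for step. The paper also replaces the objective by $-\lambda_0\phi(U)$ over the unchanged feasible set and invokes Han--Mangasarian's Theorem~4.4 (strict local minimizer plus their constraint qualification) to get an $\ell_1$-type exact penalty for all parameters $\alpha\ge\bar\alpha$. It then gets strictly positive weights by using a weighted $\ell_1$ norm with arbitrary $\lambda_1\in\mathbb{R}^m_{++}$, $\lambda_2\in\mathbb{R}^p_{++}$ and rescaling by $\alpha>\bar\alpha$, which is your ``enlarge the weights freely'' step. Like the paper's appendix version, you genuinely need $\mathbf{y}^\star\in\mathrm{int}(\mathcal{Y})$. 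Your alternative of ``absorbing $\mathcal{Y}$ into the local neighborhood'' only works in that case: at a boundary point of $\mathcal{Y}$, $\mathbf{y}^\star$ need not be optimal for the ambient-space problem, so Han--Mangasarian does not apply and exactness can fail.
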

The proof is provided in Appendix~\ref{proof-of-ump}. This unconstrained reformulation offers greater tractability for both theoretical analysis and model training.

While motivated by behavioral modeling, the UMP formulation is not domain-specific. It applies to any setting where observed outcomes are solutions to (explicit or latent) optimization problems. This is because any optimization problem can be equivalently formulated as a UMP. We state this in the following result, while the formal statement and proof are provided in Appendix~\ref{proof-of-ump}.\begin{theorem}[Universality of UMP]\label{thm:ump-general}
Any optimization problem of the form 
$\max_{\mathbf y\in\mathcal Y} f(\mathbf x,\mathbf y)$ 
or $\min_{\mathbf y\in\mathcal Y} f(\mathbf x,\mathbf y)$, 
subject to equality and inequality constraints, 
is equivalent to a UMP. 
\end{theorem}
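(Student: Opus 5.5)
The argument is essentially definitional, so the work lies in fixing what ``equivalent'' means and then exhibiting an explicit UMP. We call two problems over the same feasible region equivalent if, for every $\mathbf{x}\in\mathcal{X}$, they have the same feasible set and the same set of (global and local) optimizers, with optimal values related by a fixed bijection. Consider a general problem
\[
\operatorname{opt}_{\mathbf{y}\in\mathcal{Y}} f(\mathbf{x},\mathbf{y}) \quad\text{s.t.}\quad g(\mathbf{x},\mathbf{y})\le 0,\; h(\mathbf{x},\mathbf{y})=0,
\]
where $\operatorname{opt}\in\{\max,\min\}$ and $g:\mathcal{X}\times\mathcal{Y}\to\mathbb{R}^m$, $h:\mathcal{X}\times\mathcal{Y}\to\mathbb{R}^p$. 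We treat the two cases of $\operatorname{opt}$ separately and map each onto the UMP template of the previous subsection.

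\emph{Maximization case.} Set $U:=f$, $\mathcal{C}:=g$ and $\mathcal{T}:=h$. The resulting UMP is literally the same problem, so equivalence is immediate. This case only makes the template explicit.

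\emph{Minimization case.} Set $U:=-f$ and keep $\mathcal{C}:=g$, $\mathcal{T}:=h$. The feasible set is unchanged. For any feasible $\mathbf{y}^\star$ and any feasible $\mathbf{y}$, we have $f(\mathbf{x},\mathbf{y}^\star)\le f(\mathbf{x},\mathbf{y})$ if and only if $U(\mathbf{x},\mathbf{y}^\star)\ge U(\mathbf{x},\mathbf{y})$. Applied globally, this shows the global minimizers of the original problem are exactly the global maximizers of the UMP. Applied within any neighborhood, it shows the local minimizers are exactly the local maximizers. The optimal values satisfy $\min f=-\max U$. More generally, we may take $U:=\phi\circ(-f)$ for any strictly increasing $\phi$, which matches the form appearing in Theorem~\ref{thm:penalty-ump}.

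\emph{Normalizing constraint formats.} Constraints of the forms $g\ge 0$, $h_1=h_2$, or $g_1\le g_2$ are rewritten as $-g\le 0$, $h_1-h_2=0$ and $g_1-g_2\le 0$. Each rewrite leaves the feasible set unchanged. An unconstrained problem is covered by taking $m=p=0$, or by the trivial constraints $\mathcal{C}\equiv -1$ and $\mathcal{T}\equiv 0$. We also note that regularity is inherited: if $f,g,h$ are $C^1$, so are $U,\mathcal{C},\mathcal{T}$. Consequently, whenever the constraint qualification holds, Theorem~\ref{thm:penalty-ump} also transfers to the reformulated problem.

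\emph{Main obstacle.} The only delicate step is stating the formal version precisely. Specifically, one must verify that the optimizer sets coincide at both the global and the local level, and that no hidden assumptions, such as existence of optima, are needed. Existence is not needed, because the argument is a bijection between feasible points that preserves or reverses order. If an optimum exists for one problem, it exists for the other, and the two problems share their optimizer sets.
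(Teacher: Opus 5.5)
Your proof is correct, but it takes a different route from the paper's. You keep the constraints vector-valued and set $\mathcal{C}:=g$, $\mathcal{T}:=h$ (after negating $\ge$-constraints), so the UMP is literally the original system with $U=\pm f$.

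The paper instead works with arbitrary nonempty sets $\mathcal{X},\mathcal{Y}$ and arbitrary, possibly uncountable, families of constraints. It collapses them into one scalar inequality and one scalar equality, $\mathcal{C}:=\max\{0,\sup_i g_i,\sup_k(-\tilde g_k)\}$ and $\mathcal{T}:=\max\{0,\sup_j|h_j|\}$. It then checks both inclusions between the feasible sets to conclude that optimal values and argmax sets coincide.

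That aggregation step is what gives the paper extra generality. Your $\mathcal{C},\mathcal{T}$ take values in $\mathbb{R}^m,\mathbb{R}^p$ with $m,p$ finite, so semi-infinite constraint systems are not covered unless you add exactly this sup-scalarization.

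In exchange, your construction preserves $C^1$ regularity and any constraint qualification, so Theorem~\ref{thm:penalty-ump} transfers to the reformulated problem. The paper's $\mathcal{C}$ and $\mathcal{T}$ are nonnegative and vanish on the whole feasible set. At every feasible point they are therefore either nondifferentiable or have zero gradient, which is incompatible with the $C^1$ and constraint-qualification hypotheses of that theorem. You also handle local optimizers, which the paper's feasible-set argument does not address.

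One minor point of precision concerns your optional $U:=\phi\circ(-f)$. The optimizer sets coincide for any strictly increasing $\phi$, but matching possibly unattained optimal values through a fixed bijection requires $\phi$ to be continuous, as it is under the $C^1$ assumption of Theorem~\ref{thm:penalty-ump}.
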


\subsection{BL Architecture}\label{arch:bl}
Figure~\ref{fig:abstract}(b–d) illustrates the architecture of BL. We consider samples \((\mathbf{x}, \mathbf{y}) \sim \mathcal{D}\), where \(\mathbf{x} \in \mathbb{R}^d\) denotes contextual features and \(\mathbf{y}\) is the response, represented as \((\mathbf{y}^{\mathrm{disc}},\mathbf{y}^{\mathrm{cont}})\in\mathcal{Y}_{\mathrm{disc}}\times\mathbb{R}^{m_c}\), capturing its hybrid structure. Responses are assumed to be stochastically generated by solving multiple interacting UMPs, each with a penalty-based formulation, which together compose a compositional utility function \(\mathrm{BL}(\mathbf{x}, \mathbf{y})\). On this basis, we model the data using a conditional Gibbs distribution \citep{gibbs1902elementary} parameterized by \(\mathrm{BL}_{\Theta}(\mathbf{x}, \mathbf{y})\):
\begin{equation}
p_\tau(\mathbf{y}\mid \mathbf{x};{\Theta})=
\frac{\exp\!\bigl(\text{BL}_{\Theta}(\mathbf{x},\mathbf{y})/\tau\bigr)}
     {Z_\tau(\mathbf{x};{\Theta})},\quad
Z_\tau(\mathbf{x};{\Theta})=
\int_{\mathcal{Y}}\!\exp\!\bigl(\text{BL}_{\Theta}(\mathbf{x},\mathbf{y}')/\tau\bigr)\,d\mathbf{y}'
\label{eq:gibbs}
\end{equation}
Here the temperature parameter $\tau > 0$ controls the randomness of the response. As $\tau \to 0$, the distribution in~\eqref{eq:gibbs} converges to a Dirac measure supported on $\arg\max_{\mathbf{y}} \mathrm{BL}(\mathbf{x}, \mathbf{y})$, thereby recovering the deterministic best response obtained by solving the composed UMPs.

\paragraph{Model Structure of \(\mathrm{BL}(\mathbf{x}, \mathbf{y})\).}
To represent the composition of multiple UMPs, we build \(\mathrm{BL}(\mathbf{x}, \mathbf{y})\) by composing fundamental modular blocks \(\mathcal{B}(\mathbf{x}, \mathbf{y})\). Each block provides a penalty-based formulation of a single UMP, and together they yield the overall compositional utility function. Motivated by Theorem~\ref{thm:penalty-ump}, we parameterize \(\mathcal{B}(\mathbf{x}, \mathbf{y})\) as
\begin{equation}
\mathcal{B}(\mathbf{x},\mathbf{y};\theta) :=
\lambda_0^{\!\top} \phi\big(U_{\theta_U}(\mathbf{x}, \mathbf{y})\big)
- \lambda_1^\top \rho\big(\mathcal{C}_{\theta_C}(\mathbf{x}, \mathbf{y})\big)
- \lambda_2^\top \psi\big(\mathcal{T}_{\theta_T}(\mathbf{x}, \mathbf{y})\big)
\label{eq:Bfunc}
\end{equation}
where \(\theta := (\lambda_0, \lambda_1, \lambda_2, \theta_U, \theta_C, \theta_T)\) denotes the complete set of learnable parameters. Following Theorem~\ref{thm:penalty-ump}, \(\phi\) is an increasing function; \(\rho\) penalizes inequality violations; and \(\psi\) captures symmetric deviations. Each block can be written as a well-defined UMP.

We then compose \(\mathrm{BL}(\mathbf{x}, \mathbf{y})\) from multiple \(\mathcal{B}\)-blocks through hierarchical composition to improve its representational power for optimization structures, yielding three main architectural variants, as illustrated in Figure~\ref{fig:abstract}(d).
\begin{enumerate}
    \item BL(Single) applies a single instance of \(\mathcal{B}(\mathbf{x},\mathbf{y})\) as defined in~\eqref{eq:Bfunc}, without any additional layers. It can be viewed as learning a single UMP, and offers maximal interpretability.
    \item BL(Shallow) uses \(\mathcal{B}(\mathbf{x}, \mathbf{y})\) as the fundamental modular block to construct a shallow network. It introduces one or two intermediate layers of computation. Each layer \(\mathbb{B}_{\ell}\) stacks multiple parallel \(\mathcal{B}_{\ell,i}\) blocks to produce a vector in \(\mathbb{R}^{d_\ell}\), i.e., \(\mathbb{B}_{\ell}(\mathbf{x},\mathbf{y};\theta_\ell) := [ \mathcal{B}_{\ell,1}(\mathbf{x},\mathbf{y};\theta_{\ell,1}), \ldots, \mathcal{B}_{\ell,d_\ell}(\mathbf{x},\mathbf{y};\theta_{\ell,d_\ell}) ]^\top\). The output of \(\mathbb{B}_{\ell}\) is directly fed into the next \(\mathbb{B}_{\ell+1}\), and only the final output is passed through a learnable affine transformation.
\item BL(Deep) extends the BL(Shallow) architecture to more than two layers, enabling richer hierarchical compositions of UMPs while maintaining the same recursive structure. As before, only the final output is affine transformed.
\end{enumerate}

The overall structure of BL(Shallow) and BL(Deep) can be expressed in a unified form, where the shallow case corresponds to \(L \leq 2\) and the deep case to \(L > 2\):
\begin{equation}
\mathrm{BL}(\mathbf{x},\mathbf{y}) := \mathbf{W}_L \cdot \mathbb{B}_L\big( \cdots \mathbb{B}_2( \mathbb{B}_1(\mathbf{x},\mathbf{y}) ) \cdots \big)
\end{equation}

\paragraph{Learning Objective.}
The response \(\mathbf{y}\) may contain both discrete and continuous components. For discrete responses, we directly apply cross-entropy \citep{kullback1951information} on \(\mathbf{y}^{\mathrm{disc}}\). For continuous responses, since the compositional utility function is analogous to an energy function \citep{lecun2006tutorial}, we employ denoising score matching \citep{vincent2011connection} on \(\mathbf{y}^{\mathrm{cont}}\). The final objective combines the two with nonnegative weights \(\gamma_{\mathrm{d}},\gamma_{\mathrm{c}}\):
\begin{equation}
\mathcal{L}(\theta) =
\gamma_{\mathrm{d}}\,
\mathbb{E}
\bigl[-\log p_\tau(\mathbf{y}^{\mathrm{disc}}\mid\mathbf{x})\bigr]
+\gamma_{\mathrm{c}}\,
\mathbb{E}
\left\|
\nabla_{\tilde{\mathbf{y}}^{\mathrm{cont}}}
\log p_\tau(\tilde{\mathbf{y}}^{\mathrm{cont}}\mid\mathbf{x})
+\sigma^{-2}(\tilde{\mathbf{y}}^{\mathrm{cont}} - \mathbf{y}^{\mathrm{cont}})
\right\|^2
\label{lossfunc}
\end{equation}

\paragraph{Implementation Details.}
Here, we describe the key implementation choices for the general form of BL, taken as defaults unless otherwise noted. Further details are provided in Appendix~\ref{app:implementation}.
\begin{itemize}
    \item Function Instantiation. Following~\eqref{eq:Bfunc}, we instantiate the function \(\mathcal{B}(\mathbf{x},\mathbf{y})\) as
\begin{equation}
\mathcal{B}(\mathbf{x},\mathbf{y}) =
\lambda_0^{\!\top} \tanh\bigl(\mathbf{p}_u(\mathbf{x}, \mathbf{y})\bigr)
- \lambda_1^{\!\top} \mathrm{ReLU}\bigl(\mathbf{p}_c(\mathbf{x}, \mathbf{y})\bigr)
- \lambda_2^{\!\top} \bigl|\mathbf{p}_t(\mathbf{x}, \mathbf{y})\bigr|
\label{imp:Bfunc}
\end{equation}
where \(\mathbf{p}_u, \mathbf{p}_c, \mathbf{p}_t\) are polynomial feature maps of bounded degree, providing interpretable representations of utility, inequality, and equality terms, respectively. The bounded \(\tanh\) reflects the principle of diminishing marginal utility \citep{jevons2013theory}, a commonly assumed principle in behavioral science, while \(\mathrm{ReLU}\) and \(|\cdot|\) introduce soft penalties for constraint violations.
   \item Polynomial Maps. In BL(Single), the structure of polynomial maps is optional. In BL(Shallow) and BL(Deep), each \(\mathcal{B}\)-block employs affine transformations as its polynomial maps, with higher-degree and interaction terms omitted by default for computational efficiency.
   \item Skip Connections. For deep variants, skip connections can be optionally introduced to improve representational efficiency.
\end{itemize}

More \textbf{detailed architectural descriptions} for this section are provided in Appendix~\ref{app:arch}.

\paragraph{Theoretical Guarantees.}
Under the given architecture, the BL framework has universal approximation power: it can approximate any continuous conditional distribution arbitrarily well, provided that $\text{BL}$ has sufficient capacity, as stated below. The proof is given in Appendix~\ref{app:UAP}.
\begin{theorem}[Universal Approximation of BL]
\label{thm:ua}
Let $\mathcal{X} \subset \mathbb{R}^d$ and $\mathcal{Y} \subset \mathbb{R}^m$ be compact sets, and let $p^\star(\mathbf{y} \mid \mathbf{x})$ be any continuous conditional density such that $p^\star(\mathbf{y} \mid \mathbf{x}) > 0$ for all $(\mathbf{x}, \mathbf{y}) \in \mathcal{X} \times \mathcal{Y}$. Then for any $\tau > 0$ and $\varepsilon > 0$, there exists a finite BL architecture (with depth and width depending on $\varepsilon$) and a parameter $\theta^\star$ such that the Gibbs distribution in~\eqref{eq:gibbs} satisfies
\begin{equation}
\sup_{\mathbf{x} \in \mathcal{X}} \mathrm{KL}\bigl( p^\star(\cdot \mid \mathbf{x}) \,\|\, p_\tau(\cdot \mid \mathbf{x}; \theta^\star) \bigr) < \varepsilon.
\end{equation}
\end{theorem}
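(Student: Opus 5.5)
The plan is to reduce the KL bound to uniform approximation of the log-density, then show that BL can approximate any continuous function uniformly on the compact set $\mathcal{X}\times\mathcal{Y}$.

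\textbf{Target function.} Fix $\tau>0$ and set
\[
f^\star(\mathbf{x},\mathbf{y}) := \tau\log p^\star(\mathbf{y}\mid\mathbf{x}).
\]
Since $p^\star$ is continuous and strictly positive on the compact set $\mathcal{X}\times\mathcal{Y}$, it is bounded away from $0$ and $\infty$. Hence $f^\star$ is continuous and bounded there. If $\mathrm{BL}_\Theta(\mathbf{x},\mathbf{y})=f^\star(\mathbf{x},\mathbf{y})$ held exactly, then the Gibbs distribution in~\eqref{eq:gibbs} would satisfy $Z_\tau\equiv 1$ and reproduce $p^\star$ exactly. So it suffices to approximate $f^\star$ uniformly.

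\textbf{Uniform closeness gives small KL.} Suppose $\sup_{\mathcal{X}\times\mathcal{Y}}|\mathrm{BL}_\Theta-f^\star|\le\delta$. Then
\[
e^{-\delta/\tau}\le Z_\tau(\mathbf{x};\Theta)\le e^{\delta/\tau},
\]
so $|\log Z_\tau|\le\delta/\tau$. It follows that
\[
\bigl|\log p^\star(\mathbf{y}\mid\mathbf{x})-\log p_\tau(\mathbf{y}\mid\mathbf{x};\Theta)\bigr|\le 2\delta/\tau
\]
pointwise. Integrating against $p^\star(\cdot\mid\mathbf{x})$ gives $\mathrm{KL}\le 2\delta/\tau$, uniformly in $\mathbf{x}$. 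Choosing $\delta<\varepsilon\tau/2$ therefore proves the theorem. This step is routine.

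\textbf{Universal approximation by BL.} The remaining step, and the main obstacle, is to show that the class of BL(Shallow) networks is dense in $C(\mathcal{X}\times\mathcal{Y})$.

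I would use the instantiation~\eqref{imp:Bfunc} with affine maps $\mathbf{p}_u,\mathbf{p}_c,\mathbf{p}_t$. Setting $\lambda_1=\lambda_2=0$ and taking $\mathbf{p}_u$ scalar, one block computes
\[
\mathcal{B}(\mathbf{x},\mathbf{y})=\lambda_0\tanh(\mathbf{a}^\top(\mathbf{x},\mathbf{y})+b).
\]
So a single layer $\mathbb{B}_1$ of width $N$, followed by the final affine map $\mathbf{W}_1$, realizes exactly a one-hidden-layer $\tanh$ network $\sum_i w_i\tanh(\mathbf{a}_i^\top\mathbf{z}+b_i)$ with $\mathbf{z}=(\mathbf{x},\mathbf{y})$. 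The bias of $\mathbf{W}_1$ is harmless, and one can also absorb $\lambda_0$ into $w_i$. The classical universal approximation theorem (Cybenko; Hornik; Leshno et al.), which applies because $\tanh$ is non-polynomial and sigmoidal, gives density of these networks in $C(\mathcal{X}\times\mathcal{Y})$.

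Alternatively, ReLU-only blocks work as well, via $\lambda_1\mathrm{ReLU}(\cdot)$ with signs absorbed into $\mathbf{W}_L$. If the architecture requires that all three penalty terms be present, a term can be made inert by choosing $\mathbf{p}_c$ with a large negative bias so that the ReLU vanishes on the compact set.

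\textbf{Remaining details.} I would check two points. First, the model's normalization uses the integral over $\mathcal{Y}$ with respect to Lebesgue measure, which is finite because $\mathcal{Y}$ is compact. Second, with $\delta$ chosen above, the network found by density has finite width, which gives the claimed finite architecture and parameter $\theta^\star$. Deeper variants then follow by composing additional layers that approximate the identity map, or by using skip connections.
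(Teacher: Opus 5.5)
Your proposal is correct and follows essentially the same route as the paper: you uniformly approximate $\tau\log p^\star$ by a one-hidden-layer $\tanh$ network built from BL blocks with $\lambda_1=\lambda_2=0$ and affine $\mathbf{p}_u$, then use $|\log Z_\tau|\le\delta/\tau$ to get $\mathrm{KL}\le 2\delta/\tau$ uniformly in $\mathbf{x}$. The only difference is cosmetic: the paper packs the whole $\tanh$ network into a single block (vector-valued $\mathbf{p}_u$, with a nonnegative $\lambda_0$ handled via the oddness of $\tanh$), whereas you spread it over a width-$N$ layer of scalar blocks and let the unconstrained readout $\mathbf{W}_1$ carry the signed output weights, which is equally valid.
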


\begin{figure}[t] 
    \centering
    \includegraphics[width=0.85\textwidth]{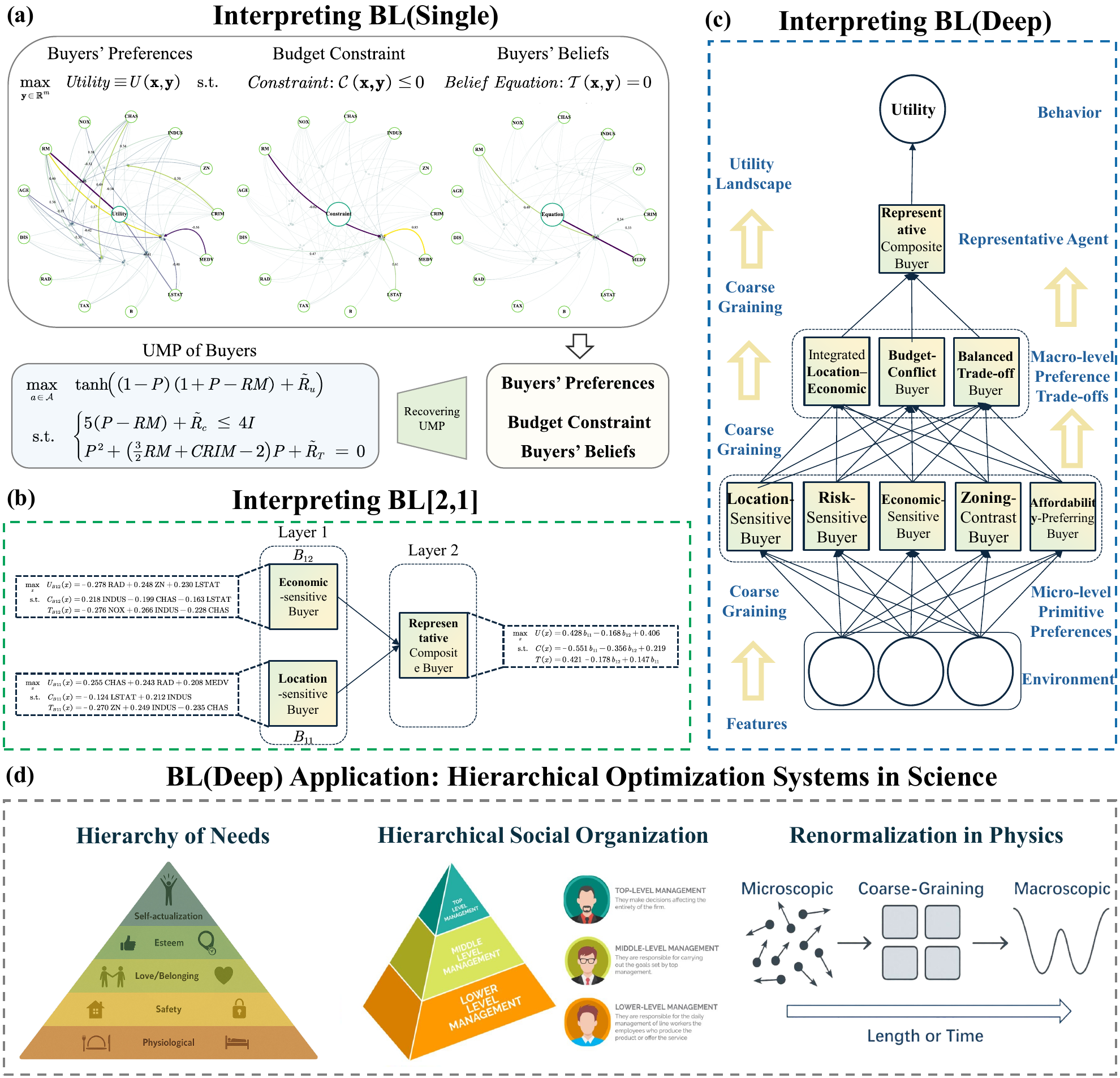}
    \caption{\textbf{(a)} Visualization and symbolic form of BL(Single) trained on the \textit{Boston Housing} dataset, modeling the UMP ($\max U \;\; \text{s.t.}\; \mathcal{C}\leq 0,\; \mathcal{T}=0$) of a representative buyer in Boston housing (details in Section~\ref{expr:case-study}).  
Top: computational graphs of the polynomials inside the three penalty functions—\(\tanh\) (preference), \(\mathrm{ReLU}\) (budget), and \(|\cdot|\) (belief). Each graph is respectively centered on $\tanh^{-1}(U)$, $\mathcal{C}$, and $\mathcal{T}$ from left to right, with surrounding nodes representing input features. Directed edges (shown only if coefficient \(\geq 0.3\)) indicate how each feature contributes to the corresponding term.  
Bottom: approximate symbolic formulation of the trained BL model as a UMP.
\textbf{(b)} The BL[2,1] architecture.  
Layer~1 identifies two key micro-level preference types: the \textit{Economic-sensitive Buyer} and the \textit{Location-sensitive Buyer}.  
Layer~2 aggregates these two components into an effective representative buyer.
\textbf{(c)} The BL(Deep) [5,3,1] architecture.  
Layer~1 recovers five distinct micro-level housing preference types.  
Layer~2 identifies three macro-level trade-off types capturing different ways these primitive preferences interact.  
Layer~3 aggregates them into the overall representative buyer. Table~\ref{app:senmicofblock} provides detailed descriptions of each type.  
BL(Deep) provides a hierarchical explanation consistent with the coarse-graining principle \citep{kadanoff1966scaling} in statistical physics, reconstructing the full micro-to-macro optimization hierarchy. In addition, the preference and trade-off patterns uncovered by BL(Deep) are well documented in the classical economics literature (see Table~\ref{app:sci-recove}).
\textbf{(d)} BL can be applied to a broad class of hierarchical optimization structures in science, including hierarchical need structures, hierarchical social–organizational structures, and renormalization-style coarse-grained structures in physics.}
    \label{fig:case-study-1}
\end{figure}

\paragraph{Interpretability.} Alongside its expressive power, BL also exhibits strong intrinsic interpretability. 
\begin{itemize}
    \item Each $\mathcal{B}$-block can be expressed in \textbf{symbolic form} as an optimization problem (UMP): the $\tanh$ term defines the objective, the $\operatorname{ReLU}$ term corresponds to an inequality constraint, and the absolute-value term corresponds to an equality constraint. Thus, BL(Single) can be directly expressed as a symbolic UMP, whereas deeper architectures can be interpreted as compositions of UMPs, with each block retaining interpretability. 
    \item The \textbf{polynomial basis} ensures a level of \textbf{transparency comparable to linear regression}, as both objectives and constraints can be represented as linear combinations of polynomial features. It can further be visualized as a computational graph (Figure~\ref{fig:arch-interpretability}), in which each input's influence on every \(\mathcal{B}\)-block is traceable through compositional pathways.
    \item  BL(Deep) composes $\mathcal{B}$-blocks in a layered manner, forming a \textbf{hierarchical optimization structure}. 
Interpretation proceeds in a bottom-up fashion, where the relation between any two consecutive layers can be viewed as aggregation or coarse-grained observation. 
Overall, the interpretive pathway is:
\textit{raw input features} $\rightarrow$ \textit{micro-level optimization blocks} $\rightarrow$
\textit{macro-level aggregation or coarse-grained behavioral constructs} $\rightarrow$
\textit{macro-level optimization systems}.
Section~\ref{sci-BLdeep} provides a detailed description of this interpretation procedure.
    \item BL also offers multiple architectural degrees of freedom that provide flexibility but simultaneously affect the resulting interpretability.
In deep variants, skip connections introduce cross-layer dependency structures that are modeled in statistical physics~\citep{yang2017mean}. 
Replacing polynomial maps with affine transformations preserves the underlying optimization semantics but reduces symbolic granularity, yielding a more qualitative rather than symbolic interpretation of each block.
     \item BL can be interpreted as a single UMP when the final layer contains only one $\mathcal{B}$-block, since all lower-layer structures aggregate into a unified optimization problem. When the final layer contains multiple $\mathcal{B}$-blocks, BL corresponds to a linear trade-off among multiple optimization problems.
\end{itemize}

\subsection{Identifiable Behavior Learning (IBL)} \label{arch:ibl}
Beyond prediction and interpretability, the BL framework supports a third fundamental goal: \emph{the identification of ground-truth parameters}, which in turn endows BL with the capacity for scientifically credible modeling. We refer to this setting as \textbf{Identifiable Behavior Learning (IBL)}. In the IBL setting, we define the modular block as
\begin{equation}
\mathcal{B}^\mathrm{id}(\mathbf{x},\mathbf{y};\theta) :=
\lambda_0^{\!\top}\phi^\mathrm{id}\big(U_{\theta_U}(\mathbf{x}, \mathbf{y})\big)
- \lambda_1^\top \rho^\mathrm{id}\big(\mathcal{C}_{\theta_C}(\mathbf{x}, \mathbf{y})\big)
- \lambda_2^\top \psi^\mathrm{id}\big(\mathcal{T}_{\theta_T}(\mathbf{x}, \mathbf{y})\big)
\label{eq:iblfunc}
\end{equation}
Unlike BL, which uses general nonlinearities, the IBL architecture imposes stricter structural constraints: \(\phi^{\mathrm{id}}\) and \(\rho^{\mathrm{id}}\) are strictly increasing, while \(\psi^{\mathrm{id}}\) is symmetric and strictly increasing in \(|\cdot|\). In addition, all three functions are \(C^1\). These properties ensure that each UMP block stays responsive and adjusts smoothly to objectives and constraints. In practice, we instantiate \eqref{eq:iblfunc} as
\begin{equation}
    \mathcal{B}^\mathrm{id}(\mathbf{x}, \mathbf{y}) =
\lambda_0^{\!\top} \tanh\bigl(\mathbf{p}_u(\mathbf{x}, \mathbf{y})\bigr)
-  \lambda_1^{\!\top} \operatorname{softplus}\bigl(\mathbf{p}_c(\mathbf{x}, \mathbf{y})\bigr)-  \lambda_2^{\!\top} \bigl(\mathbf{p}_t(\mathbf{x}, \mathbf{y})\bigr)^{\odot 2}
\end{equation}
where \((\cdot)^{\odot 2}\) denotes elementwise square. 

We design IBL in three architectural forms. Similar to BL, the \emph{IBL(Single)} directly uses \(\mathcal{B}^\mathrm{id}(\mathbf{x}, \mathbf{y})\) as the compositional utility function. The \emph{IBL(Shallow)} and \emph{IBL(Deep)} variants are defined recursively as
\begin{equation}
\mathrm{IBL}(\mathbf{x}, \mathbf{y}) :=
\mathbf{W}^{\circ}_L \cdot \mathbb{B}^\mathrm{id}_L\big(
\cdots \mathbb{B}^\mathrm{id}_2\big(
\mathbb{B}^\mathrm{id}_1(\mathbf{x}, \mathbf{y})
\big) \cdots
\big), \quad L \ge 1
\end{equation}
where \(\mathbb{B}^\mathrm{id}_\ell\) stacks multiple parallel blocks \(\mathcal{B}^\mathrm{id}_{\ell,i}(\mathbf{x}, \mathbf{y})\), and \(\mathbf{W}^{\circ}_L\) is a learnable affine transformation without bias. All other design choices follow the BL setting.

\paragraph{Theoretical Foundation.}\label{sec:IBLtheory} IBL admits favorable properties for ground-truth identification. We begin by establishing identifiability, which is fundamental for statistical inference. We first state our key assumption (see Assumption~\ref{ass:global-atom} for details).

\begin{assumption}
Let $\bar\Psi$ denote the quotient space of atomic parameters. We assume that the map $\bar\Psi \to \mathbb{R}^{\mathcal X\times\mathcal Y}$, $\bar\psi \mapsto g_{\bar\psi}$, is \textbf{injective}, and that any finite set of distinct atoms is \textbf{linearly independent}. We further restrict attention to \textbf{minimal representations} with no duplicate atoms and a fixed canonical ordering.
\end{assumption}

\begin{theorem}[Identifiability of IBL]
\label{thm:ibl-all-id}
Under Assumption~\ref{ass:global-atom}, the architectures IBL(Single), IBL(Shallow), and IBL(Deep) are identifiable in the parameter quotient space \(\bar\Theta\).
\end{theorem}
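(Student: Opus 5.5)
The plan is to reduce identifiability of the Gibbs model to identifiability of the compositional utility up to the natural gauge, and then peel the architecture layer by layer using Assumption~\ref{ass:global-atom}.

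\emph{Step 1: from distributions to utilities.} Suppose two parameters $\Theta,\Theta'$ induce the same conditional law, $p_\tau(\cdot\mid\mathbf{x};\Theta)=p_\tau(\cdot\mid\mathbf{x};\Theta')$ for all $\mathbf{x}\in\mathcal{X}$. Taking logarithms in~\eqref{eq:gibbs} gives
\begin{equation*}
\mathrm{IBL}_{\Theta}(\mathbf{x},\mathbf{y})-\mathrm{IBL}_{\Theta'}(\mathbf{x},\mathbf{y})=\tau\bigl(\log Z_\tau(\mathbf{x};\Theta)-\log Z_\tau(\mathbf{x};\Theta')\bigr)=:c(\mathbf{x}).
\end{equation*}
So the utilities agree up to an $\mathbf{x}$-only shift. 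This shift is exactly the kind of ambiguity the quotient $\bar\Theta$ is meant to absorb. Two consistent treatments are available: either atoms depending on $\mathbf{x}$ alone are excluded from the canonical form, or $c(\mathbf{x})$ is itself regarded as an atom. In the second case, linear independence forces its coefficient to match on both sides, and it is fixed by the canonical normalization.

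\emph{Step 2: the single-block case.} An IBL(Single) utility $\mathcal{B}^{\mathrm{id}}$ is a finite linear combination
\begin{equation*}
\sum_k \lambda_{0,k}\,\phi^{\mathrm{id}}(U_k)-\sum_j\lambda_{1,j}\,\rho^{\mathrm{id}}(\mathcal{C}_j)-\sum_i\lambda_{2,i}\,\psi^{\mathrm{id}}(\mathcal{T}_i).
\end{equation*}
Each term is an atom $g_{\bar\psi}$, whose atomic parameter records the component type together with the polynomial coefficients. Equality of the two utilities (modulo Step 1) is then an identity $\sum_a \alpha_a g_{\bar\psi_a}=\sum_b \alpha'_b g_{\bar\psi'_b}$. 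I move everything to one side and merge atoms with equal $\bar\psi$. Linear independence of finite sets of distinct atoms then forces every coefficient to vanish. Under minimality (no duplicates, nonzero weights), this yields a bijection between the two atom sets with equal weights. Injectivity of $\bar\psi\mapsto g_{\bar\psi}$ upgrades equal atoms to equal atomic parameters in $\bar\Psi$. Finally, the canonical ordering fixes the bijection as the identity, so $\bar\Theta=\bar\Theta'$.

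\emph{Step 3: induction over depth for IBL(Shallow) and IBL(Deep).} I would treat the top layer $\mathbf{W}^{\circ}_L\cdot\mathbb{B}^{\mathrm{id}}_L(\cdot)$ as a bias-free linear combination of top-level atoms. Each such atom is a block $\mathcal{B}^{\mathrm{id}}_{L,i}$ composed with the lower network, and its atomic parameter encodes the whole subnetwork beneath it. Applying Step 2 at the top level identifies $\mathbf{W}^{\circ}_L$ and, via injectivity, each top-level atom's parameter class. That class contains the block weights and the inner map $\mathbb{B}^{\mathrm{id}}_{L-1}\circ\cdots$. I then recurse, reapplying the assumption to the atoms feeding each identified block, down to layer~1; formally this is an induction on $L$. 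The strict monotonicity and $C^1$ conditions on $\phi^{\mathrm{id}},\rho^{\mathrm{id}},\psi^{\mathrm{id}}$ are what make the inner arguments recoverable. In particular, a strictly increasing $\phi^{\mathrm{id}}$ lets us invert $\phi^{\mathrm{id}}(U)$ to $U$, and $\psi^{\mathrm{id}}$ determines $\mathcal{T}$ up to sign, which is a symmetry modded out in $\bar\Psi$.

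\emph{Main obstacle.} The delicate point is the recursion. I must make sure the atom decomposition at layer $\ell$ is itself minimal, so that linear independence applies. Hidden cancellations could occur: two distinct lower subnetworks might produce the same block output, or parallel blocks might be permuted or duplicated. My approach is to define the canonical representative in $\bar\Theta$ to quotient by within-layer permutations, sign flips inside $\psi^{\mathrm{id}}$, and the $\mathbf{x}$-only shift. I then argue that any duplicate block after a merge contradicts minimality. I would first verify this carefully for $L=2$ and then state the general inductive step.
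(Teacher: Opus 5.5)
Your Steps~2 and~3 essentially reproduce the paper's proof. A single block is a linear combination of the atoms $\tanh(p_{u,i})$, $\operatorname{softplus}(p_{c,i})$, $(p_{t,i})^2$, and Lemma~\ref{lem:lin-comb-id}, together with injectivity, matches coefficient--atom pairs. For IBL(Shallow) the first-layer blocks are the atoms and $\mathbf W^\circ_1$ supplies the coefficients. For IBL(Deep) each top-layer block, composed with everything beneath it, is an atom whose class in $\bar\Psi$ (implicitly) carries the whole subnetwork, which reduces Deep to Shallow.

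The genuine problem is Step~1. The theorem is identifiability of the map $\theta\mapsto\mathrm{IBL}_\theta$: the hypothesis is $\mathrm{IBL}_\theta\equiv\mathrm{IBL}_{\theta'}$ on $\mathcal X\times\mathcal Y$, not equality of the induced Gibbs laws. Your reduction from laws to utilities does not hold, for three reasons.
\begin{itemize}
\item The quotient $\bar\Theta$ only identifies sign flips of the $T$-polynomials, so it does not absorb an $\mathbf x$-only shift.
\item Assumption~\ref{ass:global-atom} gives linear independence among atoms only. It does not rule out a nonzero $c(\mathbf x)$ lying in their span, so excluding $\mathbf x$-only atoms is not enough.
\item $c$ belongs to no atom class, so treating it as an atom is an extra assumption, and the paper has no normalization that fixes it.
\end{itemize}
In fact, distribution-level identifiability in $\bar\Theta$ fails in the paper's own setting. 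In the vector-output softmax classifier, replacing $\mathbf W^\circ$ by $\mathbf W^\circ+\mathbf{1}v^\top$ adds the same $\mathbf x$-dependent quantity to every class score. This leaves $p(\cdot\mid\mathbf x)$ unchanged but changes the class in $\bar\Theta$. This is precisely why Theorem~\ref{thm:loss-id}(ii) passes to the coarser quotient $\widetilde\Theta$. Drop Step~1 and start from equality of utilities; passing from distributions or losses to utilities is the job of the separate loss-identifiability theorem.

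Two smaller remarks on Step~3.
\begin{itemize}
\item The layer-by-layer recursion is unnecessary. You already let each top-layer atom's class in $\bar\Psi$ encode the entire subnetwork beneath it, so the injectivity clause of Assumption~\ref{ass:global-atom} identifies all lower-layer parameters at once; the paper does no induction.
\item The hidden cancellations and duplicates you flag as the main obstacle are not something the proof overcomes; minimality and injectivity simply postulate them away. Strict monotonicity and the $C^1$ condition also play no role in this proof. You never observe $\phi^{\mathrm{id}}(U)$ in isolation, only a weighted sum of heads, so inverting $\phi^{\mathrm{id}}$ would first require separating that sum by another appeal to linear independence of the inner atoms.
\end{itemize}
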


\begin{theorem}[Loss Identifiability of IBL]
\label{thm:loss-id}
The IBL model is parameterized by \(\theta \in \Theta\). Suppose \(\Theta\) is compact. Then under Assumption~\ref{ass:global-atom}, the population loss \(\mathcal{L}\) defined in~\eqref{lossfunc} satisfies:
\begin{itemize}[itemsep=0.1em]
    \item If \( \gamma_c > 0 \), it admits a \textbf{unique} minimizer in the quotient space \(\bar\Theta\);
    \item If \( \gamma_c = 0 \), it admits a \textbf{unique} minimizer in the scale-invariant quotient space \(\widetilde\Theta\).
\end{itemize}
\end{theorem}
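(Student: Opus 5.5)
The strategy is to reduce loss identifiability to model identifiability (Theorem~\ref{thm:ibl-all-id}) by a standard M-estimation argument. I assume the usual well-specification condition implicit in the statement: the data are generated by $p_\tau(\cdot\mid\mathbf x;\theta_0)$ for some $\theta_0\in\Theta$. The argument has four steps: existence of a minimizer, minimality of the true parameter, a characterization of all minimizers, and finally an appeal to the identifiability theorem.

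\textbf{Existence.} The map $\theta\mapsto\mathcal L(\theta)$ is continuous on the compact set $\Theta$. This holds because $\tanh$, softplus and the square are $C^1$, the polynomial maps are continuous, and $\mathcal X\times\mathcal Y$ is compact. Consequently the partition function $Z_\tau$ is bounded away from $0$ and $\infty$ uniformly in $\theta$, and dominated convergence passes limits through the expectations. Hence a minimizer exists.

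\textbf{Minimality of $\theta_0$.} I treat the two terms of~\eqref{lossfunc} separately.
\begin{itemize}
\item For the discrete term, $\mathbb E[-\log p_\tau(\mathbf y^{\mathrm{disc}}\mid\mathbf x;\theta)]$ equals the conditional entropy plus $\mathbb E_{\mathbf x}\,\mathrm{KL}(p_0\|p_\theta)$. It is therefore minimized exactly when the conditional discrete laws agree $\mathbf x$-a.s.
\item For the continuous term, I invoke the Vincent identity. It rewrites the denoising score matching objective as explicit score matching against the noised density $\tilde p_\sigma$, plus a constant independent of $\theta$. This term is minimized iff $\nabla_{\mathbf y}\log p_\tau(\cdot\mid\mathbf x;\theta)=\nabla_{\mathbf y}\log \tilde p_{0}(\cdot\mid\mathbf x)$ a.e. Strictly speaking one compares $p_\theta$ convolved with the noise kernel to the noised truth. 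I would take the model score to be evaluated on its own noised version, or work in the $\sigma\to0$ limit, as the paper's setup suggests.
\end{itemize}
Both terms are thus minimized at $\theta_0$.

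\textbf{Characterizing all minimizers.} Any minimizer $\theta$ must achieve both lower bounds simultaneously. There are two cases.
\begin{itemize}
\item \emph{Case $\gamma_c>0$.} Score equality on the connected continuous domain means $\mathrm{IBL}_\theta(\mathbf x,\mathbf y)/\tau$ and $\mathrm{IBL}_{\theta_0}(\mathbf x,\mathbf y)/\tau$ differ only by a function $c(\mathbf x)$ in the continuous coordinates. Combining this with the discrete match (when $\gamma_d>0$, or otherwise from the Gibbs normalization) gives $\mathrm{IBL}_\theta=\mathrm{IBL}_{\theta_0}+\tau c(\mathbf x)$. Next I use that $\mathbf W^\circ_L$ has no bias and that atoms are linearly independent. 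A function of $\mathbf x$ alone must then either be absorbed into, or excluded by, the atom dictionary under Assumption~\ref{ass:global-atom}; I would argue it is excluded, forcing $c\equiv0$. Then $\mathrm{IBL}_\theta=\mathrm{IBL}_{\theta_0}$ pointwise, and Theorem~\ref{thm:ibl-all-id} gives $\bar\theta=\bar\theta_0$.
\item \emph{Case $\gamma_c=0$.} Only the conditional distributions of the discrete response are matched. These are invariant under the joint rescaling $(\mathrm{IBL},\tau)\mapsto(\alpha\,\mathrm{IBL},\alpha\tau)$ only up to the fixed $\tau$, i.e.\ a positive rescaling of $\mathbf W^\circ_L$ together with additive $\mathbf x$-shifts. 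Equality of softmax probabilities gives $\mathrm{IBL}_\theta=\mathrm{IBL}_{\theta_0}+c(\mathbf x)$. With finitely many discrete values, the atoms may only be pinned down up to the scale equivalence defining $\widetilde\Theta$. I would apply Theorem~\ref{thm:ibl-all-id} after quotienting by this scaling, giving uniqueness in $\widetilde\Theta$.
\end{itemize}

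\textbf{Main obstacle.} I expect the hardest step to be eliminating the additive $c(\mathbf x)$ term, i.e.\ showing that distributional equality forces equality of the utility function itself rather than equality up to an $\mathbf x$-dependent shift. I would first try to use the bias-free $\mathbf W^\circ_L$ together with linear independence of atoms that include the constant function. A second difficulty is justifying exactly which scale symmetry survives when $\gamma_c=0$, which requires reading off the precise definition of $\widetilde\Theta$ from the appendix version of Assumption~\ref{ass:global-atom}.
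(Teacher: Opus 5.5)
Your outline matches the paper's proof. Existence follows from compactness and continuity. Every minimizer reproduces the optimal conditional law and score, and score equality leaves $\mathrm{IBL}_\theta=\mathrm{IBL}_{\theta_0}+\tau c(\mathbf x)$. Theorem~\ref{thm:ibl-all-id} then finishes.

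\textbf{The main gap.} You never eliminate $c(\mathbf x)$ (the step you yourself flag), and the tools you propose cannot do it.
\begin{itemize}
\item Assumption~\ref{ass:global-atom} gives linear independence only \emph{among} atoms. It is silent on whether a nonzero combination of distinct atoms can be a function of $\mathbf x$ alone. That is exactly what $\mathrm{IBL}_\theta-\mathrm{IBL}_{\theta_0}=\tau c(\mathbf x)$ says once you collect atoms on both sides.
\item Bias-freeness of $\mathbf W^\circ_L$ removes only a literal readout constant. It does not remove $\mathbf x$-only pieces emitted by the blocks themselves. Examples are a head whose polynomial involves only $\mathbf x$, or a softplus head with identically zero polynomial, which contributes $\operatorname{softplus}(0)=\log 2$.
\item Putting the constant function inside the dictionary does not help: its coefficient would then be exactly the direction the Gibbs law cannot see. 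What you need is every function of $\mathbf x$ alone lying outside the span of the atoms.
\end{itemize}
The paper closes this step with a separate standing hypothesis from its setup, not with Assumption~\ref{ass:global-atom}. That hypothesis is that no polynomial feature map contains a monomial independent of $\mathbf y$. The proof reads it as ``IBL contains no $\mathbf y$-independent terms'', and this is what turns $\nabla_{\mathbf y}(\mathrm{IBL}_\theta-\mathrm{IBL}_{\theta^\bullet})=0$ into $\mathrm{IBL}_\theta=\mathrm{IBL}_{\theta^\bullet}$. You need this ingredient in the form the step actually consumes: no nonzero linear combination of distinct atoms is independent of $\mathbf y$.

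\textbf{Well-specification and DSM.} Your well-specification hypothesis is not in the statement. The paper's proof leans on something of the same kind: it fixes a minimizer $\theta^\bullet$ and asserts, without argument, that every minimizer shares its conditional law and score. Your attempt to derive this from the truth $\theta_0$ breaks on the DSM term at fixed $\sigma>0$, as your own caveat half-admits. By Vincent's identity, the DSM risk equals, up to a $\theta$-free constant, the squared distance from $\nabla_{\mathbf y}\log p_\theta$ to the score of the \emph{noised} law $p_{\theta_0}*\mathcal N(0,\sigma^2 I)$. That target is generally not a model score. So $\theta_0$ need not minimize the stated loss, and ``both lower bounds attained simultaneously'' fails. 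The workarounds you mention change either the loss or the theorem.

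\textbf{The case $\gamma_c=0$.} At fixed $\tau$ the surviving symmetry is the additive shift $\mathrm{IBL}\mapsto\mathrm{IBL}+c(\mathbf x)$, not positive rescaling, because a softmax at fixed temperature is not scale invariant. Passing to $\widetilde\Theta$, which quotients only positive scalings, therefore does nothing to $c(\mathbf x)$. This case reduces to the same elimination problem, and with finitely many labels it can genuinely fail. In vector-output mode the $m$ logits are $\mathbf W^\circ_L\in\mathbb R^{m\times d_L}$ applied to a common block vector $\mathbf b$. Replacing $\mathbf W^\circ_L$ by $\mathbf W^\circ_L+\mathbf 1 v^\top$ adds $v^\top\mathbf b$ to every logit and leaves the cross-entropy unchanged, yet it is generically not a rescaling. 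The paper's proof makes the same rescaling-invariance claim, so following it will not repair this step. It needs an extra hypothesis or a quotient by shifts.
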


Theorems~\ref{thm:ibl-all-id} and~\ref{thm:loss-id} together establish the identifiability of IBL. Theorem~\ref{thm:ibl-all-id} shows that if two IBL models of the same structure induce the same compositional utility, then their parameters coincide up to an equivalence class. Theorem~\ref{thm:loss-id} further extends this result to loss-based identifiability. These results jointly imply that IBL admits a unique parameter estimate up to an equivalence class, and thus yields intrinsic interpretability that is unique up to the same class.

Building on identifiability, Theorem~\ref{thm:ibl-consistency} establishes the statistical consistency of IBL: under compactness of the parameter space, the learned parameters converge in probability to a minimizer of the population loss as the sample size \(n \to \infty\). If the model is correctly specified, the estimator further \textit{converges to the ground-truth parameter, recovering the true underlying model}, thereby endowing IBL with the potential to recover the ground-truth model.

\begin{theorem}[Consistency of IBL]
\label{thm:ibl-consistency}
Let \(\Xi\) denote the relevant parameter quotient space: \(\Xi = \bar\Theta\) if \(\gamma_c > 0\), and \(\Xi = \widetilde\Theta\) if \(\gamma_c = 0\). Let \(\hat\theta_n \in \arg\min_{\theta \in \Theta} \mathcal{M}_n(\theta)\) denote the empirical minimizer, and let \(\theta^\bullet \in \arg\min_{\theta \in \Theta} \mathcal{M}(\theta)\) denote the population minimizer. Then under the conditions of Theorem~\ref{app-thm:inclass-consistency},
\[
\hat\theta_n \;\xrightarrow{p}\; \theta^\bullet \quad \text{in } \Xi,
\qquad
\mathcal{M}(\hat\theta_n) \;\xrightarrow{p}\; \mathcal{M}(\theta^\bullet).
\]
Moreover, if the model is correctly specified (i.e., the data distribution is realized by some \(\theta^\star \in \Theta\)), then \(\theta^\bullet = \theta^\star\) in \(\Xi\), and thus \(\hat\theta_n \xrightarrow{p} \theta^\star\).
\end{theorem}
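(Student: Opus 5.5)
The proof is a standard argmin-consistency argument for M-estimators (in the style of van der Vaart, Thm.~5.7), carried out on the quotient space $\Xi$, where Theorem~\ref{thm:loss-id} guarantees a unique population minimizer. Here $\mathcal{M}(\theta)=\mathcal{L}(\theta)$ is the population loss in~\eqref{lossfunc}, and $\mathcal{M}_n$ is its empirical counterpart. Because identifiability holds only up to equivalence, I would work with the continuous quotient map $\pi:\Theta\to\Xi$. I would also note that $\mathcal{M}$ is constant on equivalence classes, since equivalent parameters induce the same compositional utility (up to scale when $\gamma_c=0$, where the discrete cross-entropy term is scale-invariant in the relevant sense). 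So $\mathcal{M}$ descends to a function $\bar{\mathcal{M}}$ on $\Xi$. I would equip $\Xi$ with the quotient metric $d_\Xi(\bar\theta,\bar\theta')=\inf\{\|\theta-\theta'\|:\pi\theta=\bar\theta,\pi\theta'=\bar\theta'\}$. By compactness of $\Theta$ and closedness of the equivalence classes, $\Xi$ is then a compact metric space.

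\textbf{Step 1 (uniform law of large numbers).} Under the conditions of Theorem~\ref{app-thm:inclass-consistency}, I would show that $\sup_{\theta\in\Theta}|\mathcal{M}_n(\theta)-\mathcal{M}(\theta)|\xrightarrow{p}0$, using the following facts.
\begin{itemize}
\item Every IBL component ($\tanh$, softplus, square, polynomial maps, affine layers) is $C^1$ in $\theta$ and in $(\mathbf{x},\mathbf{y})$.
\item On the compact $\Theta$ and compact $\mathcal{X}$, the log-partition $\log Z_\tau$ and the score $\nabla_{\mathbf{y}}\mathrm{BL}_\theta/\tau$ are continuous in $\theta$.
\item The per-sample loss therefore admits an integrable envelope: for the score-matching term this requires a second moment of the Gaussian noise and of $\mathbf{y}$, and for the cross-entropy term the boundedness of $\mathrm{BL}$ on compacts.
\end{itemize}
These facts make the loss class Glivenko--Cantelli, for instance via a bracketing argument using Lipschitz continuity in $\theta$ with an integrable Lipschitz constant.

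\textbf{Step 2 (well-separated minimum).} $\mathcal{M}$ is continuous on $\Theta$, so $\bar{\mathcal{M}}$ is continuous on $\Xi$. By Theorem~\ref{thm:loss-id}, $\bar{\mathcal{M}}$ has a unique minimizer $\pi\theta^\bullet$. By compactness of $\Xi$, for every $\epsilon>0$,
\[
\inf_{d_\Xi(\pi\theta,\pi\theta^\bullet)\ge\epsilon}\mathcal{M}(\theta)>\mathcal{M}(\theta^\bullet),
\]
since a continuous function on the compact set $\{d_\Xi\ge\epsilon\}$ attains its infimum there, and that infimum cannot equal the unique minimum.

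\textbf{Step 3 (argmin argument).} Since $\hat\theta_n$ minimizes $\mathcal{M}_n$, the uniform convergence of Step~1 gives
\[
\mathcal{M}(\hat\theta_n)\le\mathcal{M}_n(\hat\theta_n)+o_p(1)\le\mathcal{M}_n(\theta^\bullet)+o_p(1)=\mathcal{M}(\theta^\bullet)+o_p(1).
\]
This proves the second claim, $\mathcal{M}(\hat\theta_n)\xrightarrow{p}\mathcal{M}(\theta^\bullet)$. Combined with the separation gap of Step~2, it implies $P\bigl(d_\Xi(\pi\hat\theta_n,\pi\theta^\bullet)\ge\epsilon\bigr)\to0$, which is the first claim.

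\textbf{Step 4 (correct specification).} Suppose the data come from $p_\tau(\cdot\mid\mathbf{x};\theta^\star)$. Then the cross-entropy term is minimized at $\theta^\star$, by Gibbs' inequality: the excess risk is an expected KL divergence. The denoising score-matching term is also minimized at $\theta^\star$, because it equals the explicit Fisher-divergence criterion plus a constant (Vincent's identity) and is therefore minimized by the true noisy score. Hence $\theta^\star\in\arg\min\mathcal{M}$, and uniqueness in $\Xi$ forces $\pi\theta^\star=\pi\theta^\bullet$. One caveat is that denoising score matching targets the score of the noise-smoothed conditional. For the argument to go through, I would either appeal to the setup of Theorem~\ref{app-thm:inclass-consistency}, which presumably defines correct specification relative to the population loss, or take $\sigma\to0$.

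I expect the main obstacle to be Step~1. The key difficulty is controlling $\log Z_\tau(\mathbf{x};\theta)$ and the score uniformly in $\theta$ when $\mathcal{Y}$ is unbounded. Unbounded $\mathcal{Y}$ is possible because $\mathbf{y}^{\mathrm{cont}}\in\mathbb{R}^{m_c}$, and then integrability of $\exp(\mathrm{BL}/\tau)$ must hold uniformly in $\theta$. My first attempt would restrict to compact $\mathcal{Y}$ (or rely on the assumptions of Theorem~\ref{app-thm:inclass-consistency}). On a compact $\mathcal{Y}$, $\mathrm{BL}$ is bounded uniformly over $\Theta\times\mathcal{X}\times\mathcal{Y}$, so $\log Z$ is bounded and continuous and the envelope condition is immediate.
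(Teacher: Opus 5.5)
This is essentially the paper's proof. The paper also factors $\mathcal M$ and $\mathcal M_n$ through $\pi:\Theta\to\Xi$, takes the unique minimizer in $\Xi$ from Theorem~\ref{thm:loss-id}, and finishes by argmin consistency. The differences are minor. It cites Theorem~\ref{thm:mest} (Newey--McFadden) where your Steps~2--3 reprove that result, and it obtains $\mathcal M(\hat\theta_n)\xrightarrow{p}\mathcal M(\theta^\bullet)$ afterwards from continuity. It also simply assumes your Step~1 as hypothesis~3 of Theorem~\ref{app-thm:inclass-consistency}, and the appendix setup already takes $\mathcal Y$ compact. So no Glivenko--Cantelli argument is needed, and the obstacle you anticipated does not arise.

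Three details need correction or emphasis.

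\emph{The quotient metric.} Your $d_\Xi$, the infimum of $\|\theta-\theta'\|$ over pairs of representatives, is not a metric for a general equivalence relation. Closed classes do not give the triangle inequality: take the classes $\{0\}$, $\{1,9\}$, $\{10\}$ in $[0,10]$. It is a metric on $\bar\Theta$, whose classes are orbits of the finite group of sign flips of the $T$-head polynomials, provided $\Theta$ is invariant under them. You have nothing comparable for $\widetilde\Theta$.

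The cleanest repair bypasses the quotient entirely. Set $A:=\arg\min_{\Theta}\mathcal M$, which is nonempty and compact. Compactness and continuity give $\inf\{\mathcal M(\theta):\mathrm{dist}(\theta,A)\ge\epsilon\}>\min_\Theta\mathcal M$. Your Step~3 then yields $\mathrm{dist}(\hat\theta_n,A)\xrightarrow{p}0$, and Theorem~\ref{thm:loss-id} is used only to say that $A$ lies in a single class.

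\emph{Invariance when $\gamma_c=0$.} The repair above also removes any reliance on $\mathcal M$ being constant on scale classes. You assert this for $\gamma_c=0$, as the paper does, but it fails as stated. Softmax cross-entropy at fixed temperature is invariant under $\mathbf y$-independent shifts of the utility, not under positive rescalings.

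\emph{Correct specification with DSM.} Your caveat in Step~4 is correct, and the paper does not address it; its proof only invokes ``strict propriety of the CE/DSM terms.'' For fixed $\sigma>0$, the DSM risk is minimized by the score of the noise-smoothed conditional $p_{\theta^\star}(\cdot\mid\mathbf x)*\mathcal N(0,\sigma^2 I)$, not by the score of $p_{\theta^\star}$. So for $\gamma_c>0$, neither your argument nor the paper's yields $\theta^\bullet=\theta^\star$ in $\Xi$ without an extra assumption, such as correct specification relative to the smoothed law or a vanishing-noise limit. For $\gamma_c=0$, your Gibbs-inequality argument together with Theorem~\ref{thm:loss-id} suffices.
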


Correct specification is a strong and often unrealistic assumption. Fortunately, the IBL framework—like BL—also enjoys a \textbf{universal approximation guarantee} (Theorem~\ref{app:ua-ibl}). Building on this result, we further establish the \textbf{universal consistency} of IBL: \textit{even under misspecification, IBL is capable of recovering the ground-truth model} with sufficiently large sample sizes.

\begin{theorem}[Universal Consistency of IBL]
\label{thm:uniform-consistency}
Under the conditions of Theorem~\ref{app:uniform-consistency}, for any admissible data-generating distribution \(p^\dagger\) satisfying the regularity assumptions of Theorem~\ref{app:ua-ibl}, the IBL posterior sequence \(\{ p_{\hat\theta_n} \}\) satisfies
\[
\sup_{x \in \mathcal{X}} \mathrm{KL}\!\big(p^\dagger(\cdot \mid \mathbf{x})\,\|\,p_{\hat\theta_n}(\cdot \mid \mathbf{x})\big) \xrightarrow{p} 0,
\]
i.e., the learned conditional distributions \(\{p_{\hat\theta_n}\}\) converge in KL to \(p^\dagger\) uniformly over \(\mathbf{x}\).
\end{theorem}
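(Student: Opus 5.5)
The plan is the standard sieve / estimation–approximation decomposition. Fix an admissible $p^\dagger$ and $\varepsilon>0$. By the universal approximation guarantee for IBL (Theorem~\ref{app:ua-ibl}), there is a finite IBL architecture and a parameter $\theta_\varepsilon$ with
\[
\sup_{\mathbf{x}}\mathrm{KL}\bigl(p^\dagger(\cdot\mid\mathbf{x})\,\|\,p_{\theta_\varepsilon}(\cdot\mid\mathbf{x})\bigr)<\varepsilon/2 .
\]
The conditions of Theorem~\ref{app:uniform-consistency} presumably let the architecture capacity and the compact parameter set $\Theta_n$ grow with $n$, so that $\theta_\varepsilon\in\Theta_n$ for all large $n$. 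We then split
\[
\mathrm{KL}(p^\dagger\|p_{\hat\theta_n}) = \bigl[\mathrm{KL}(p^\dagger\|p_{\hat\theta_n})-\mathrm{KL}(p^\dagger\|p_{\theta_n^\bullet})\bigr] + \mathrm{KL}(p^\dagger\|p_{\theta_n^\bullet}),
\]
where $\theta_n^\bullet$ is the population minimizer over $\Theta_n$. The approximation term is at most $\varepsilon/2$ once $\theta_\varepsilon\in\Theta_n$. This holds for the likelihood (cross-entropy) part because minimizing expected negative log-likelihood is the same as minimizing expected KL up to a constant. For the score-matching part we use that, on compact $\mathcal{Y}$ with bounded smooth energies, score mismatch controls KL via a Fisher-divergence-to-KL bound uniform in $\mathbf{x}$.

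For the estimation term, the first step is a uniform law of large numbers over $\Theta_n$. IBL blocks are compositions of $\tanh$, softplus and squares of polynomials, so they are Lipschitz in $\theta$ on compact sets with envelopes bounded in terms of the capacity. A bracketing or covering-number bound then gives
\[
\sup_{\Theta_n}|\mathcal{M}_n-\mathcal{M}|\xrightarrow{p}0,
\]
provided the capacity grows slowly enough relative to $n$, which is the sieve rate condition assumed. From this, $\mathcal{M}(\hat\theta_n)-\mathcal{M}(\theta_n^\bullet)\le 2\sup_{\Theta_n}|\mathcal{M}_n-\mathcal{M}|\to 0$ in probability. This is the same argument as in Theorem~\ref{thm:ibl-consistency}, but applied at the level of loss values rather than parameters, so identifiability is not even needed here.

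The main obstacle is upgrading the averaged excess loss, which is an expectation over $\mathbf{x}$, to a statement that is $\sup_{\mathbf{x}}$ and in KL rather than in loss. The approach is in three steps:
\begin{itemize}
\item[(i)] Use $\log Z_\tau$ and energies being bounded and equicontinuous in $\mathbf{x}$ on the sieve: continuity of the polynomial maps on compact $\mathcal{X}\times\mathcal{Y}$ with bounded parameters.
\item[(ii)] Use positivity of the marginal density of $\mathbf{x}$ on $\mathcal{X}$, assumed among the regularity conditions, so that an $L^1(\mathbf{x})$-small excess KL combined with equicontinuity forces sup-norm smallness. This is an Arzelà–Ascoli-type argument.
\item[(iii)] Bound $\mathrm{KL}$ by $\|\log p^\dagger-\log p_\theta\|_\infty$, which is finite since $p^\dagger>0$ is continuous on a compact set.
\end{itemize}
If equicontinuity fails as capacity grows, the fallback is to build a uniform-in-$\mathbf{x}$ requirement into the sieve growth conditions. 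Finally, since $\varepsilon$ was arbitrary, combining the two terms yields $\sup_{\mathbf{x}}\mathrm{KL}\to 0$ in probability.
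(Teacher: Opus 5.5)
Your decomposition does deliver the averaged statement $\mathbb E_X\,\mathrm{KL}\bigl(p^\dagger(\cdot\mid X)\,\|\,p_{\hat\theta_n}(\cdot\mid X)\bigr)\xrightarrow{p}0$ in the cross-entropy case. Two caveats apply even there. Since $\theta_n^\bullet$ minimizes the $P_X$-averaged risk, your approximation term is bounded by $\varepsilon/2$ only on average. And you need a ULLN over growing sieves with a rate condition that is not among the hypotheses of Theorem~\ref{app:uniform-consistency}.

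The real gap is in the step you flag as the main obstacle. Step (ii) needs the functions $g_\theta(\mathbf x):=\mathrm{KL}\bigl(p^\dagger(\cdot\mid\mathbf x)\,\|\,p_\theta(\cdot\mid\mathbf x)\bigr)$ to be equicontinuous uniformly over $\bigcup_n\Theta_n$. Step (i) gives equicontinuity only on each fixed compact $\Theta_n$. As $c_n\uparrow\infty$ the parameter bounds grow and, in general, the moduli of continuity degenerate. A high-capacity model can then be badly wrong on an $\mathbf x$-ball of vanishing $P_X$-mass while keeping the average KL small. A quantitative repair would need the averaged error to decay faster than the $P_X$-mass of a ball whose radius is set by the degrading modulus. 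That requires explicit rates for both approximation and estimation error, and the purely qualitative Theorem~\ref{app:ua-ibl} supplies none. Your ``fallback'' is therefore an extra hypothesis, not a proof step.

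The DSM branch has a separate defect. The DSM population minimizer is selected by a Fisher-type divergence, and sup-norm approximation of the merely continuous $\log p^\dagger$ gives no control of scores. Moreover, at a fixed noise level $\sigma$, DSM targets the score of the Gaussian-smoothed $p^\dagger$. So no Fisher-to-KL inequality can drive the KL to $p^\dagger$ itself to zero. The paper's proof also only argues the case $\gamma_d>0$.

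The missing idea is the one the paper's hypotheses are built around. Assumption~2 of Theorem~\ref{app:uniform-consistency} (relative compactness of the ERM sequence) is used to place subsequential limits in a single compact sieve $\Theta_N$. Assumption~1 gives continuity of $F(\theta)=\sup_{\mathbf x}g_\theta(\mathbf x)$ on $\Theta_N$. The paper's argument then runs as follows:
\begin{enumerate}
\item Pass to a subsequence $\hat\theta_{n_k}\to\theta_\infty\in\Theta_N$.
\item Use the ERM property and the LLN to get $\mathbb E_X g_{\hat\theta_{n_k}}(X)\to0$, hence $\mathbb E_X g_{\theta_\infty}(X)=0$. Strictly, this needs comparators from larger sieves, handled by the pointwise LLN; a comparator in $\Theta_N$ alone only yields $\limsup\le\delta_N$.
\item Apply exactly your step (ii), continuity in $\mathbf x$ plus full support of $P_X$, but to the single limit point $\theta_\infty$. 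No uniformity across capacities is needed there, and one obtains $F(\theta_\infty)=0$.
\item Continuity of $F$ on $\Theta_N$ gives $F(\hat\theta_{n_k})\xrightarrow{p}0$, and the subsequence principle finishes. A ULLN is needed only on the fixed $\Theta_N$.
\end{enumerate}

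The price of this route is that the compactness hypothesis is very strong. The argument forces $p^\dagger=p_{\theta_\infty}$ for some $\theta_\infty$ in a finite sieve, so in effect it covers only the case where $p^\dagger$ is realized at finite capacity. Your growing-sieve route, once equipped with explicit uniform-modulus and rate conditions, would be the one that genuinely handles misspecification. As written, however, it does not establish the statement under its stated conditions.
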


Specifically, this result implies that, even under model misspecification, the learned predictive distribution \(p_{\hat\theta_n}\), parameterized by the IBL model, converges uniformly in KL to the true conditional distribution \(p^\dagger\), provided that the capacity of the IBL architecture grows with the sample size \(n\).

We also establish the \textbf{asymptotic normality} of IBL estimators (Theorem~\ref{thm:ibl-asy-min}), showing that the parameter estimates converge in distribution to a normal law as the sample size increases. Furthermore, under additional regularity conditions, the asymptotic variance attains the \textbf{efficient information bound} (Theorem~\ref{thm:ibl-efficiency}), demonstrating the statistical optimality of IBL. 

Formal statements and proofs of all theorems in this part are deferred to Appendix~\ref{proof:IBL}.

\section{Experiments}
In this section, we conduct four groups of experiments to systematically evaluate the capabilities of BL. Due to space constraints, details are provided in Appendix~\ref{app:experiments}.

\subsection{Standard Prediction Tasks}\label{expr:sta-pred}

\begin{figure}[htbp] 
    \centering
    \includegraphics[width=\textwidth]{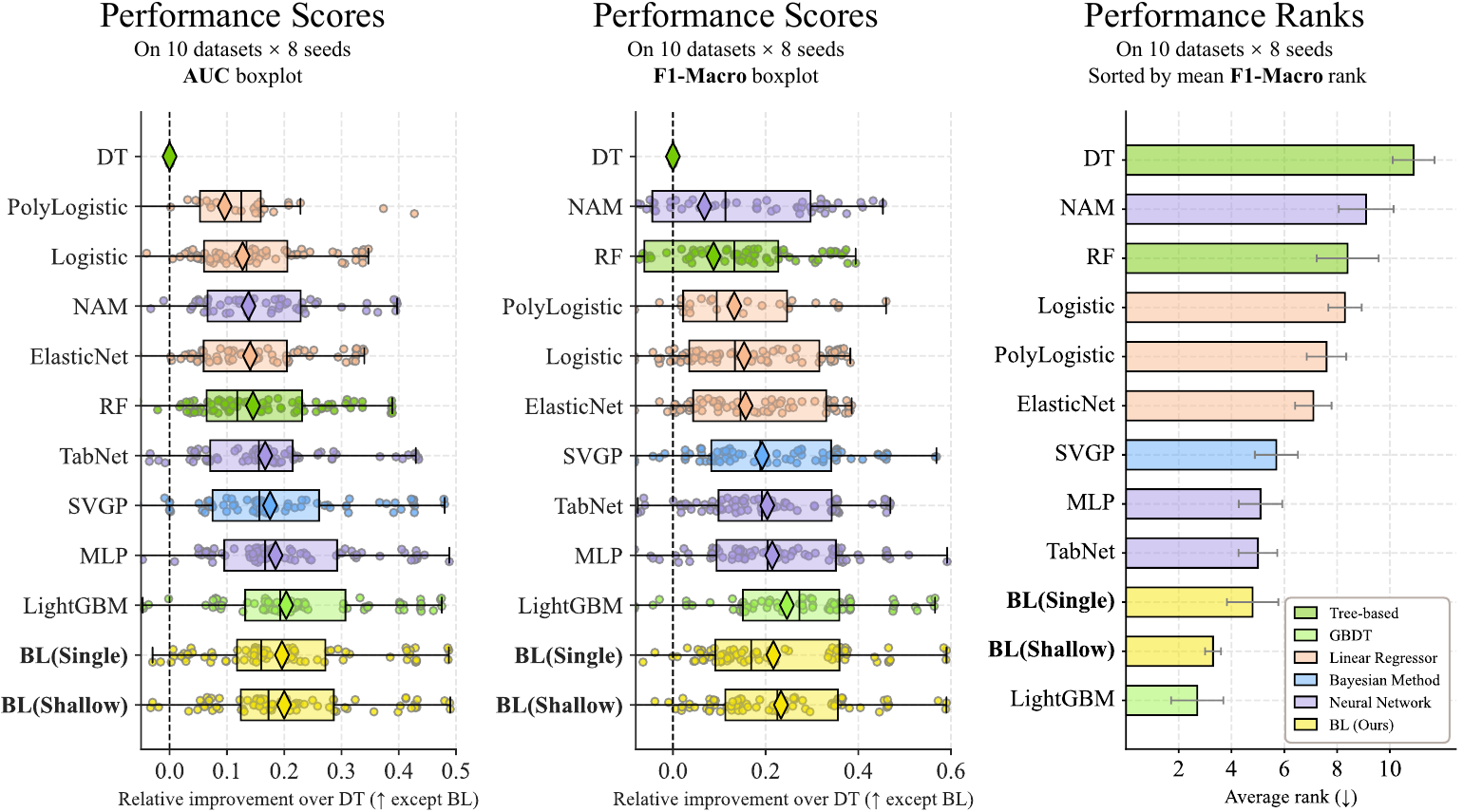}
    \caption{Predictive performance of BL and baselines. Left/Middle: relative AUC and F1-Macro gains over DT, sorted by mean (excluding BL). Right: mean F1-Macro ranks (↓ better). BL achieves first-tier performance in both metrics. Its variants rank second and third in mean F1-Macro rank, with BL(Shallow) showing no statistically significant difference from state-of-the-art models.}
    \label{fig:discrete-pred}
\end{figure}

\textit{Is BL accurate enough for standard prediction tasks?} In this part, we evaluate the predictive performance of BL on \textit{10 datasets} (Table~\ref{tab:openml_datasets}), covering diverse sample sizes, feature dimensions, and scientific domains. For fair comparison, we consider two BL variants—BL(Single) and BL(Shallow)—and compare them against \textit{10 baseline models} (Table~\ref{tab:standard_models}) drawn from five methodological families: neural networks, tree-based models, gradient boosting methods, Bayesian methods, and linear regressors. All methods share a unified preprocessing and tuning pipeline.

\paragraph{Predictive Performance.} Figure~\ref{fig:discrete-pred} shows that BL attains first-tier predictive performance overall, achieving the best results among intrinsically interpretable models. Notably, BL(Shallow) surpasses MLP, highlighting that BL delivers interpretability without sacrificing performance.

\subsection{Interpreting BL: A Case Study}\label{expr:case-study}
\textit{How can BL be interpreted in practice?} This part presents a case study using the \textit{Boston Housing} dataset, where we train a supervised BL(Single) model with a degree-2 polynomial basis, a BL[2,1] model (i.e., a two-layer BL with two B-blocks in the first layer and one in the second layer), and a BL(Deep) model with a [5,3,1] architecture to predict median home values. We illustrate how the internal structure of BL can be interpreted as explicit optimization problems and their hierarchical versions, accompanied by complementary visualizations. Further details are provided in Appendix~\ref{app:standard-task} and ~\ref{para-case}.

\paragraph{Symbolic Form of BL(Single) as a UMP.}
As shown in Figure~\ref{fig:case-study-1}, the trained BL(Single) model can be interpreted as the UMP of a \textit{representative buyer} in the Boston Housing market, comprising a single objective, inequality, and equality term. Each term is represented by an estimated quadratic polynomial. For parsimony, we extract approximate symbolic expressions by retaining only the monomials with the largest (2–5) absolute coefficients, while collecting the remaining terms (including constants) into a residual term \(\tilde{R}\). For example, the utility term can be written as:
\begin{align*}
\mathbf{p}_u = -0.56 \cdot P^2-0.6 \cdot \mathrm{RM}+ 0.57 \cdot \mathrm{RM} \cdot P +\tilde{R}_{u} 
\approx (1 - P)(1 + P - \mathrm{RM}) + \tilde{R}_{u}
\end{align*}
We similarly simplify the budget and belief terms to recover an approximate UMP for the buyer.
 The full symbolic form is illustrated at the bottom of Figure~\ref{fig:case-study-1}.

\paragraph{Interpreting BL(Single) via Model Visualization.}  
Visualizations of each term’s polynomial reveal how features constitute the UMP. Three insights emerge from the visualizations in Figure~\ref{fig:case-study-1}. (i) \textit{Median housing price (MEDV)} and \textit{average number of rooms (RM)} are dominant across all terms—MEDV negatively affects utility in a near-quadratic form, while RM modulates its marginal effect. (ii) \textit{Proportion of lower-income residents (LSTAT)} features prominently in the budget constraint, reflecting implicit resource limitations. (iii) \textit{Crime rate (CRIM)} appears only in the belief term, suggesting that buyers treat it as influencing others’ behavior rather than their own preferences.

\begin{figure}[htbp] 
    \centering
    \includegraphics[width=0.8\textwidth]{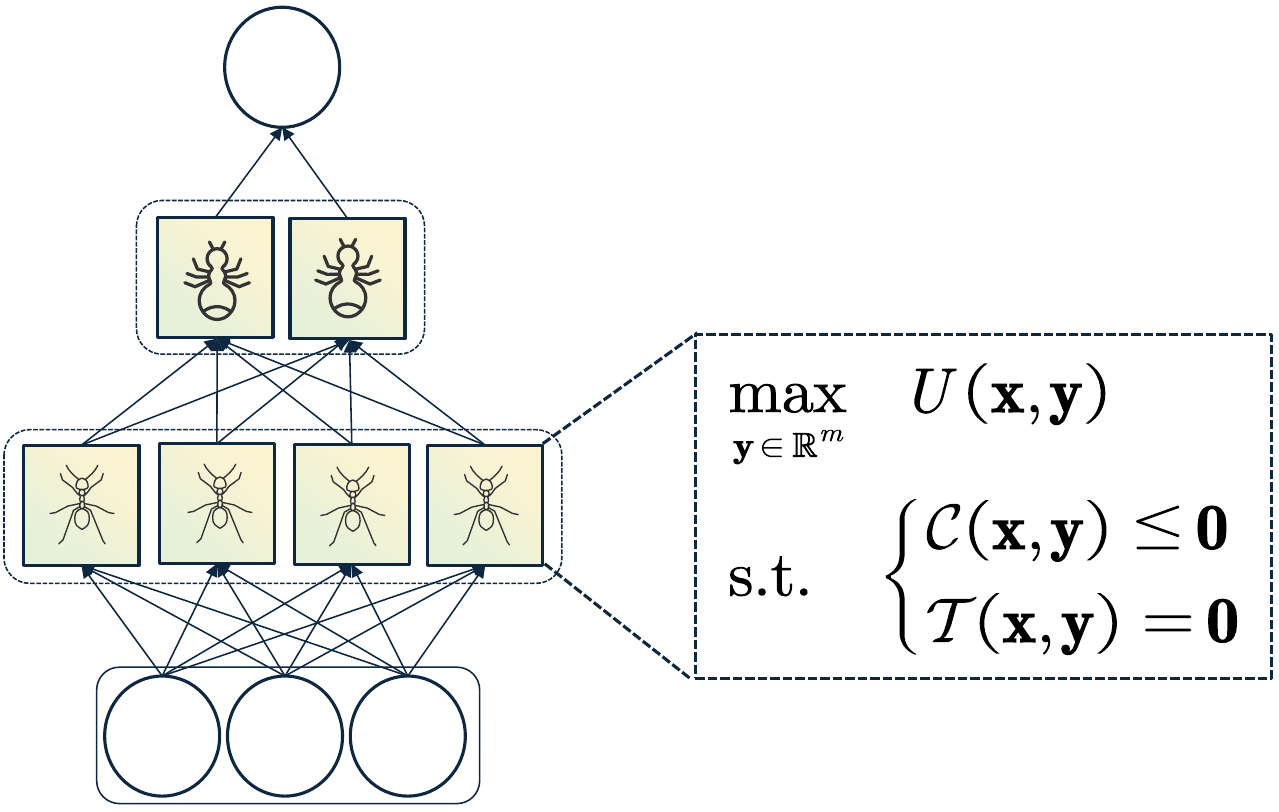}
    \caption{
Interpreting deeper BL architectures as hierarchical structures of interacting agents. 
Each block $\mathcal B$ represents an interpretable agent solving its own UMP, while a layer corresponds to a set of heterogeneous agents operating in parallel. 
The next layer then aggregates and reallocates the negative energies from the previous layer, thereby performing higher-level coordination across agents. 
This layered organization provides a natural compositional interpretation of deep BL: bottom-layer modules encode local objectives, while upper layers synthesize these into collective outcomes. 
Analogous structures arise in biological and social systems—for example, in ant colonies, individual ants (first-layer agents) follow simple local rules, yet their collective behavior is coordinated through higher-level interactions (second-layer aggregation), yielding globally efficient resource allocation and task division. 
}
    \label{fig:case-study-2}
\end{figure}

\paragraph{Interpreting BL(Deep).} (1) Figure~\ref{fig:case-study-1} (b) illustrates the optimization problems learned by the BL[2,1] model. 
Layer~1 identifies two micro-level preference types: an \textit{Economic-sensitive Buyer}, whose utility and constraint terms load primarily on ZN (Large-lot residential share) and LSTAT (Proportion of lower-income residents); and a \textit{Location-sensitive Buyer}, driven mainly by CHAS (Charles River indicator) and RAD (Highway accessibility). 
Layer~2 aggregates these basic preferences, yielding an effective “representative buyer’’ that integrates the two preference types. (2) Figure~\ref{fig:case-study-1} (c) presents the internal structure of the BL[5,3,1] model. 
In Layer~1, BL recovers five distinct \emph{micro-level preference} types characterizing heterogeneous patterns in the housing market. 
Layer~2 identifies three macro-level representative agents, each capturing a different \emph{macro-level trade-off} among the basic preferences.
Layer~3 then aggregates these components into a single high-level mechanism, yielding the overall representative buyer. Table~\ref{app:senmicofblock} provides detailed descriptions of each type. (3) Beyond interpretability, we find that each preference pattern and trade-off recovered by BL(Deep) aligns with established findings in the economics literature (see Table~\ref{app:sci-recove}). This indicates that BL successfully reconstructs underlying scientific knowledge.

\subsection{Prediction on High-Dimensional Inputs}\label{expr:high-dim}
\textit{Is BL scalable to high-dimensional inputs?} We evaluate BL against the \textit{energy-based MLP} (E-MLP) baseline across network depths \( d \in \{1, 2, 3\} \), with all models implemented without skip connections. Experiments are conducted on \textit{four datasets} spanning both image and text domains, and are evaluated using \textit{six metrics}: in-distribution accuracy, calibration metrics (ECE and NLL), and OOD robustness metrics (AUROC, AUPR, and FPR@95). For OOD evaluation, we adopt symmetric ID\(\leftrightarrow\)OOD splits, using MNIST \citep{lecun2002gradient} and Fashion-MNIST~\citep{xiao2017fashion} as one pair, and AG~News and Yelp~Polarity~\citep{zhang2015character} as another. E-MLP and BL are controlled to have comparable parameters.

\paragraph{Scalability on High-Dimensional Inputs.} 
Figure~\ref{fig:high-dim-graph} and Table~\ref{tab:image-text-stacked} present results for BL and E-MLP across network depths. On image datasets, the two models exhibit comparable in-distribution accuracy, while BL generally achieves stronger out-of-distribution detection performance on Fashion-MNIST at similar accuracy levels. On text datasets, BL consistently improves ID accuracy over E-MLP across depths. However, OOD detection behavior varies by dataset: BL outperforms E-MLP on Yelp, whereas E-MLP shows better OOD discrimination on AG News. BL also achieves better calibration metrics (ECE and NLL; Table~\ref{tab:prob_metrics}).

\paragraph{Downward Shift of the Pareto Frontier.}
Table~\ref{tab:params_num} reports the parameter counts of BL and E-MLP across four tasks, and Tables~\ref{tab:train_time} summarize their runtimes. The two models have highly comparable parameter sizes. Across datasets, BL exhibits slightly higher training time than E-MLP. Combining these results with their comparable predictive performance and the intrinsic interpretability of BL, in contrast with the black-box E-MLP, indicates that BL achieves a \emph{downward shift} of the Pareto frontier.

\begin{figure}[htbp]
  \centering
    \begin{minipage}[t]{0.385\linewidth}
      \vspace{0pt}
      \centering
      \includegraphics[width=\linewidth]{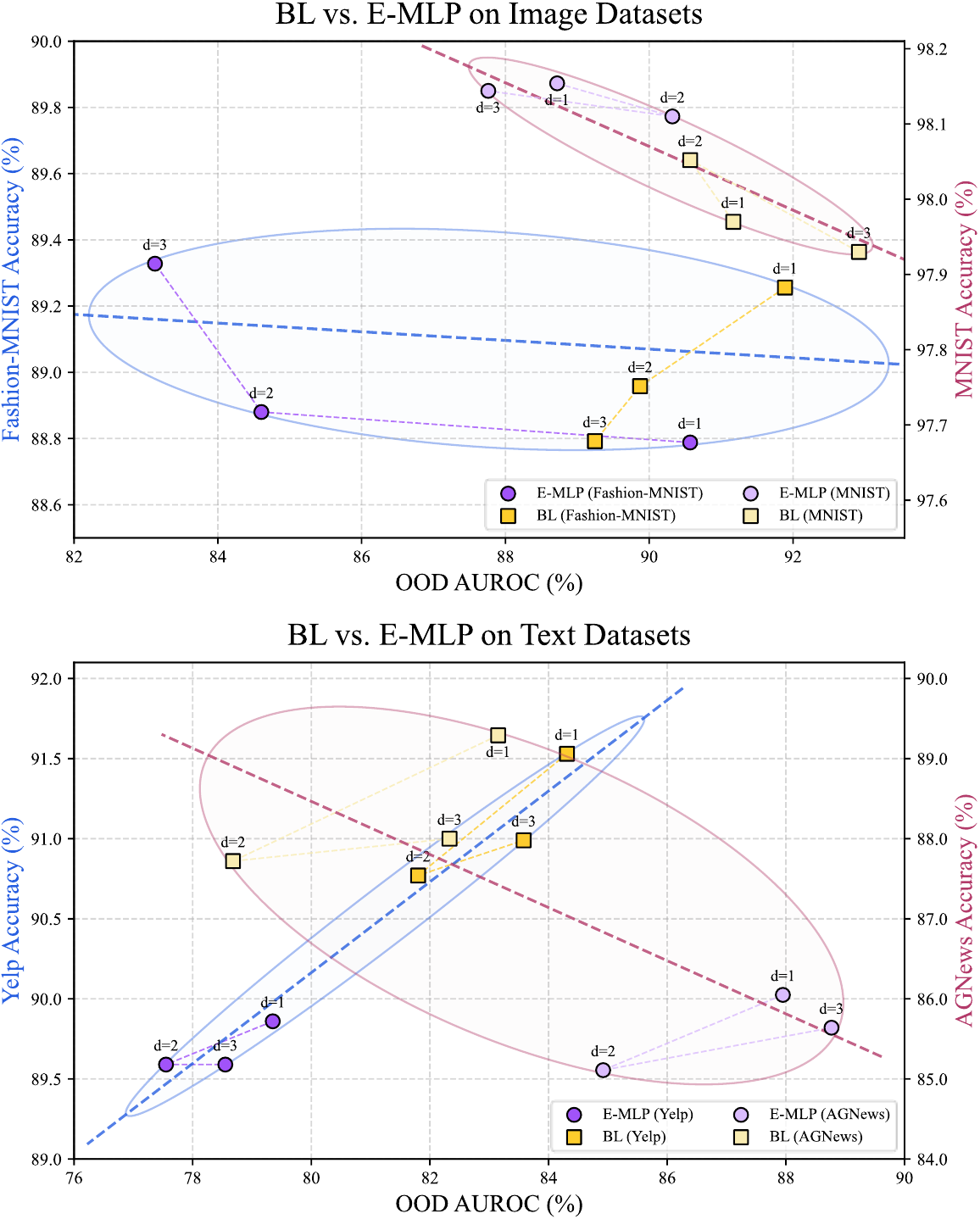}
      \captionof{figure}{Comparison of BL and E-MLP on image and text datasets; $d$ denotes model depth.}
      \label{fig:high-dim-graph}
    \end{minipage}\hfill
  \begin{minipage}[t]{0.61\linewidth}
    \vspace{0pt}%
    \centering
    \captionof{table}{ID accuracy and OOD AUROC (\%) on image and text datasets. BL and E-MLP are evaluated at depths 1–3 with matched parameter counts, both without skip connections. Top-two per column are \rkA{blue} and \rkB{red}.}

  \label{tab:image-text-stacked}
    \footnotesize
    \setlength{\tabcolsep}{3.5pt}
    \resizebox{\linewidth}{!}{
\begin{tabular}{c cc cc}
\toprule
& \multicolumn{4}{c}{\textbf{Image Datasets}} \\
\cmidrule(lr){2-5}
\textbf{Model} &
\multicolumn{2}{c}{MNIST} &
\multicolumn{2}{c}{Fashion-MNIST} \\
& Accuracy & OOD AUROC &
  Accuracy & OOD AUROC \\
\midrule
E-MLP (depth=1) & \rkA{\mstd{98.15}{0.07}} & \mstd{88.72}{1.36} & \mstd{88.79}{0.29} & \rkB{\mstd{90.57}{1.39}} \\
BL (depth=1)  & \mstd{97.97}{0.18} & \rkB{\mstd{91.17}{2.68}} & \rkB{\mstd{89.26}{0.22}} & \rkA{\mstd{91.89}{0.71}} \\
E-MLP (depth=2) & \mstd{98.11}{0.08} & \mstd{90.32}{1.74} & \mstd{88.88}{0.26} & \mstd{84.61}{2.56} \\
BL (depth=2)  & \mstd{98.05}{0.12} & \mstd{90.57}{2.49} & \mstd{88.96}{0.39} & \mstd{89.87}{2.48} \\
E-MLP (depth=3) & \rkB{\mstd{98.14}{0.11}} & \mstd{87.76}{2.55} & \rkA{\mstd{89.33}{0.25}} & \mstd{83.13}{1.90} \\
BL (depth=3)  & \mstd{97.93}{0.27} & \rkA{\mstd{92.92}{1.69}} & \mstd{88.79}{0.25} & \mstd{89.24}{4.18} \\

\midrule
& \multicolumn{4}{c}{\textbf{Text Datasets}} \\
\cmidrule(lr){2-5}
\textbf{Model} &
\multicolumn{2}{c}{AG News} &
\multicolumn{2}{c}{Yelp} \\
& Accuracy & OOD AUROC &
  Accuracy & OOD AUROC \\
\midrule
E-MLP (depth=1) & \mstd{88.74}{0.26} & \mstd{59.24}{0.21} & \mstd{91.16}{0.02} & \rkA{\mstd{57.60}{0.31}} \\
BL (depth=1)  & \rkA{\mstd{89.52}{0.16}} & \rkB{\mstd{66.18}{0.20}} & \rkA{\mstd{91.56}{0.04}} & \mstd{57.06}{0.10} \\
E-MLP (depth=2) & \mstd{89.29}{0.20} & \mstd{62.48}{0.76} & \mstd{91.32}{0.09} & \rkB{\mstd{57.47}{0.21}} \\
BL (depth=2)  & \mstd{89.22}{0.20} & \mstd{63.68}{0.46} & \rkB{\mstd{91.39}{0.06}} & \mstd{57.31}{0.27} \\
E-MLP (depth=3) & \rkB{\mstd{89.37}{0.21}} & \rkA{\mstd{66.82}{1.01}} & \mstd{91.23}{0.07} & \mstd{57.36}{0.27} \\
BL (depth=3)  & \mstd{88.80}{0.18} & \mstd{64.44}{0.52} & \mstd{91.13}{0.09} & \mstd{57.16}{0.48} \\

\bottomrule
\end{tabular}}

  \end{minipage}
\end{figure}

\begin{table}[htbp]
\centering
\caption{ECE and NLL on image and text datasets. BL and E-MLP are evaluated at depths 1–3 with matched parameter counts. Top-two per column are \rkA{blue} and \rkB{red}.}
\label{tab:prob_metrics}
\begin{tabular}{c cc cc}
\toprule
\multirow{2}{*}{\textbf{Model}}  & \multicolumn{2}{c}{\textbf{MNIST}} & \multicolumn{2}{c}{\textbf{Fashion-MNIST}}\\
\cmidrule(lr){2-3}\cmidrule(lr){4-5}
 & ECE & NLL & ECE & NLL \\
\midrule
E-MLP (depth=1) & \mstd{0.02}{0.00} & \mstd{0.20}{0.02} & \mstd{0.08}{0.00} & \mstd{0.74}{0.01} \\
BL (depth=1)    & \mstd{0.02}{0.00} & \mstd{0.26}{0.01} & \rkA{\mstd{0.05}{0.00}} & \rkA{\mstd{0.36}{0.01}} \\
E-MLP (depth=2) & \mstd{0.02}{0.00} & \mstd{0.23}{0.02} & \mstd{0.09}{0.00} & \mstd{0.89}{0.03} \\
BL (depth=2)    & \rkA{\mstd{0.02}{0.00}} & \rkB{\mstd{0.16}{0.01}} & \rkB{\mstd{0.07}{0.00}} & \rkB{\mstd{0.44}{0.01}} \\
E-MLP (depth=3) & \mstd{0.02}{0.00} & \mstd{0.16}{0.02} & \mstd{0.09}{0.00} & \mstd{0.85}{0.04} \\
BL (depth=3)    & \rkB{\mstd{0.02}{0.00}} & \rkA{\mstd{0.13}{0.02}} & \mstd{0.07}{0.00} & \mstd{0.49}{0.02} \\
\midrule
\multirow{2}{*}{\textbf{Model}}  & \multicolumn{2}{c}{\textbf{AG News}} & \multicolumn{2}{c}{\textbf{Yelp}}\\
\cmidrule(lr){2-3}\cmidrule(lr){4-5}
 & ECE & NLL & ECE & NLL \\
\midrule
E-MLP (depth=1) & \mstd{0.02}{0.00} & \mstd{0.40}{0.01} & \mstd{0.01}{0.00} & \mstd{0.24}{0.00} \\
BL (depth=1)  & \rkB{\mstd{0.02}{0.00}} & \rkA{\mstd{0.31}{0.01}} & \rkA{\mstd{0.00}{0.00}} & \rkA{\mstd{0.20}{0.00}} \\
E-MLP (depth=2) & \mstd{0.02}{0.00} & \mstd{0.42}{0.01} & \rkB{\mstd{0.00}{0.00}} & \mstd{0.25}{0.00} \\
BL (depth=2)  & \mstd{0.06}{0.01} & \mstd{0.43}{0.03} & \mstd{0.02}{0.00} & \mstd{0.23}{0.01} \\
E-MLP (depth=3) & \rkA{\mstd{0.01}{0.00}} & \mstd{0.41}{0.02} & \mstd{0.00}{0.00} & \mstd{0.25}{0.01} \\
BL (depth=3)  & \mstd{0.05}{0.01} & \rkB{\mstd{0.39}{0.02}} & \mstd{0.02}{0.00} & \rkB{\mstd{0.22}{0.00}} \\
\bottomrule
\end{tabular}
\end{table}

\begin{table}[htbp]
\centering
\caption{Training time (seconds) of BL vs. E-MLP on high-dimensional datasets (mean $\pm$ std).}
\label{tab:train_time}
\begin{tabular}{lcccc}
\toprule
\textbf{Model} & \textbf{MNIST} & \textbf{FashionMNIST} & \textbf{AG News} & \textbf{Yelp} \\
\midrule
E-MLP (depth=1) & \mstd{100.59}{0.29} & \mstd{73.57}{1.20} & \mstd{14.69}{0.40}  & \mstd{179.37}{0.73} \\
BL (depth=1)    & \mstd{110.63}{3.34} & \mstd{96.52}{2.90} & \mstd{17.20}{0.06}  & \mstd{181.07}{1.80} \\
E-MLP (depth=2) & \mstd{102.64}{0.26} & \mstd{78.25}{0.28} & \mstd{15.76}{0.06}  & \mstd{179.22}{0.66} \\
BL (depth=2)    & \mstd{122.85}{3.95} & \mstd{114.43}{3.72} & \mstd{21.78}{0.08}  & \mstd{180.38}{1.44} \\
E-MLP (depth=3) & \mstd{104.52}{0.30} & \mstd{85.57}{1.19} & \mstd{16.95}{0.05}  & \mstd{178.99}{1.42} \\
BL (depth=3)    & \mstd{140.17}{4.42} & \mstd{130.03}{4.96} & \mstd{26.29}{0.24}  & \mstd{180.36}{0.91} \\
\bottomrule
\end{tabular}
\end{table}

\subsection{Constraint Enforcement Test: High-Dimensional Energy Conservation}
To evaluate whether the learnable penalty terms in BL are capable of enforcing near-hard constraints under finite temperature, we isolate the penalty mechanism and test it on a high-dimensional energy-conservation constraint. This diagnostic experiment removes the utility term and focuses solely on the penalty term, providing a characterization of how the penalty term controls constraint violations as a function of temperature $\tau$ and penalty scale~$\lambda$.

\paragraph{Experiment setup.}
We sample $x \in R^{64}$ i.i.d.\ from a standard Gaussian $x \sim \mathcal{N}(0, I_{64})$ and define a pure penalty compositional utility
\[
T(x,y) \;=\; \|y\|^2 - \|x\|^2, 
\qquad 
\text{BL}(x,y) \;=\;- \lambda\, T(x,y)^2,
\]
which plays the role of an energy-conservation residual and its quadratic penalty. 

We target the Gibbs distribution
\[
p(y \mid x) \;\propto\; \exp\!\big(\text{BL}(x,y)/\tau\big)
\]
using overdamped Langevin dynamics with step size $\eta = 10^{-4}$:
\[
y_{k+1} 
= y_k 
+ \eta\, \nabla_y BL(x,y_k)/\tau 
+ \sqrt{2 \eta \tau}\,\xi_k,
\qquad 
\xi_k \sim \mathcal{N}(0,I_{64}).
\]

For each pair $(\lambda,\tau)$ we run $512$ parallel chains, each for $1500$ Langevin steps (500 burn-in). We sweep over temperatures
$\tau \in \{2.0, 1.0, 0.5, 0.25, 0.1, 0.05, 0.02, 0.01, 0.005\}$ at a fixed penalty $\lambda = 25$, and over penalty weights $\lambda \in \{0, 1, 3, 10, 30, 100, 200, 500\}$ at a fixed temperature $\tau = 0.05$.

For each configuration we record the residual magnitude $|T(x,y)|$ from the final state of every chain. We then report three summary statistics: (i) the mean violation $E[|T(x,y)|]$, (ii) the $95$th percentile of $|T(x,y)|$, and (iii) the empirical probability of near-feasible samples.  We declare a sample to satisfy the constraint approximately if
\[
|T(x,y)| \;\leq\ \varepsilon_{\text{tol}}
\quad \text{with} \quad
\varepsilon_{\text{tol}} = 10^{-1},
\]
and estimate $P(|T(x,y)| \leq \varepsilon_{\text{tol}})$ across chains.  This tolerance scale is chosen to be small relative to the typical unconstrained residuals, so that the near-feasible regime corresponds to a practically tight energy-conservation constraint.

\begin{figure}[htbp]
    \centering
    \includegraphics[width=0.9\linewidth]{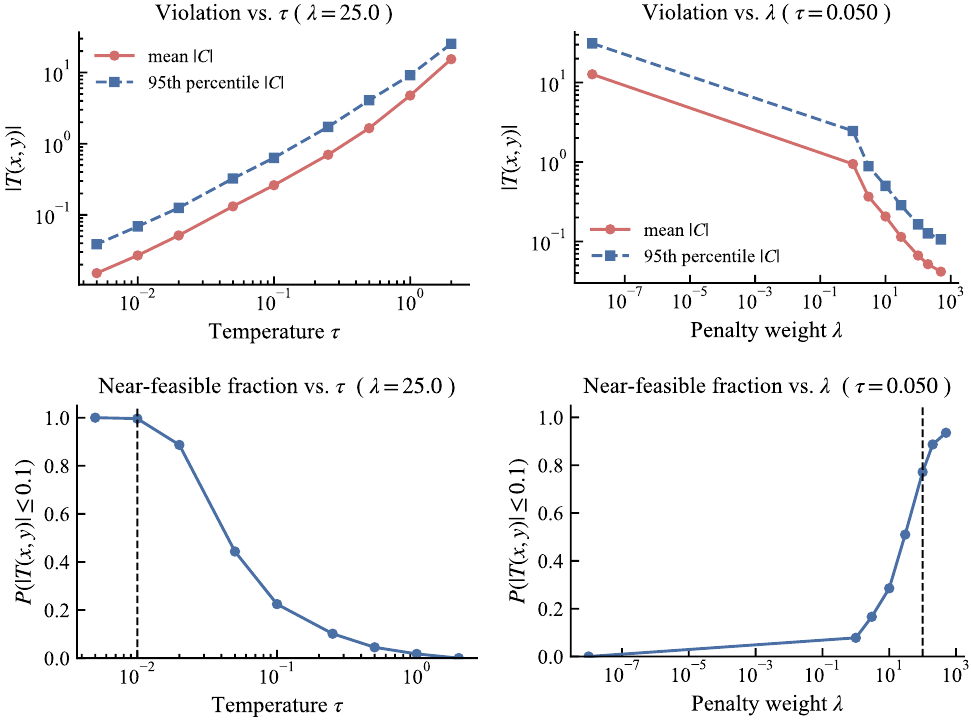}
    \caption{
    Constraint enforcement test of the BL penalty block on an energy-conservation constraint. The figure reports violation statistics $|T(x,y)|$ when varying the temperature $\tau$ (left side of panel) and the penalty weight $\lambda$ (right side of panel).}
    \label{fig:penalty_conservation}
\end{figure}

\paragraph{Constraint enforcement.}
Figure~\ref{fig:penalty_conservation} shows that BL achieves near-hard constraint enforcement under finite temperature and penalty scaling. Violations decrease substantially as $\tau$ decreases or $\lambda$ increases.
At around $\lambda = 25$ and $\tau = 0.01$, the 64-dimensional energy-conservation constraint is enforced within $10^{-2}$ error.
Curves remain mostly smooth and monotone in 64 dimensions, indicating stable Langevin sampling and effective penalty enforcement.

\section{Scientific Explanation of BL(Deep)}
\label{sci-BLdeep}
BL(Deep) provides a form of interpretability that is consistent with hierarchical optimization structures.
In BL, each layer performs a coarse-graining of the optimization structure implemented by the layer below.
An intuitive analogy is a corporate organizational hierarchy: lower-layer managers solve their own local optimization problems, while higher-layer managers aggregate and coordinate the outcomes of many such lower-layer problems to achieve broader organizational objectives. BL(Deep) follows the same principle—higher layers summarize, reorganize, and coordinate the solutions formed at lower layers.

This perspective aligns with many scientific domains characterized by multi-level complexity, including
(i) the formation of representative behavioral agents in behavioral sciences, and
(ii) renormalization in statistical physics, where fine-scale interactions are compressed into effective coarse-scale potentials.

We describe the explanation procedure below.
To build intuition, let us first consider a generic hierarchical optimization structure—this may refer to a multi-layer organizational structure composed of individual agents, or a multi-scale physical system composed of interacting particles.

\textbf{Step 1: Bottom-layer interpretation.}

Each bottom-layer block is an optimization problem that directly receives inputs from the environment. These blocks correspond to \emph{micro-level behavioral mechanisms}, such as the decision rules of individual agents performing environment-facing tasks in an organization, or the motion laws governing a single particle in statistical physics. Examining these bottom-layer blocks reveals the fundamental optimization principles followed by all units that directly interact with the environment.

\textbf{Step 2: Layer-wise coarse-graining and micro-to-macro aggregation.}

Blocks in the next layer aggregate the outputs of lower-layer optimization problems through a new optimization step, producing a \emph{coarse-grained behavioral summary}. Each higher-level block represents the effective optimization system that emerges from the interactions among many lower-level units, thereby capturing macro-level regularities distilled from micro-level mechanisms.

This micro-to-macro transition is consistent with many well-established scientific principles, including:
\begin{itemize}
    \item (i) \textbf{Aggregation and coordination}: in hierarchical organizations, the outputs of lower-level agents are aggregated, reallocated, and coordinated by higher-level agents to achieve improved organizational objectives.
    \item (ii) \textbf{Coarse-grained observation}: in hierarchical behavioral systems, individual agents are grouped into categories that share characteristic optimization patterns; in statistical physics, many particles collectively form systems whose coarse-grained behavior is governed by effective potentials induced by microscopic interactions.
\end{itemize}

\textbf{Step 3: Bottom-up reconstruction.}

A global explanation is obtained by tracing the hierarchy upward, following the model’s micro-to-macro abstraction path:
raw input features → micro-level optimization blocks → macro-level aggregation and coordination or coarse-grained behavioral constructs  → macro-level optimization system.

At each layer, we inspect the characteristics of each block and its associated optimization objective, as well as how these optimization problems evolve across layers. This reveals how each higher layer aggregates, coordinates, or coarse-grains the outputs of the layer below. Together, these observations yield a compact multi-scale interpretation in which BL is understood as a hierarchical optimization structure.

\section{Discussion}
\label{app:discussion}
In what follows, we discuss the \textbf{limitations and future directions} of Behavior Learning from the perspectives of theoretical foundations, architecture, and applications.

\paragraph{Scalability of theoretical assumptions.}
The identifiability-related statistical theorems constitute the core theoretical pillars of IBL, ensuring uniqueness of the interpretability and supporting its scientific credibility. Although these results hold under mild conditions, their behavior in large-scale, highly over-parameterized architectures remains less well understood. This highlights the need for systematic investigations into the robustness, potential failure modes, and empirical boundaries of these guarantees when applied to modern large-scale learning systems.

\paragraph{Choice of basis functions.}
Polynomial basis functions enhance expressivity while preserving symbolic interpretability in BL (Single). However, high-order polynomials may introduce optimization instability, exacerbate sensitivity to initialization and normalization, and complicate training dynamics. Future work may explore alternative basis families—such as trigonometric, spline-based, or neural basis functions—and develop conditioning or normalization strategies that improve numerical stability without sacrificing interpretability.

\paragraph{Interpretable generative modeling.}
BL integrates several training techniques from energy-based models while retaining intrinsic interpretability, enabling interpretable generative modeling for vision (e.g., image or video generation) and language (e.g., large language models). Extending BL to explicitly generative architectures in which outputs correspond directly to human-understandable and scientifically meaningful blocks represents a compelling direction. Such extensions could yield generative systems with greater transparency, controllability, and scientific credibility compared to traditional black-box models.

\paragraph{Hybrid architectures for partial interpretability.} A promising direction for future work is to develop hybrid architectures that integrate BL with black-box models in a principled way to achieve partial interpretability. Three avenues are particularly worth exploring:
(i) Feature-level integration.
Black-box neural networks can serve as high-capacity feature extractors, while BL operates on the resulting learned representations to impose structured, optimization-based semantics.
(ii) Decision-critical integration.
BL blocks may be inserted specifically at high-risk or decision-critical components of the model, substantially reducing the interpretability and reliability risks associated with purely black-box architectures.
(iii) Mechanism-level integration.
Because BL provides an optimization-driven inductive bias aligned with many real-world mechanisms, selectively applying BL to the parts of the system where such inductive bias is essential may yield models that better capture the underlying ground-truth processes while retaining the flexibility of deep networks, thereby improving generalization performance.

\paragraph{BL for scientific and social-scientific modeling.}
BL represents data as a composition of optimization problems, closely resonating with modeling paradigms in the natural and social sciences. Its competitive performance, intrinsic interpretability, and statistical rigor position BL as a promising framework for scientific machine learning. Future research may apply BL to domains such as statistical physics, evolutionary biology, computational neuroscience, and climate dynamics, as well as behavioral science, economics, sociology, and political science—particularly in settings involving complex, partially formalized, or cognitively meaningful structures.

\section{Related Work} \label{related-work}

\subsection{Interpretability}

Interpretability has become increasingly vital in machine learning \citep{lipton2018mythos, molnar2020interpretable}, especially for scientific domains \citep{doshi2017towards, roscher2020explainable}. Ensuring interpretability fosters transparency and reproducibility, and may further provide insights into underlying scientific principles. The ideal form of interpretability is \textbf{intrinsic interpretability}, in which a model’s structure or parameters are directly understandable to humans. However, intrinsic interpretability is challenging to achieve in some widely used high-capacity models such as deep neural networks \citep{lecun2015deep}. This has motivated \textbf{post-hoc interpretability} methods \citep{ribeiro2016should, lundberg2017unified}, which seek to explain a pre-trained black-box model. While more broadly applicable, such explanations are often considered less suitable for scientific research \citep{rudin2019stop}, as they may compromise stability and faithfulness to the model’s decision process.
\paragraph{Performance–Interpretability Trade-off.}The limited intrinsic interpretability observed in high-capacity models has long been recognized as a central challenge. This is commonly framed as the \textit{performance–interpretability trade-off} \citep{rudin2019stop, arrieta2020explainable}, which posits a tension between predictive performance and intrinsic interpretability. High-performing models such as deep neural networks often lack transparency, whereas intrinsically interpretable models struggle to capture complex nonlinear patterns. Several efforts have sought to mitigate the performance–interpretability trade-off, which can be broadly categorized into four groups. (i) Additive models. Classical GAMs \citep{hastie2017generalized}, modern GA2Ms/EBMs \citep{caruana2015intelligible,InteEBM}, and neural variants such as NAM \citep{InteNAM} and NODE-GAM \citep{chang2021node} preserve interpretability by decomposing predictions into main effects and low-order interactions. (ii) Concept-based models. Concept Bottleneck Models \citep{koh2020concept}, TCAV \citep{kim2018interpretability}, and SENN \citep{alvarez2018towards} map inputs into human-interpretable latent concepts and use them as intermediate predictors. (iii) Rule- and score-based systems. SLIM \citep{ustun2016supersparse} and CORELS \citep{angelino2018learning} generate transparent scoring functions or rule lists with provable optimality guarantees. (iv) Shape-constrained networks. Deep Lattice Networks \citep{you2017deep} and related monotonic architectures impose monotonicity and calibration constraints to encode domain priors while retaining flexibility.

\paragraph{Limitations in Scientifically Credible Modeling.}The above approaches demonstrate strengths, yet two fundamental limitations restrict their applicability in scientific research. First, most methods are tool-centric modifications of machine learning architectures rather than frameworks grounded in scientific theory (e.g., optimization, dynamical systems, conservation laws). As recent surveys emphasize \citep{roscher2020explainable, karniadakis2021physics, allen2023interpretable, bereska2024mechanistic, longo2024explainable, mersha2024explainable}, genuine scientific insight requires models linked to mechanistic principles, yet many interpretability techniques remain detached from such principles. Second, these approaches are typically non-identifiable \citep{ran2017parameter, InteIdentifiable}, meaning that multiple distinct parameterizations can explain the same data. This lack of uniqueness undermines their reliability for recovering ground-truth mechanisms and, in statistical terms, complicates consistency guarantees. As a result, the trained model may fail to converge to the true data-generating process as sample size increases \citep{newey1994large, van2000asymptotic}.

\paragraph{Relation to BL.} BL also mitigates the performance–interpretability trade-off. Unlike prior methods, it is principle-driven and scientifically grounded, learning interpretable latent optimization structures directly from data. The framework applies broadly to domains where outcomes arise as solutions to (explicit or latent) optimization problems. It is also identifiable: its smooth and monotone variant, Identifiable Behavior Learning (IBL), guarantees identifiability under mild conditions, ensuring the scientific credibility of its explanations and supporting recovery of the ground-truth model under appropriate conditions.

\subsection{Data-Driven Inverse Optimization}

Inverse optimization (IO) \citep{ahuja2001inverse, chan2025inverse} is a core paradigm for learning latent optimization problems from observed data. Traditional IO aims to construct objectives or constraints that exactly rationalize a small set of deterministic decisions. In contrast, data-driven IO \citep{keshavarz2011imputing, aswani2018inverse} focuses on statistically recovering the underlying problem from large-scale, noisy observational data. Inverse optimal control (IOC) \citep{kalman1964linear, freeman1996inverse} extends this paradigm to dynamic settings, seeking to infer sequential decision processes from expert trajectories. Within \textit{machine learning}, inverse reinforcement learning (IRL) \citep{ng2000algorithms, wulfmeier2015maximum} and inverse constrained reinforcement learning (ICRL) \citep{malik2021inverse, liu2024comprehensive} are prominent instances of data-driven IOC: Typically, IRL assumes fixed constraints and learns a reward function, whereas ICRL reverses this role. Both require repeatedly solving for (near-)optimal policies and matching with expert demonstrations—incurring high computational cost. In the \textit{behavioral sciences}, particularly economics, numerous studies can be viewed as instances of the data-driven IO paradigm. Foundational work \citep{mcfadden1972conditional, dubin1984econometric, hanemann1984discrete, berry1993automobile} and related studies typically posits theoretically grounded, parametric utility maximization problems (UMPs) and estimates their structural parameters from observed behavior.

\paragraph{Relation to BL.} The BL framework also falls under the paradigm of  data-driven inverse optimization but differs notably from prior related work in both machine learning and behavioral science. Compared with IRL and ICRL, BL does not rely on matching expert-demonstrated policies with the aim of improving task-specific performance. Instead, it is proposed as a general-purpose, scientifically grounded, and intrinsically interpretable framework that operates via low-cost end-to-end training with a hybrid CE–DSM objective. It jointly learns a utility functions and constraints—a direction that has received little attention in IRL and ICRL \citep{park2020inferring, jang2023inverse, liu2024meta}. Meanwhile, in behavioral science, related work typically formulates distinct utility maximization models under varying assumptions for specific decision contexts, and estimate their parameters accordingly. However, to the best of our knowledge, no existing work proposes a structure-free framework for learning UMPs that generalizes across contexts. BL fills this gap with a structure-free, data-driven approach that does not rely on fixed UMP structures.

\subsection{Energy-based Models (EBMs)}

Energy-based models (EBMs)  \citep{lecun2006tutorial} are a prominent data-driven IO scheme, rooted in the principle of energy minimization from statistical physics. They learn an energy function $E_\theta(x,y)$ that parameterizes the compatibility between inputs and outputs, inducing a Gibbs distribution $p_\theta(y \mid x) \propto \exp\{-E_\theta(x,y)\}$ that favors outcomes corresponding to low-energy solutions. In practice, this energy function is almost always instantiated by high-capacity neural networks, endowing the learned landscape with strong expressive power but also a black-box nature. Training EBMs typically relies on objectives that circumvent the intractable partition function, with classical approaches including contrastive divergence \citep{hinton2002training}, persistent contrastive divergence \citep{tieleman2008training}, and noise-contrastive estimation \citep{gutmann2010noise}. A particularly influential line of work is score matching \citep{hyvarinen2005estimation} and its denoising variant (DSM) \citep{vincent2011connection}, which have underpinned breakthroughs in score-based generative modeling \citep{song2019generative, song2020improved} and laid the foundation for modern diffusion methods \citep{song2020score}.

\paragraph{Relation to BL.} BL and EBMs exhibit a principled correspondence: BL is grounded in behavioral science and rooted in utility maximization, while EBMs are grounded in statistical physics and based on energy minimization. BL adopts several training techniques common to EBMs, such as Gibbs distribution modeling and denoising score matching (DSM). However, the two frameworks differ substantially in model structure. EBMs primarily focus on generative quality and typically employ black-box neural networks to learn an opaque energy function with little regard for interpretability. In contrast, BL is built on the utility maximization problem (UMP) and its equivalence to penalty formulations, yielding a principled and scientifically grounded framework. Its architecture is composed of intrinsically interpretable blocks, each of which can be explicitly expressed in symbolic form as a UMP—a foundational paradigm in behavioral science and a universal optimization framework. These properties enable BL to jointly achieve high predictive performance, intrinsic interpretability, and identifiability, thereby supporting scientifically credible modeling that extends beyond mere generative capability.

\section{Acknowledgements}
We would like to thank Prof.~Dr.~Philipp Hennig, Shu Liu, and Prof.~Dr.~Sen Geng for their helpful discussions and valuable suggestions. 
We are also grateful to participants of the Xi’an Jiaotong University seminar 
for their constructive feedback.

We acknowledge the public computational resources provided by the University of Tübingen.

Finally, we sincerely thank all anonymous reviewers for their insightful comments. 
In particular, we appreciate Reviewer~sGAR for the highly constructive advice. The energy conservation constraint experiment was added following a suggestion from Reviewer~sGAR.

If any errors remain, they are solely our responsibility.

\newpage
\bibliography{main}
\bibliographystyle{iclr2026_conference}
\newpage
\appendix

\section{Architecture Details} \label{app:arch}

\subsection{Learning Scheme Details}

\paragraph{Input and output of the BL function.}
We formulate BL as a direct mapping from input–output pairs to compositional utility representations:
\[
\mathrm{BL}:\ \mathcal X\times\mathcal Y \to \mathbb R^{d_{\mathrm{out}}},\qquad 
(x,y)\mapsto \mathrm{BL}(x,y)\in\mathbb R^{d_{\mathrm{out}}},
\]
where the output dimension \(d_{\mathrm{out}}\) is chosen according to the modeling choice. This formulation intentionally allows BL to return either a scalar or a vector for each \((x,y)\); the following cases are most common:

\begin{itemize}
\item Scalar per candidate (pointwise evaluation).  
Set \(d_{\mathrm{out}}=1\). Here \(\mathrm{BL}(x,y)\in\mathbb R\) is a scalar compositional utility evaluated for the single candidate \(y\). This view is natural for continuous \(y\) (regression or density estimation) or when one prefers to evaluate candidates individually.

\item Vectorized over a finite candidate set.
If \(\mathcal Y=\{y_1,\dots,y_m\}\) is finite, one can choose \(d_{\mathrm{out}}=m\) and define the vector-valued output by stacking evaluations over the candidate set:
\[
\text{BL}(x) := 
\begin{bmatrix}
\mathrm{BL}(x,y_1)\\[4pt]
\vdots\\[4pt]
\mathrm{BL}(x,y_m)
\end{bmatrix}
\in\mathbb R^m.
\]
This vectorized form is convenient for classification: it evaluates all class candidates at once and yields a single compositional utility vector per \(x\). 

\item Flexibility and equivalence.  
The scalar and vector modes are compatible: the vectorized form is simply a batch of pointwise evaluations. Conversely, a scalar pointwise evaluator can be used to assemble a vector by repeated calls over a candidate set. The choice between pointwise (scalar) and vectorized outputs is therefore an engineering choice that trades off computational efficiency and convenience.
\end{itemize}

Given a dataset \(\mathcal D=\{(x_i,y_i)\}_{i=1}^n\), training and inference may use either mode: vectorized computation where feasible (e.g., small finite \(\mathcal Y\)), or pointwise evaluation when \(\mathcal Y\) is large or continuous.

\paragraph{Conditional Gibbs model.}
Let $(x,y)\sim\mathcal D$ with $x\in\mathbb R^d$ and $y=(y^{\mathrm{disc}},y^{\mathrm{cont}})\in\mathcal Y_{\mathrm{disc}}\times\mathbb R^{m_c}$ (discrete, continuous, or hybrid). 
BL induces a conditional Gibbs distribution with temperature $\tau>0$:
\[
p_\tau(y\mid x)
=\frac{\exp\{\mathrm{BL}(x,y)/\tau\}}{Z_\tau(x)},
\qquad
Z_\tau(x)=\!\int_{\mathcal Y}\!\exp\{\mathrm{BL}(x,y')/\tau\}\,dy'.
\]
For discrete $\mathcal Y=\{y_1,\dots,y_m\}$, if we choose the vector-output formulation, we define
\[
\mathrm{BL}(x) := 
\bigl[\mathrm{BL}(x,y_1),\dots,\mathrm{BL}(x,y_m)\bigr]\in\mathbb R^m,
\]
so that the conditional distribution reduces to a softmax over this compositional utility vector:
\[
p_\tau(y=k\mid x)=\mathrm{softmax}_k\!\left(\tfrac{1}{\tau}\,\mathrm{BL}(x)\right).
\]
Behaviorally, $\tau$ encodes \emph{noisy rationality}; as $\tau\!\to 0$, 
$p_\tau(\cdot\mid x)$ concentrates on $\arg\max_y \mathrm{BL}(x,y)$,
corresponding to the deterministic optimal choice implied by the learned model.

\paragraph{Supervised, unsupervised, and generative uses.}
BL accommodates multiple regimes. 
(i) Supervised: take $x$ as input and $y$ as label. 
For discrete $y$, one may either (a) adopt the vector-output formulation, where $\mathrm{BL}(x)\in\mathbb R^m$ yields a compositional utility vector over all classes and the likelihood is given by a softmax, or (b) adopt the scalar-output formulation, where $\mathrm{BL}(x,y)$ is evaluated separately for each candidate and then normalized across classes. 
For continuous $y$, BL naturally operates in the scalar-output mode, treating $\mathrm{BL}(x,y)\in\mathbb R$ as a compositional utility field.  
(ii) Unsupervised / generative: model a marginal $p(y)\propto \exp\{\mathrm{BL}(y)/\tau\}$ (empty $x$) or a joint $p(x,y)\propto \exp\{\mathrm{BL}(x,y)/\tau\}$; sampling the Gibbs distribution yields a generator.

\paragraph{Learning objective.}
Since the response \(\mathbf{y}\) may contain both discrete and continuous components, we estimate \(\theta\) by minimizing a type-specific risk:
\[
\mathcal{L}(\theta)=
\gamma_{\mathrm d}\,\mathbb E\bigl[-\log p_\tau(y^{\mathrm{disc}}\mid x)\bigr]
\;+\;
\gamma_{\mathrm c}\,\mathbb E\Bigl\|
\nabla_{\tilde y^{\mathrm{cont}}}\log p_\tau(\tilde y^{\mathrm{cont}}\mid x)
+\sigma^{-2}(\tilde y^{\mathrm{cont}}-y^{\mathrm{cont}})
\Bigr\|^2,
\]
where the first term is cross-entropy on the discrete component and the second is denoising score matching (DSM) on the continuous component with $\tilde y^{\mathrm{cont}}=y^{\mathrm{cont}}+\varepsilon$, $\varepsilon\sim\mathcal N(0,\sigma^2 I)$. 
Set $(\gamma_{\mathrm d},\gamma_{\mathrm c})=(1,0)$ for purely discrete outputs, $(0,1)$ for purely continuous outputs, and $(>0,>0)$ for hybrids.

\subsection{Model Structure Details}

In the main text we adopted a compact notation for BL; here we present an equivalent, more explicit matrix/vector formulation that makes dimensions, linear maps, and the per-head parameterizations explicit, which is useful for formal proofs and for implementation details.

\paragraph{Fixed bases and head pre-activations.}
For a block input $z$ (specified below), let
\[
 m_u(z)\in\mathbb R^{d_u},\qquad
 m_c(z)\in\mathbb R^{d_c},\qquad
 m_t(z)\in\mathbb R^{d_t}
\]
denote fixed basis (e.g., monomial) vectors. 
Learnable linear maps produce head pre-activations:
\[
u(z):=M_u\, m_u(z)+b_u\in\mathbb R^{r_u},\quad
c(z):=M_c\, m_c(z)+b_c\in\mathbb R^{r_c},\quad
t(z):=M_t\, m_t(z)+b_t\in\mathbb R^{r_t},
\]
with $M_u\in\mathbb R^{r_u\times d_u}$, $M_c\in\mathbb R^{r_c\times d_c}$, $M_t\in\mathbb R^{r_t\times d_t}$ and optional biases $b_\bullet$.

\paragraph{Single BL block.}
A single modular block is
\begin{equation}\label{eq:bl-block-matrix}
\mathcal B(z)\;=\;
\lambda_0^\top\,\phi\!\big(u(z)\big)
\;-\;
\lambda_1^\top\,\rho\!\big(c(z)\big)
\;-\;
\lambda_2^\top\,\psi\!\big(t(z)\big),
\end{equation}
where $\lambda_0\in\mathbb R^{r_u}$, $\lambda_1\in\mathbb R^{r_c}$, $\lambda_2\in\mathbb R^{r_t}$ are learnable weights, and
\(\phi,\rho,\psi\) act coordinatewise with the roles specified in Theorem~\ref{thm:penalty-ump}
(increasing $\phi$ for utility, penalty $\rho$ for inequality violations, symmetric $\psi$ for equalities).
Identifying
\[
U_{\theta_U}(x,y)=u\bigl(z=(x,y)\bigr),\quad 
\mathcal C_{\theta_C}(x,y)=c\bigl(z=(x,y)\bigr),\quad 
\mathcal T_{\theta_T}(x,y)=t\bigl(z=(x,y)\bigr),
\]
substituting into \eqref{eq:bl-block-matrix} recovers the main-text parameterization in \eqref{eq:Bfunc}.

\paragraph{Layer of parallel blocks.}
A layer $\mathbb B_\ell$ stacks $d_\ell$ parallel copies of \eqref{eq:bl-block-matrix} with (possibly) distinct parameters $\theta_{\ell,i}$:
\[
\mathbb B_\ell(z_\ell)\;:=\;
\begin{bmatrix}
\mathcal B_{\theta_{\ell,1}}(z_\ell)\\[-1pt]
\vdots\\[-1pt]
\mathcal B_{\theta_{\ell,d_\ell}}(z_\ell)
\end{bmatrix}
\in\mathbb R^{d_\ell}.
\]
We adopt the standard layered (feedforward) form:
\[
z_1 := (x,y), \qquad z_{\ell+1} := \mathbb B_\ell(z_\ell)\quad(\ell=1,\dots,L-1),
\]
so that each layer's input is simply the previous layer's output. This is the canonical feedforward architecture.

Optionally, one may allow each layer to explicitly access the original inputs:
\[
z_1 := (x,y), \qquad z_{\ell+1} := \mathbb B_\ell\big((x,y),\,z_\ell\big).
\]

To improve trainability one may also use residual connections:
\[
z_{\ell+1} := z_\ell + \mathbb B_\ell(z_\ell).
\]

\paragraph{Shallow/Deep composition and final affine readout.}
For depth $L\ge 1$, the BL compositional utility is produced by a final learnable affine transformation of the top layer:
\begin{equation}\label{eq:bl-deep-matrix}
\mathrm{BL}(x,y)\;=\;W_L\,\mathbb B_L\bigl(z_L\bigr)+b_L,
\end{equation}
with $W_L\in\mathbb R^{1\times d_L}$ for scalar output or $W_L\in\mathbb R^{m\times d_L}$ for vector output, and bias $b_L$ of matching dimension.
The cases $L=1$ (with $d_1=1$), $L\le 2$, and $L>2$ correspond to BL(Single), BL(Shallow), and BL(Deep), respectively, exactly as described in the main text.

\subsection{Implementation Details} \label{app:implementation}

\subsubsection{Function Instantiation}

\paragraph{Default instantiation.}
In practice, we instantiate~\eqref{eq:Bfunc} with the specific choice 
$(\phi,\rho,\psi)=(\tanh,\mathrm{ReLU},|\cdot|)$:
\begin{equation}
\mathcal{B}(x, y;\theta) =
\lambda_0^{\!\top}\tanh\!\bigl(U_{\theta_U}(x, y)\bigr)
- \lambda_1^{\!\top}\mathrm{ReLU}\!\bigl(\mathcal{C}_{\theta_C}(x, y)\bigr)
- \lambda_2^{\!\top}\bigl|\mathcal{T}_{\theta_T}(x, y)\bigr|.
\label{imp:Bfunc}
\end{equation}
Here $\lambda_0,\lambda_1,\lambda_2$ are learnable nonnegative weights. 
The bounded $\tanh$ captures saturation effects and diminishing returns in the utility head~\citep{jevons2013theory}, 
while $\mathrm{ReLU}$ and $|\cdot|$ impose asymmetric (one-sided) and symmetric (two-sided) penalties for inequality and equality violations.

\paragraph{Variants and simplifications.}
Several variants of~\eqref{imp:Bfunc} are often useful:
\begin{itemize}
\item Identity utility head. 
Set $\phi=\mathrm{id}$ so the utility head uses raw polynomials:  
\[
\mathcal{B}=\lambda_0^{\!\top} U_{\theta_U}-\lambda_1^{\!\top}\mathrm{ReLU}(\mathcal{C}_{\theta_C})-\lambda_2^{\!\top}|\mathcal{T}_{\theta_T}|.
\] 
\item Smooth penalty alternatives.  
Replace $\mathrm{ReLU}$ with $\mathrm{softplus}$ to yield smooth inequality penalties, or replace $|\cdot|$ with Huber or squared penalties to modulate sensitivity near zero for equality terms.
\item Dropping heads.  
The framework is modular, so one may omit heads depending on the task:  
\begin{itemize}
\item No $T$ head: ignores symmetric deviations, yielding a constrained maximization with only inequality penalties.  
\item No $C$ head: if the $T$ head is retained, the model reduces to a maximization problem with only equality constraints; if $T$ is also removed, it becomes a fully unconstrained maximization.
\item No $U$ head: produces a pure (soft-)constraint model focusing on feasibility. 
\end{itemize}

Strikingly, removing both $U$ and $T$ leaves only piecewise-linear $\mathrm{ReLU}$ penalties; when followed by a final affine readout, the resulting architecture becomes highly similar to a standard MLP—suggesting that MLPs may be viewed as a closely related special instance within the broader BL framework.

\end{itemize}

\subsubsection{Polynomial Feature Maps and Linear Reductions}

We adopt a pragmatic default: use low-degree polynomial maps for single-block models to maximize interpretability, and use affine (degree-1) maps inside blocks for shallow/deep stacks to control parameter growth and compute.  Below we state the instantiations and give the final block formulas used in experiments.

\paragraph{BL(Single) — polynomial instantiation.}
Let \( m_D(x,y)\) denote a fixed basis of monomials up to total degree \(D\) (e.g. \(D\le 2\)):
\[
 m_D(x,y)=
\bigl[x,\;y,\;\mathrm{vec}(xx^\top),\;\mathrm{vec}(xy^\top),\;\mathrm{vec}(yy^\top),\dots\bigr]^\top.
\]
Parameterize each map as a linear map on this basis:
\[
U_{\theta_U}(x,y)=M_U\, m_D(x,y)+b_U,\quad
\mathcal C_{\theta_C}(x,y)=M_C\, m_D(x,y)+b_C,\quad
\mathcal T_{\theta_T}(x,y)=M_T\, m_D(x,y)+b_T,
\]
with learnable matrices \(M_\bullet\) and biases \(b_\bullet\). The block becomes
\[
\mathcal B(x,y;\theta)=\lambda_0^\top\phi(M_U m_D+b_U)
-\lambda_1^\top\rho(M_C m_D+b_C)
-\lambda_2^\top\psi(M_T m_D+b_T).
\]

\paragraph{BL (Shallow/Deep) — linear-by-layer instantiation.}
For stacked architectures (Shallow/Deep) we use affine maps inside each block to keep per-layer complexity low:
\[
U_{\theta_U}(x,y)=A_U\,[x;y]+b_U,\quad
\mathcal C_{\theta_C}(x,y)=A_C\,[x;y]+b_C,\quad
\mathcal T_{\theta_T}(x,y)=A_T\,[x;y]+b_T,
\]
with learnable \(A_\bullet\) and \(b_\bullet\). The corresponding block is
\[
\mathcal B(x,y;\theta)=\lambda_0^\top\phi(A_U[x;y]+b_U)
-\lambda_1^\top\rho(A_C[x;y]+b_C)
-\lambda_2^\top\psi(A_T[x;y]+b_T).
\]

\paragraph{On-demand higher-order terms.}
If diagnostics or domain knowledge indicate underfitting, we optionally augment the affine maps with selected higher-order terms or interactions. Concretely, this is done by appending a small set of monomials (e.g. \(x_i y_j\), \(x_i^2\), \(y_k^2\)) to the input vector \([x; y]\) and re-estimating the same affine maps \(A_\bullet\).  
This targeted augmentation preserves the base affine parameterization, increases expressivity only where required, and keeps both computational and statistical costs modest while retaining interpretability.

\begin{figure}[htbp] 
    \centering
    \includegraphics[width=0.8\textwidth]{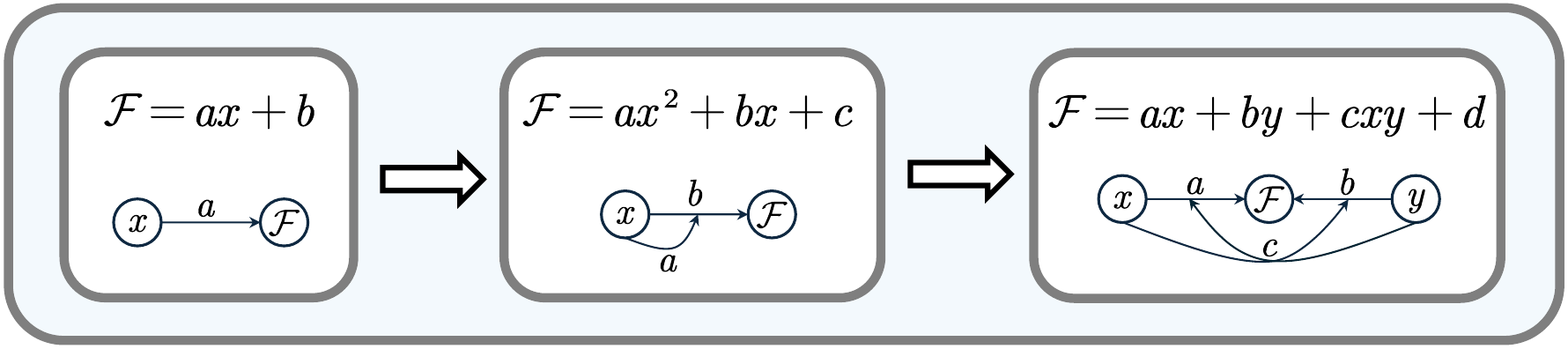}
    \caption{Visualization of polynomial feature maps as computation graphs, where nodes represent variables or outputs and edges represent their effects. 
The left panel illustrates the linear form $\mathcal F=ax+b$, in which the single edge $x\!\to\!\mathcal F$ directly encodes the marginal effect of $x$ on $\mathcal F$. 
The middle panel shows the quadratic form $\mathcal F=ax^2+bx+c$, where $x$ not only has a direct edge $x\!\to\!\mathcal F$ but also acts on its own edge (“$x\!\to\!\mathcal F$”), thereby modifying the strength of its self-effect through a higher-order contribution. 
The right panel depicts the interaction form $\mathcal F=ax+by+cxy+d$, where $y$ has an edge $y\!\to\!\mathcal F$ and, in addition, $x$ acts on this edge (“$y\!\to\!\mathcal F$”), thereby modulating the strength of $y$’s contribution to $\mathcal F$. Symmetrically, $y$ may act on the edge (“$x\!\to\!\mathcal F$”), so that each variable can reshape the other’s effect through the interaction term.}
    \label{fig:arch-interpretability}
\end{figure}

\subsubsection{Skip Connections}

Skip connections are optional in our implementation. When beneficial, we often consider two patterns tailored to BL: a DenseNet-style (concatenative) variant and a ResNet-style (additive) variant.

\paragraph{Dense skip connections (DenseNet-style, concatenation).}
This variant feeds each layer with the concatenation of all preceding representations, mirroring DenseNet \citep{huang2017densely}.
Let
\[
z_1 := [\,x;\,y\,],\qquad s_1 := \mathbb B_1(z_1)\in\mathbb R^{d_1}.
\]
For $\ell\ge2$,
\[
z_\ell := [\,x;\,y;\,s_1;\dots; s_{\ell-1}\,],\qquad 
s_\ell := \mathbb B_\ell(z_\ell)\in\mathbb R^{d_\ell}.
\]
The final compositional utility is read out as
\[
\mathrm{BL}(x,y) \;=\; W_L\, s_L + b_L.
\]
\emph{Pros.} By exposing all earlier block outputs explicitly as inputs to later blocks, dense skips preserve a transparent feature trail: one can trace which intermediate $\mathcal B$-block outputs enter downstream computations and the final affine readout. This often improves feature reuse and yields favorable interpretability at the block level.

\paragraph{Residual skip connections (ResNet-style, addition).}
This variant adds an identity (or projected) shortcut to each layer, as in ResNet \citep{he2016deep}.
Define
\[
z_1 := [\,x;\,y\,],\qquad s_1 := \mathbb B_1(z_1)\in\mathbb R^{d_1},
\]
and for $\ell\ge2$,
\[
s_\ell := \mathbb B_\ell(s_{\ell-1}) \;+\; \Pi_\ell\, s_{\ell-1},\qquad 
\Pi_\ell \in \mathbb R^{d_\ell\times d_{\ell-1}},
\]
where $\Pi_\ell$ is the identity if $d_\ell=d_{\ell-1}$, or a bias-free learnable projection otherwise. The readout is again
\[
\mathrm{BL}(x,y) \;=\; W_L\, s_L + b_L.
\]

\paragraph{Skip Connections and Interpretability.} Skip connections introduce explicit cross-layer dependency structures, a form widely studied in statistical physics and other scientific domains. Such structures enhance scientific interpretability by making long-range influences transparent. In behavioral and organizational sciences, they capture situations in which lower-level agents directly affect higher-level decision makers without routing through intermediate layers. In physics, microscopic parameters can exert direct effects on macroscopic behaviors across multiple scales. Architecturally, ResNet-style skip connections model linear cross-layer dependencies, whereas DenseNet-style connections realize concatenative (information-replicating) dependencies. These mechanisms provide flexible yet interpretable pathways for representing hierarchical interactions.

\section{Proofs of Theorems} \label{proof}

\subsection{Utility Maximization Problem (UMP)} \label{proof-of-ump}

\thmref{thm:penalty-ump} \textcolor{RoyalBlue}{(Local Exact Penalty Reformulation for UMP).}

\textit{Let $\mathcal{X}\subset\mathbb{R}^{d_x}$ and $\mathcal{Y}\subset\mathbb{R}^{d_y}$ be nonempty compact sets, and let 
\( U: \mathcal{X} \times \mathcal{Y} \to \mathbb{R} \), \( \mathcal{C}: \mathcal{X} \times \mathcal{Y} \to \mathbb{R}^m \), and \( \mathcal{T}: \mathcal{X} \times \mathcal{Y} \to \mathbb{R}^p \)
be $C^1$.
Consider the Utility Maximization Problem (UMP)
\begin{equation}\label{eq:ump-local-fixed}
\max_{\mathbf{y}\in\mathcal{Y}}\, U(\mathbf{x},\mathbf{y})
\quad\text{s.t.}\quad
\mathcal{C}(\mathbf{x},\mathbf{y})\le 0,\;\;
\mathcal{T}(\mathbf{x},\mathbf{y})=0.
\end{equation}
Assume there exists a feasible point $\mathbf{y}^\star\in\mathrm{int}(\mathcal{Y})$ which is a strict local maximizer of
\eqref{eq:ump-local-fixed} and  the Han--Mangasarian constraint qualification {\rm(2.1)} holds at $\mathbf{y}^\star$
(in the notation of \cite{han1979exact}).
Let $\phi:\mathbb{R}\to\mathbb{R}$ be strictly increasing and $C^1$, and define
$\rho(z):=\max\{z,0\}$ and $\psi(z):=|z|$ (componentwise on $\mathbb{R}^m$ and $\mathbb{R}^p$).
Then there exist $\lambda_0>0$, $\lambda_1\in\mathbb{R}^m_{++}$, and $\lambda_2\in\mathbb{R}^p_{++}$ such that
$\mathbf{y}^\star$ is a local maximizer of
\begin{equation}\label{eq:penalized-vector-objective-local-fixed}
\max_{\mathbf{y}\in\mathcal{Y}}\;
\lambda_0\,\phi\!\bigl(U(\mathbf{x},\mathbf{y})\bigr)
-\lambda_1^\top \rho\!\bigl(\mathcal{C}(\mathbf{x},\mathbf{y})\bigr)
-\lambda_2^\top \psi\!\bigl(\mathcal{T}(\mathbf{x},\mathbf{y})\bigr).
\end{equation}
}

\begin{proof}
Fix $\mathbf{x}\in\mathcal{X}$ and abbreviate
\[
g(\mathbf{y}) := \mathcal{C}(\mathbf{x},\mathbf{y})\in\mathbb{R}^m,\qquad
h(\mathbf{y}) := \mathcal{T}(\mathbf{x},\mathbf{y})\in\mathbb{R}^p.
\]
Feasibility of $\mathbf{y}^\star$ means $g(\mathbf{y}^\star)\le 0$ componentwise and $h(\mathbf{y}^\star)=0$.

\smallskip
\noindent\textbf{Step 1: Convert to a constrained local minimization problem in the ambient space.}
Pick any $\lambda_0>0$ and define
\begin{equation}\label{eq:def-f-local-fixed}
f(\mathbf{y}):=-\lambda_0\,\phi\!\bigl(U(\mathbf{x},\mathbf{y})\bigr).
\end{equation}
Since $\phi$ is strictly increasing, for any $\mathbf{y}_1,\mathbf{y}_2$ we have
$U(\mathbf{x},\mathbf{y}_1)>U(\mathbf{x},\mathbf{y}_2)$ if and only if $f(\mathbf{y}_1)<f(\mathbf{y}_2)$.
Hence $\mathbf{y}^\star$ is a strict local maximizer of \eqref{eq:ump-local-fixed} if and only if $\mathbf{y}^\star$
is a strict local minimizer of
\begin{equation}\label{eq:min-form-local-fixed}
\min_{\mathbf{y}\in\mathcal{Y}}\, f(\mathbf{y})
\quad\text{s.t.}\quad g(\mathbf{y})\le 0,\ \ h(\mathbf{y})=0.
\end{equation}
Now use the interior-point assumption $\mathbf{y}^\star\in\mathrm{int}(\mathcal{Y})$:
there exists $\varepsilon_0>0$ such that $B_{\varepsilon_0}(\mathbf{y}^\star)\subset\mathcal{Y}$.
Therefore, the notion of strict local minimizer over $\mathcal{Y}$ at $\mathbf{y}^\star$ coincides with the ambient-space notion:
for any function $F$, there exists $\varepsilon\in(0,\varepsilon_0]$ such that
\[
F(\mathbf{y}^\star) < F(\mathbf{y})
\quad \forall\, \mathbf{y}\in\bigl(\mathcal{Y}\cap B_\varepsilon(\mathbf{y}^\star)\bigr)\setminus\{\mathbf{y}^\star\}
\]
if and only if
\[
F(\mathbf{y}^\star) < F(\mathbf{y})
\quad \forall\, \mathbf{y}\in B_\varepsilon(\mathbf{y}^\star)\setminus\{\mathbf{y}^\star\}.
\]
Hence $\mathbf{y}^\star$ is a strict local minimizer of the constrained problem \eqref{eq:min-form-local-fixed}
in the ambient-space sense.

Moreover, by assumption, the triple $(f,g,h)$ is continuously differentiable on a neighborhood of $\mathbf{y}^\star$.

\smallskip
\noindent\textbf{Step 2: Embed vector weights into a norm and build a Han--Mangasarian penalty.}
Define the positive part $g_+(\mathbf{y})\in\mathbb{R}^m$ componentwise by
$(g_+(\mathbf{y}))_i:=\max\{g_i(\mathbf{y}),0\}$.
Let $\lambda_1\in\mathbb{R}^m_{++}$ and $\lambda_2\in\mathbb{R}^p_{++}$ be arbitrary for the moment, and define a norm on
$\mathbb{R}^{m+p}$ by the weighted $\ell_1$-norm
\begin{equation}\label{eq:weighted-l1-local-fixed}
\|(u,v)\|_{\lambda}:=\lambda_1^\top |u|+\lambda_2^\top |v|,
\qquad (u,v)\in\mathbb{R}^m\times\mathbb{R}^p,
\end{equation}
where $|\cdot|$ is componentwise absolute value. Since $\lambda_1,\lambda_2$ have strictly positive entries,
$\|\cdot\|_\lambda$ is indeed a norm.

Choose the scalar penalty function $Q:[0,\infty)\to[0,\infty)$ as $Q(t)=t$.
Then $Q$ satisfies the penalty regularity condition (1.3) in \cite{han1979exact}, in particular $Q'(0+)=1>0$.
Define for $\alpha\ge 0$ the penalty function
\begin{equation}\label{eq:P-alpha-local-fixed}
P(\mathbf{y},\alpha):= f(\mathbf{y})
+ \alpha\,Q\!\left(\left\|\bigl(g_+(\mathbf{y}),h(\mathbf{y})\bigr)\right\|_{\lambda}\right).
\end{equation}
Expanding \eqref{eq:P-alpha-local-fixed} using \eqref{eq:weighted-l1-local-fixed} and $Q(t)=t$ yields
\begin{equation}\label{eq:P-expanded-local-fixed}
P(\mathbf{y},\alpha)
= -\lambda_0\,\phi\!\bigl(U(\mathbf{x},\mathbf{y})\bigr)
+ \alpha\Bigl[\lambda_1^\top g_+(\mathbf{y})+\lambda_2^\top |h(\mathbf{y})|\Bigr].
\end{equation}
Since $\rho(g(\mathbf{y}))=g_+(\mathbf{y})$ and $\psi(h(\mathbf{y}))=|h(\mathbf{y})|$ componentwise,
\eqref{eq:P-expanded-local-fixed} can be rewritten as
\begin{equation}\label{eq:P-expanded-rho-psi-local-fixed}
P(\mathbf{y},\alpha)
= -\lambda_0\,\phi\!\bigl(U(\mathbf{x},\mathbf{y})\bigr)
+ \alpha\Bigl[\lambda_1^\top \rho\!\bigl(g(\mathbf{y})\bigr)+\lambda_2^\top \psi\!\bigl(h(\mathbf{y})\bigr)\Bigr].
\end{equation}

\smallskip
\noindent\textbf{Step 3: Apply Han--Mangasarian Theorem 4.4.}
By Steps 1--2, the functions $f,g,h$ are $C^1$ on a neighborhood of $\mathbf{y}^\star$,
$\mathbf{y}^\star$ is a strict local minimizer (ambient-space sense) of the constrained problem \eqref{eq:min-form-local-fixed},
and the Han--Mangasarian constraint qualification (2.1) holds at $\mathbf{y}^\star$.
Therefore, by \cite[Thm.~4.4]{han1979exact}, there exists $\bar\alpha\ge 0$ such that for every $\alpha\ge\bar\alpha$,
$\mathbf{y}^\star$ is a local minimizer of $P(\cdot,\alpha)$.

\smallskip
\noindent\textbf{Step 4: Return to local maximization and ensure strictly positive vector weights.}
Choose any
\begin{equation}\label{eq:alpha-choice-fixed}
\alpha>\max\{\bar\alpha,0\},
\end{equation}
so in particular $\alpha>0$. Define the penalized maximization objective
\[
\widetilde F(\mathbf{y}) := -P(\mathbf{y},\alpha)
= \lambda_0\,\phi\!\bigl(U(\mathbf{x},\mathbf{y})\bigr)
-\alpha\,\lambda_1^\top \rho\!\bigl(g(\mathbf{y})\bigr)
-\alpha\,\lambda_2^\top \psi\!\bigl(h(\mathbf{y})\bigr).
\]
Since $\mathbf{y}^\star$ is a local minimizer of $P(\cdot,\alpha)$, it is a local maximizer of $\widetilde F$.
Finally set
\[
\lambda_1':=\alpha\,\lambda_1\in\mathbb{R}^m_{++},\qquad
\lambda_2':=\alpha\,\lambda_2\in\mathbb{R}^p_{++}.
\]
Then $\widetilde F(\mathbf{y})$ equals
\[
\lambda_0\,\phi\!\bigl(U(\mathbf{x},\mathbf{y})\bigr)
-(\lambda_1')^\top \rho\!\bigl(\mathcal{C}(\mathbf{x},\mathbf{y})\bigr)
-(\lambda_2')^\top \psi\!\bigl(\mathcal{T}(\mathbf{x},\mathbf{y})\bigr),
\]
which is precisely the objective in \eqref{eq:penalized-vector-objective-local-fixed}. 
Hence $\mathbf{y}^\star$ is a local maximizer of \eqref{eq:penalized-vector-objective-local-fixed} over $\mathcal{Y}$.
Since $\mathbf{y}^\star\in\mathrm{int}(\mathcal{Y})$, this is equivalent to local maximality in the ambient space sense.
This completes the proof.
\end{proof}

\thmref{thm:ump-general} \textcolor{RoyalBlue}{(Universality of UMP).}
\textit{Let $\mathcal X$ and $\mathcal Y$ be arbitrary nonempty sets. 
Let $f:\mathcal X\times\mathcal Y\to\mathbb R$ be an objective and let
\[
\{g_i\}_{i\in I_{\le}},\quad \{\tilde g_k\}_{k\in I_{\ge}},\quad \{h_j\}_{j\in J}
\]
be (possibly empty, countable, or uncountable) families of real–valued constraint functions on $\mathcal X\times\mathcal Y$.
For each fixed $\mathbf x\in\mathcal X$, consider the optimization problem
\begin{equation}\label{eq:P}
\sup_{\mathbf y\in\mathcal Y} f(\mathbf x,\mathbf y)\quad\text{s.t.}\quad
g_i(\mathbf x,\mathbf y)\le 0~(i\in I_{\le}),\ \ 
\tilde g_k(\mathbf x,\mathbf y)\ge 0~(k\in I_{\ge}),\ \ 
h_j(\mathbf x,\mathbf y)=0~(j\in J).
\end{equation}
Define (with the convention $\sup\varnothing:=-\infty$ and maxima taken in the extended reals)
\[
U(\mathbf x,\mathbf y):=f(\mathbf x,\mathbf y),\qquad
\mathcal C(\mathbf x,\mathbf y):=\max\Bigl\{\,0,\ \sup_{i\in I_{\le}} g_i(\mathbf x,\mathbf y),\ \sup_{k\in I_{\ge}}\bigl(-\tilde g_k(\mathbf x,\mathbf y)\bigr)\Bigr\},
\]
\[
\mathcal T(\mathbf x,\mathbf y):=\max\Bigl\{\,0,\ \sup_{j\in J}\,|h_j(\mathbf x,\mathbf y)|\Bigr\}.
\]
Then for every $\mathbf x\in\mathcal X$, problem \eqref{eq:P} is \emph{equivalent} to the utility–maximization problem
\begin{equation}\label{eq:UMP}
\sup_{\mathbf y\in\mathcal Y} U(\mathbf x,\mathbf y)\quad\text{s.t.}\quad
\mathcal C(\mathbf x,\mathbf y)\le 0,\qquad \mathcal T(\mathbf x,\mathbf y)=0,
\end{equation}
in the sense that the feasible sets of \eqref{eq:P} and \eqref{eq:UMP} coincide; hence the optimal values coincide, and whenever maximizers exist, the argmax sets coincide. 
For minimization problems, replace $U$ by $-f$.}

\begin{proof}
Fix $\mathbf x\in\mathcal X$. Let
\[
F(\mathbf x):=\Bigl\{\mathbf y\in\mathcal Y:\ g_i(\mathbf x,\mathbf y)\le 0~\forall i\in I_{\le},\ 
\tilde g_k(\mathbf x,\mathbf y)\ge 0~\forall k\in I_{\ge},\ 
h_j(\mathbf x,\mathbf y)=0~\forall j\in J\Bigr\}
\]
denote the feasible set of \eqref{eq:P}, and let
\[
\hat F(\mathbf x):=\Bigl\{\mathbf y\in\mathcal Y:\ \mathcal C(\mathbf x,\mathbf y)\le 0,\ \mathcal T(\mathbf x,\mathbf y)=0\Bigr\}
\]
denote the feasible set of \eqref{eq:UMP}. We prove that $F(\mathbf x)=\hat F(\mathbf x)$.

\smallskip
\noindent\emph{(i) $F(\mathbf x)\subseteq \hat F(\mathbf x)$.}
Let $\mathbf y\in F(\mathbf x)$. Then $g_i(\mathbf x,\mathbf y)\le 0$ for all $i\in I_{\le}$, hence
\[
\sup_{i\in I_{\le}} g_i(\mathbf x,\mathbf y)\le 0.
\]
Similarly, $\tilde g_k(\mathbf x,\mathbf y)\ge 0$ for all $k\in I_{\ge}$ implies $-\tilde g_k(\mathbf x,\mathbf y)\le 0$ for all $k$, hence
\[
\sup_{k\in I_{\ge}}\bigl(-\tilde g_k(\mathbf x,\mathbf y)\bigr)\le 0.
\]
Moreover, $h_j(\mathbf x,\mathbf y)=0$ for all $j\in J$ implies $|h_j(\mathbf x,\mathbf y)|=0$ for all $j\in J$, hence
\[
\sup_{j\in J}|h_j(\mathbf x,\mathbf y)|\le 0
\qquad\text{(with the convention $\sup\varnothing=-\infty$)}.
\]
By definition,
\[
\mathcal C(\mathbf x,\mathbf y)=\max\Bigl\{\,0,\ \sup_{i\in I_{\le}} g_i(\mathbf x,\mathbf y),\ \sup_{k\in I_{\ge}}\bigl(-\tilde g_k(\mathbf x,\mathbf y)\bigr)\Bigr\}=0,
\]
and
\[
\mathcal T(\mathbf x,\mathbf y)=\max\Bigl\{\,0,\ \sup_{j\in J}|h_j(\mathbf x,\mathbf y)|\Bigr\}=0.
\]
Thus $\mathbf y\in\hat F(\mathbf x)$.

\smallskip
\noindent\emph{(ii) $\hat F(\mathbf x)\subseteq F(\mathbf x)$.}
Let $\mathbf y\in\hat F(\mathbf x)$. Set
\[
A:=\sup_{i\in I_{\le}} g_i(\mathbf x,\mathbf y),\qquad 
B:=\sup_{k\in I_{\ge}}\bigl(-\tilde g_k(\mathbf x,\mathbf y)\bigr),\qquad
S:=\sup_{j\in J}|h_j(\mathbf x,\mathbf y)|.
\]
Then
\[
\mathcal C(\mathbf x,\mathbf y)=\max\{0,A,B\}\le 0.
\]
Since $0\le \max\{0,A,B\}$ always holds, we have $\max\{0,A,B\}=0$, and in particular $A\le 0$ and $B\le 0$.
Using the basic property of the supremum, for every $i\in I_{\le}$ we have
\[
g_i(\mathbf x,\mathbf y)\le \sup_{i\in I_{\le}} g_i(\mathbf x,\mathbf y)=A\le 0,
\]
and for every $k\in I_{\ge}$ we have
\[
-\tilde g_k(\mathbf x,\mathbf y)\le \sup_{k\in I_{\ge}}\bigl(-\tilde g_k(\mathbf x,\mathbf y)\bigr)=B\le 0,
\]
i.e., $\tilde g_k(\mathbf x,\mathbf y)\ge 0$.

Next, $\mathcal T(\mathbf x,\mathbf y)=0$ means
\[
0=\mathcal T(\mathbf x,\mathbf y)=\max\{0,S\},
\]
hence $S\le 0$. Since $|h_j(\mathbf x,\mathbf y)|\ge 0$ for every $j\in J$ and $|h_j(\mathbf x,\mathbf y)|\le S\le 0$, it follows that
\[
|h_j(\mathbf x,\mathbf y)|=0\quad\text{for all }j\in J,
\]
equivalently $h_j(\mathbf x,\mathbf y)=0$ for all $j\in J$. Therefore $\mathbf y\in F(\mathbf x)$.

\smallskip
Combining (i) and (ii) yields $F(\mathbf x)=\hat F(\mathbf x)$. Since $U(\mathbf x,\mathbf y)=f(\mathbf x,\mathbf y)$ (and for minimization problems one may equivalently optimize $-f$), the two problems optimize the same objective over the same feasible set. Consequently, their optimal values coincide, and whenever maximizers exist, their argmax sets coincide.
\end{proof}

\subsection{BL Architecture}
\label{app:UAP}
\thmref{thm:ua} \textcolor{RoyalBlue}{(Universal Approximation of BL).}
\textit{Let $\mathcal{X} \subset \mathbb{R}^d$ and $\mathcal{Y} \subset \mathbb{R}^m$ be compact sets, and let $p^\star(\mathbf{y} \mid \mathbf{x})$ be any continuous conditional density such that $p^\star(\mathbf{y} \mid \mathbf{x}) > 0$ for all $(\mathbf{x}, \mathbf{y}) \in \mathcal{X} \times \mathcal{Y}$. Then for any $\tau > 0$ and $\varepsilon > 0$, there exists a finite BL architecture (with some depth and width depending on $\varepsilon$) and a parameter $\theta^\star$ such that the Gibbs distribution
\begin{equation}
p_\tau(\mathbf{y} \mid \mathbf{x}; \theta^\star) = 
\frac{\exp\bigl(\text{BL}_{\theta^\star}(\mathbf{x}, \mathbf{y}) / \tau\bigr)}
     {\int_{\mathcal{Y}} \exp\bigl(\text{BL}_{\theta^\star}(\mathbf{x}, \mathbf{y}') / \tau\bigr) \mathrm{d}\mathbf{y}'}
     \label{equ:gibbs-of-BL}
\end{equation}
satisfies
\begin{equation}
\sup_{\mathbf{x} \in \mathcal{X}} \mathrm{KL}\bigl( p^\star(\cdot \mid \mathbf{x}) \,\|\, p_\tau(\cdot \mid \mathbf{x}; \theta^\star) \bigr) < \varepsilon.
\end{equation}} 

\begin{proof}
\textbf{Step 0 (bounded log-density).}
Define $f(\mathbf{x},\mathbf{y}) := \log p^\star(\mathbf{y}\mid\mathbf{x})$.
Since $p^\star$ is continuous and strictly positive on the compact set $\mathcal{X}\times\mathcal{Y}$, it attains a positive minimum and finite maximum.
Hence $f\in C(\mathcal{X}\times\mathcal{Y})$ and is bounded.

\textbf{Step 1 (the BL block contains a one-hidden-layer $\tanh$ network).}
Recall the elementary block
\begin{equation}
\mathcal{B}(\mathbf{x},\mathbf{y};\theta) :=
\lambda_0^{\!\top} \tanh\bigl(\mathbf{p}_u(\mathbf{x}, \mathbf{y})\bigr)
- \lambda_1^{\!\top} \mathrm{ReLU}\bigl(\mathbf{p}_c(\mathbf{x}, \mathbf{y})\bigr)
- \lambda_2^{\!\top} \bigl|\mathbf{p}_t(\mathbf{x}, \mathbf{y})\bigr|.
\label{eq:B-impl}
\end{equation}
Set $\lambda_1=\mathbf{0}$ and $\lambda_2=\mathbf{0}$.
Choose $\mathbf{p}_u(\mathbf{x},\mathbf{y})$ to be affine in $[\mathbf{x};\mathbf{y}]$, i.e.
$\mathbf{p}_u(\mathbf{x},\mathbf{y})=W[\mathbf{x};\mathbf{y}]+b\in\mathbb{R}^{k}$ for some $k\in\mathbb{N}$.
Then
\begin{equation}
\mathcal{B}(\mathbf{x},\mathbf{y};\theta)=\lambda_0^\top \tanh\!\bigl(W[\mathbf{x};\mathbf{y}] + b\bigr),
\label{eq:shallow-tanh}
\end{equation}
which is a standard one-hidden-layer $\tanh$ network on the compact domain $\mathcal{X}\times\mathcal{Y}$.

If $\lambda_0\in\mathbb{R}^{k}$ is unconstrained, \eqref{eq:shallow-tanh} is the classical universal approximation class.
If instead one imposes $\lambda_0\ge \mathbf{0}$ componentwise, the same expressivity is retained because $\tanh$ is odd:
for any scalar $a\in\mathbb{R}$, write $a=a^+-a^-$ with $a^\pm\ge 0$, and note
$
a\tanh(h)=a^+\tanh(h)+a^-\tanh(-h).
$
Since $-h$ is affine whenever $h$ is affine, negative coefficients can be realized by duplicating hidden units and keeping
the corresponding output weights nonnegative. Thus, up to a constant-factor increase in width, the block class contains
signed linear combinations of $\tanh$ units.

\textbf{Step 2 (uniform approximation of the target energy).}
By the universal approximation theorem for single-hidden-layer networks with nonpolynomial activation (e.g., $\tanh$),
for any $\delta>0$ there exist a width $k$ and parameters $\theta$ such that
\begin{equation}
\sup_{(\mathbf{x},\mathbf{y})\in\mathcal{X}\times\mathcal{Y}}
\Bigl|\mathcal{B}(\mathbf{x},\mathbf{y};\theta)-\tau f(\mathbf{x},\mathbf{y})\Bigr|
<\delta.
\label{eq:unif-energy}
\end{equation}
Define $g(\mathbf{x},\mathbf{y}) := \mathcal{B}(\mathbf{x},\mathbf{y};\theta)/\tau$ and $\eta := \delta/\tau$.
Then \eqref{eq:unif-energy} is equivalent to
\begin{equation}
\sup_{(\mathbf{x},\mathbf{y})\in\mathcal{X}\times\mathcal{Y}}
\bigl|g(\mathbf{x},\mathbf{y})-f(\mathbf{x},\mathbf{y})\bigr|
<\eta.
\label{eq:unif-log}
\end{equation}

\textbf{Step 3 (uniform KL control).}
For each $\mathbf{x}\in\mathcal{X}$, define
\begin{equation}
q(\mathbf{y}\mid\mathbf{x})
:=
\frac{\exp\bigl(g(\mathbf{x},\mathbf{y})\bigr)}
{\int_{\mathcal{Y}}\exp\bigl(g(\mathbf{x},\mathbf{y}')\bigr)\,d\mathbf{y}'}.
\label{eq:q-def}
\end{equation}
The normalizer in \eqref{eq:q-def} is finite because $g$ is continuous and $\mathcal{Y}$ is compact.
Let
$
Z_g(\mathbf{x}) := \int_{\mathcal{Y}}\exp\bigl(g(\mathbf{x},\mathbf{y}')\bigr)\,d\mathbf{y}'.
$
Since $p^\star(\cdot\mid\mathbf{x})$ is a density, without loss of generality
we may normalize the energy so that $\int_{\mathcal{Y}} e^{f(\mathbf{x},\mathbf{y}')}d\mathbf{y}' = 1$.
From \eqref{eq:unif-log}, for all $(\mathbf{x},\mathbf{y})$,
\[
e^{-\eta}
\le
\frac{e^{g(\mathbf{x},\mathbf{y})}}{e^{f(\mathbf{x},\mathbf{y})}}
\le
e^{\eta}.
\]
Integrating over $\mathbf{y}\in\mathcal{Y}$ yields
\begin{equation}
e^{-\eta}\le Z_g(\mathbf{x})\le e^{\eta},
\qquad\text{hence}\qquad
|\log Z_g(\mathbf{x})|\le \eta,
\quad \forall\,\mathbf{x}\in\mathcal{X}.
\label{eq:Zg-bound}
\end{equation}
Moreover,
\[
\log\frac{p^\star(\mathbf{y}\mid\mathbf{x})}{q(\mathbf{y}\mid\mathbf{x})}
=
\log\frac{e^{f(\mathbf{x},\mathbf{y})}}{e^{g(\mathbf{x},\mathbf{y})}/Z_g(\mathbf{x})}
=
\bigl(f(\mathbf{x},\mathbf{y})-g(\mathbf{x},\mathbf{y})\bigr)+\log Z_g(\mathbf{x}).
\]
Taking expectation under $p^\star(\cdot\mid\mathbf{x})$ and using \eqref{eq:unif-log} and \eqref{eq:Zg-bound} gives
\begin{align}
\mathrm{KL}\!\left(p^\star(\cdot\mid\mathbf{x})\,\big\|\,q(\cdot\mid\mathbf{x})\right)
&=
\mathbb{E}_{p^\star(\cdot\mid\mathbf{x})}\!\left[f(\mathbf{x},\mathbf{Y})-g(\mathbf{x},\mathbf{Y})\right]
+\log Z_g(\mathbf{x})
\nonumber\\
&\le
\eta+\eta
=
2\eta,
\qquad \forall\,\mathbf{x}\in\mathcal{X}.
\label{eq:KL-bound}
\end{align}

\textbf{Step 4 (choose $\delta$ and embed into BL).}
Choose $\delta := \varepsilon\tau/4$, so that $\eta=\delta/\tau=\varepsilon/4$.
Then \eqref{eq:KL-bound} implies
\[
\sup_{\mathbf{x}\in\mathcal{X}}
\mathrm{KL}\!\left(p^\star(\cdot\mid\mathbf{x})\,\big\|\,q(\cdot\mid\mathbf{x})\right)
\le 2\eta
= \varepsilon/2
<\varepsilon.
\]
Finally, the density $q(\cdot\mid\mathbf{x})$ equals the Gibbs distribution \eqref{equ:gibbs-of-BL} with energy
$\mathrm{BL}_{\theta^\star}(\mathbf{x},\mathbf{y}) := \mathcal{B}(\mathbf{x},\mathbf{y};\theta)$ (a finite BL architecture containing a single block),
and temperature $\tau$.
This proves the claim.
\end{proof}

\subsection{Identifiable Behavior Learning (IBL)}
\label{proof:IBL}
\subsubsection{Setup and Assumption}

\paragraph{Input--output space and data.}
Let $\mathcal X \subset \mathbb R^{d_x}$ and $\mathcal Y \subset \mathbb R^{d_y}$ be compact sets. 
Assume the data distribution $P_{X,Y}$ is supported on $\mathcal X \times \mathcal Y$, and that there exists a point $z_0 = (x_0,y_0)$ in the interior of its support; that is, some open neighborhood of $z_0$ has positive $P_{X,Y}$-measure. 
All expectations are taken with respect to $P_{X,Y}$ unless otherwise specified.

\paragraph{Parameter space and polynomial feature maps.}
The parameter space factorizes as
\[
\Theta \;:=\; \Theta_U \times \Theta_C \times \Theta_T \times \mathcal W_{\circ}.
\]
For $\theta_U \in \Theta_U$, $\theta_C \in \Theta_C$, and $\theta_T \in \Theta_T$, 
we define polynomial feature maps
\[
p_u : \mathcal X \times \mathcal Y \to \mathbb R^{d_u}, \quad
p_c : \mathcal X \times \mathcal Y \to \mathbb R^{d_c}, \quad
p_t : \mathcal X \times \mathcal Y \to \mathbb R^{d_t},
\]
each of fixed degree and injective in their coefficients (i.e., distinct coefficients yield distinct functions). For a single block, $\theta_U, \theta_C, \theta_T$ correspond to the parameters of the $U$, $C$, and $T$ terms together with their respective external multipliers (e.g., penalty weights $\lambda$). 
For a deep network composed of multiple blocks, $\theta = (\theta_U,\theta_C,\theta_T)$ denotes the collection of all block-level parameters across the hierarchy, where $\theta_U$ aggregates the parameters of all $U$-terms, $\theta_C$ those of all $C$-terms, and $\theta_T$ those of all $T$-terms (each including their associated multipliers).

The output component $\mathcal W_{\circ}$ corresponds to the affine transformation in the final layer: 
$\mathcal W_{\circ} = \mathbb R^{d'}$ for single-output prediction, 
and $\mathcal W_{\circ} = \mathbb R^{d' \times m}$ for $m$-way classification, 
where $d'$ is the output dimension induced by the preceding network, whether shallow or deep.

\paragraph{Identifiable base block.}
Let $\lambda_0 \in \mathbb R^{d_u}$, $\lambda_1 \in \mathbb R^{d_c}$, and $\lambda_2 \in \mathbb R^{d_t}$ 
denote nonnegative weight vectors, treated as learnable parameters. 
We instantiate the identifiable modular block
\begin{equation}\label{eq:ibl-block}
\mathcal B^{\mathrm{id}}(x,y;\theta)
\;=\;
\lambda_0^{\!\top}\tanh\!\bigl(p_u(x,y)\bigr)
\;-\;
\lambda_1^{\!\top}\mathrm{softplus}\!\bigl(p_c(x,y)\bigr)
\;-\;
\lambda_2^{\!\top}\bigl(p_t(x,y)\bigr)^{\odot 2},
\end{equation}
where $(\cdot)^{\odot 2}$ denotes elementwise squaring. 
By construction, the $\tanh$ and $\mathrm{softplus}$ heads are strictly monotone in their arguments, 
while the quadratic head is even.  

We assume that each polynomial feature map $\mathbf p_\bullet(x,y)$ contains no nonzero monomial independent of $y$; 
that is, no feature is a pure function of $x$ or a constant. 
This ensures that $\mathcal B^{\mathrm{id}}(x, y)$ is nonconstant in $y$ unless all weights vanish.

\paragraph{Architectures.}
We implement IBL in three architectural forms, each producing a compositional utility function over $(x,y)$.

\begin{itemize}
\item IBL(Single):
A single block is used as the compositional utility,
\[
\mathrm{IBL}(x,y) := \mathcal B^{\mathrm{id}}(x,y).
\]

\item IBL(Shallow):
Shallow IBL uses one or two stacked layers of parallel blocks. 
For instance, a first layer
\[
\mathbb B^{\mathrm{id}}_{1}(x,y) := [\,\mathcal B^{\mathrm{id}}_{{1,1}}(x,y),\ldots,\mathcal B^{\mathrm{id}}_{{1,d_1}}(x,y)\,]^\top \in \mathbb R^{d_1}
\]
feeds into a bias-free affine map
\[
\mathrm{IBL}_{\text{Shallow}}(x,y) := \mathbf W^{\circ}_1 \, \mathbb B^{\mathrm{id}}_1(x,y),
\]
where $\mathbf W^{\circ}_1 \in \mathbb R^{m\times d_1}$ for classification and $\mathbf W^{\circ}_1 \in \mathbb R^{1\times d_1}$ for scalar output.

\item IBL(Deep):
Deep IBL extends the construction to depth $L > 2$, recursively defined as
\[
\mathrm{IBL}(x,y) := \mathbf W^{\circ}_L \cdot 
\mathbb B^{\mathrm{id}}_L\big(
\cdots \,\mathbb B^{\mathrm{id}}_2(
\mathbb B^{\mathrm{id}}_1(x,y))
\cdots \big),
\]
where each $\mathbb B^{\mathrm{id}}_\ell$ stacks parallel blocks 
$\mathcal B^{\mathrm{id}}_{{\ell,i}}(x,y)$, 
and $\mathbf W^{\circ}_L$ is a bias-free affine transformation. 
The cases $L=1$ and $L=2$ recover the Single and Shallow architectures, respectively.
\end{itemize}

\paragraph{Induced conditional model.}
Let $\mathrm{IBL}(x,y)$ denote the compositional utility function produced by the chosen architecture 
(Single, Shallow, or Deep). 
It induces the conditional Gibbs distribution
\begin{align}
\text{(Discrete $y\in[m]$)}\qquad
p(y\mid x)
&=\mathrm{softmax}_y\{\mathrm{IBL}(x,y)\},
\label{eq:disc-model}
\\
\text{(Continuous $y$)}\qquad
p(y\mid x)
&=\frac{\exp\{\mathrm{IBL}(x,y)/\tau\}}
{\int_{\mathcal Y}\exp\{\mathrm{IBL}(x,\tilde y)/\tau\}\,d\tilde y},
\quad \tau>0 \text{ fixed}.
\label{eq:cont-model}
\end{align}
Here $\tau$ is a fixed temperature parameter. 
Thus, IBL predicts by defining a compositional utility landscape whose Gibbs distribution governs $y$ given $x$.

\paragraph{Quotient parameter space.}

\begin{definition}[Symmetry Quotient Space]
Define the equivalence relation $\sim$ on $\Theta$ as the smallest relation satisfying
\[
\theta_t \sim \theta_t'
\quad\Longleftrightarrow\quad
p_t^{(i)}(x,y;\theta_t^{(i)})^{\odot 2}
=
p_t^{(i)}(x,y;\theta_t'^{(i)})^{\odot 2}
\quad \text{for all } i \text{ and } (x,y).
\]
The corresponding quotient space is
\[
\bar\Theta := \Theta/\!\sim.
\]
\end{definition}

\noindent\textit{Explanation.}  
The $T$-component is designed to encode equality constraints, which are symbolically equations. Flipping the overall sign of such a constraint leaves the equation unchanged, so different parameterizations that differ only by sign should be regarded as equivalent.

\begin{definition}[Scale-Invariant Quotient Space]
Define the equivalence relation $\approx$ on $\bar\Theta$ by
\[
\bar\theta \approx \bar\theta'
\quad\Longleftrightarrow\quad
\exists\,c>0 \;\text{ such that }\;
\mathsf{s}(x,y;\bar\theta) = c\,\mathsf{s}(x,y;{\bar\theta'}).
\]
The scale-invariant quotient space is then given by
\[
\widetilde\Theta := \bar\Theta / \!\approx.
\]
\end{definition}

\noindent\textit{Explanation.}  
In classification, predictions depend only on relative compositional utility differences between candidate labels.  
From a technical perspective, quotienting out global shifts or uniform scalings is necessary: without this identification, the cross-entropy loss admits redundant parameterizations that differ only by such transformations.  
At the same time, this quotient is natural and harmless, since it does not eliminate informative ratios between classes but merely discards absolute levels or scales that play no role in the softmax decision rule.

\paragraph{Loss Functions.}
We adopt a hybrid loss to simultaneously accommodate discrete and continuous outputs. 
Specifically, cross-entropy (CE) is applied to discrete targets, while denoising score matching (DSM) is applied to continuous targets. 
Let $\gamma_c,\gamma_d \geq 0$ with $\gamma_c+\gamma_d>0$. 
The population risk, defined on the quotient parameter space, is given by
\begin{equation}\label{eq:loss}
\mathcal{M}(\bar\theta)
\;=\;
\gamma_d\,\mathbb{E}\!\left[-\log p_\theta(Y\mid X)\right]
\;+\;
\gamma_c\,\mathbb{E}\!\left[\mathcal{S}_{\mathrm{DSM}}(\theta;X)\right],
\qquad 
\theta \in \pi^{-1}(\bar\theta),
\end{equation}
where $\pi$ denotes the canonical projection from the original parameter space onto its quotient.

For continuous outputs $Y\in\mathcal{Y}\subseteq\mathbb{R}^{d_y}$, 
DSM is implemented by perturbing the target with additive Gaussian noise $\tilde{Y}=Y+\varepsilon$, $\varepsilon\sim\mathcal{N}(0,\sigma^2 I)$, 
and penalizing the squared discrepancy between the model score and the corresponding denoising score:
\begin{equation}\label{eq:dsm}
\mathcal{S}_{\mathrm{DSM}}(\theta;X)
=
\frac{1}{2\sigma^2}\,
\mathbb{E}_{\varepsilon}\!\left[
\Bigl\|
\nabla_{\tilde{y}}\log p_\theta(\tilde{y}\mid X)
+\tfrac{1}{\sigma^2}(Y-\tilde{Y})
\Bigr\|^2
\,\Big|\, X,Y
\right].
\end{equation}

In classification-only settings we set $\gamma_c=0$ (pure CE), 
while in regression-only settings we set $\gamma_d=0$ (pure DSM).

For a single observation $Z=(X,Y)$, we define the \emph{per-sample loss} as
\begin{equation} \label{eq:per-sample-loss}
    \ell(\theta;Z)
:=\gamma_d\,\big[-\log p_\theta(Y\mid X)\big]
+ \gamma_c\,\mathcal{S}_{\mathrm{DSM}}(\theta;X).
\end{equation}
The empirical criterion then takes the standard $M$-estimation form
\begin{equation} \label{eq:empirical-criterion}
\hat{ Q}_n(\theta) \;=\; \frac{1}{n}\sum_{i=1}^n \ell(\theta;Z_i),
\qquad Z_i=(X_i,Y_i).
\end{equation}
\paragraph{Key Assumptions.}
\begin{assumption}[Global Atomic Independence and Injectivity]
\label{ass:global-atom}
Let $\bar\Psi$ be the atomic parameter quotient.
\begin{enumerate}
\item Injectivity on the quotient. The map $\bar\Psi\to\mathbb R^{\mathcal X\times\mathcal Y}$, $\bar\psi\mapsto g_{\bar\psi}$, is injective.
\item Linear Independence. Atomic linear independence. Any finite collection of pairwise distinct atoms $\{g_{\bar\psi_i}\}_{i=1}^r$ with $\bar\psi_i\in\bar\Psi$ is linearly independent in $\mathbb R^{\mathcal X\times\mathcal Y}$.
\item Minimality. In all model instances we only consider minimal representations: no duplicate atoms and its corresponding linear coefficient in the mixture is nonzero.
\item Canonical ordering. For each model instance, a fixed canonical ordering is imposed on the atom list.
\end{enumerate}
\end{assumption}

\noindent\textit{Explanation.}  
Assumption~\ref{ass:global-atom} treats each identifiable block $\mathcal B^{\mathrm{id}}$ as an \emph{atomic} building unit and imposes four structural requirements on representations built from these atoms. Together, these four conditions define a non-ambiguous, non-redundant, and canonical algebra of atoms: after quotienting by the natural symmetries, every model constructed from $\mathcal B$-blocks admits a unique minimal representation (up to the prescribed equivalences). This structural regularity is the foundation on which identifiability statements are built: it guarantees that observing the model output (or the objective it optimizes) allows one, in principle, to recover the underlying atomic components and their coefficients in the appropriate quotient sense.

\noindent\textit{Practical remark.}  
In practice, these conditions can be encouraged or approximately enforced in two complementary ways. First, the design of atomic classes (choice of polynomial bases, interaction terms, and activation heads) can be chosen so that injectivity and linear independence are more plausible by construction. Second, model selection and post-processing (e.g., pruning atoms with near-zero coefficients, enforcing a deterministic tie-breaking rule for ordering) can be applied after training to realize minimality and canonical ordering. These practical measures make the theoretical assumptions operationally meaningful in empirical applications.

\subsubsection{Proof of Theorems}

\begin{lemma}[Identifiability of Linear Combinations]
\label{lem:lin-comb-id}
Let $Z$ be a set.  
For each $j=1,\dots,m$, let $\Phi_j$ be a parameter space and define atomic functions
\[
g_\psi := f(\cdot;\phi_j), 
\qquad \psi=(j,\phi_j)\in\Psi,
\]
where $\Psi := \bigsqcup_{j=1}^m \Phi_j$ is the disjoint union.  
Let $\bar\Psi$ be the quotient atomic parameter space, and denote its elements by $\bar\psi\in\bar\Psi$.  

Define the quotient parameter space of the model as
\[
\bar\Xi := \prod_{j=1}^m \bigl((\mathbb{R}\setminus\{0\}) \times \bar\Psi \bigr),
\qquad
\bar\xi = ((a_1,\bar\psi_1),\dots,(a_m,\bar\psi_m)).
\]
The associated linear combination model is
\[
S_{\bar\xi} := \sum_{j=1}^m a_j g_{\bar\psi_j}.
\]

By virtue of Assumption~\ref{ass:global-atom},  
the model is identifiable in the quotient parameter space $\bar\Xi$:  
if $S_{\bar\xi} \equiv S_{\bar\xi'}$ on $Z$, then $\bar\xi = \bar\xi'$.
\end{lemma}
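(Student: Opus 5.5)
The plan is to reduce the equality $S_{\bar\xi}\equiv S_{\bar\xi'}$ to a single vanishing linear relation among finitely many atoms, and then read off the parameters from atomic linear independence and injectivity (Assumption~\ref{ass:global-atom}). Write $\bar\xi=((a_j,\bar\psi_j))_{j=1}^m$ and $\bar\xi'=((a'_j,\bar\psi'_j))_{j=1}^m$, and suppose $S_{\bar\xi}=S_{\bar\xi'}$ pointwise on $Z$. Subtracting gives
\[
\sum_{j=1}^m a_j g_{\bar\psi_j}-\sum_{j=1}^m a'_j g_{\bar\psi'_j}\equiv 0 .
\]

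\textbf{Step 1 (merge the two atom lists).} Let $\{\bar\chi_1,\dots,\bar\chi_r\}$ be the set of distinct elements of $\{\bar\psi_1,\dots,\bar\psi_m,\bar\psi'_1,\dots,\bar\psi'_m\}$. By injectivity (item 1), distinct quotient parameters give distinct functions, so the $g_{\bar\chi_k}$ are pairwise distinct atoms. Minimality (item 3) says that within each representation no atom is repeated. Hence each $\bar\chi_k$ occurs at most once on each side. The relation therefore becomes $\sum_{k=1}^r c_k g_{\bar\chi_k}\equiv 0$, where $c_k=a_j-a'_{j'}$ if $\bar\chi_k=\bar\psi_j=\bar\psi'_{j'}$, $c_k=a_j$ if $\bar\chi_k$ occurs only in $\bar\xi$, and $c_k=-a'_{j'}$ if it occurs only in $\bar\xi'$.

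\textbf{Step 2 (apply linear independence).} By item 2 the finite family $\{g_{\bar\chi_k}\}$ is linearly independent, so every $c_k=0$. Because every coefficient $a_j,a'_{j'}$ is nonzero (the factor $\mathbb{R}\setminus\{0\}$ in $\bar\Xi$), no atom can occur on only one side. Thus $\{\bar\psi_j\}=\{\bar\psi'_j\}$ as sets, and matched atoms carry equal coefficients.

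\textbf{Step 3 (fix the ordering).} We now know the two representations agree as multisets of pairs $\{(a_j,\bar\psi_j)\}$. To conclude $\bar\xi=\bar\xi'$ as ordered tuples, invoke the canonical ordering (item 4): both atom lists are sorted by the same fixed rule, so $\bar\psi_j=\bar\psi'_j$ and therefore $a_j=a'_j$ for each $j$.

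The main obstacle is Step 1. One must be careful that equality of atoms is taken in the quotient $\bar\Psi$, so that ``distinct parameters'' really means ``distinct functions'' before linear independence is invoked. This is exactly where injectivity on the quotient is needed. I would handle it by performing all merging at the level of $\bar\Psi$ and applying item 1 before item 2.
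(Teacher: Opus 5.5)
Your proposal is correct and follows the paper's proof. Like the paper, you merge the two atom lists in $\bar\Psi$, form net coefficients, kill them by atomic linear independence, use minimality and the nonzero coefficients to match the coefficient--atom pairs, and conclude with the canonical ordering. The one addition is that you invoke injectivity (item 1) explicitly to pass from distinct quotient parameters to distinct functions; the paper leaves this implicit, and it already follows from item 2 if ``pairwise distinct atoms'' is read at the level of $\bar\Psi$.
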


\begin{proof}
Suppose $S_{\bar\xi} \equiv S_{\bar\xi'}$ on $Z$, i.e.,
\[
\sum_{j=1}^m a_j\, g_{(j,\phi_j)} - \sum_{j=1}^m a'_j\, g_{(j,\phi'_j)} \equiv 0.
\]
Let $\mathcal U$ be the set of \emph{distinct} atoms in the quotient $\bar\Psi$ that appear on either side, and for each $\bar\psi\in\mathcal U$ let
\[
\beta(\bar\psi)
:= \sum_{\substack{j:\,[j,\phi_j]=\bar\psi}} a_j
  \;-\!\!\!\!\!\sum_{\substack{j':\,[j',\phi'_{j'}]=\bar\psi}} a'_{j'}
\]
be the net coefficient of $g_{\bar\psi}$. Then
\[
\sum_{\bar\psi\in\mathcal U} \beta(\bar\psi)\, g_{\bar\psi}\;\equiv\;0.
\]
By the \emph{linear independence} condition (Assumption~\ref{ass:global-atom}:2) of pairwise distinct atoms in $\bar\Psi$, we must have $\beta(\bar\psi)=0$ for all $\bar\psi\in\mathcal U$.

Furthermore, by the \emph{Minimality} requirement (Assumption~\ref{ass:global-atom}:3), each $\bar\psi$ appears exactly once on each side and with nonzero coefficient.  
Thus the two sides must contain the exact same list of coefficient–atom pairs $\{(a_j,\bar\psi_j)\}_{j=1}^m$, and since a canonical ordering is imposed (Assumption~\ref{ass:global-atom}:4), it follows that
\[
\bar\xi = \bar\xi'.
\]
\end{proof}

\begin{theorem}[Identifiability of IBL(Single)]
\label{thm:ibl-single-id}
The IBL(Single) architecture uses the atom set
\[
\bigl\{\, \tanh(p_{u,i}),\ \operatorname{softplus}(p_{c,i}),\ (p_{t,i})^2
:\ i=1,\dots,d_u;\ i=1,\dots,d_c;\ i=1,\dots,d_t \,\bigr\}.
\]
Under Assumption~\ref{ass:global-atom},  
the model is identifiable in the quotient space $\bar\Theta$:  
if $\mathcal{B}^\mathrm{id}_\theta \equiv \mathcal{B}^\mathrm{id}_{\theta'}$ on $\mathcal{X}\times\mathcal{Y}$, 
then $\theta = \theta'$ in $\bar \Theta$.
\end{theorem}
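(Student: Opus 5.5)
The plan is to reduce the claim to Lemma~\ref{lem:lin-comb-id}. First I will write the IBL(Single) utility as a signed linear combination of atoms,
\[
\mathcal B^{\mathrm{id}}_\theta=\sum_{i=1}^{d_u}\lambda_{0,i}\,\tanh(p_{u,i})+\sum_{i=1}^{d_c}(-\lambda_{1,i})\,\operatorname{softplus}(p_{c,i})+\sum_{i=1}^{d_t}(-\lambda_{2,i})\,(p_{t,i})^2 .
\]
Each atom is indexed by a type label $j\in\{u,c,t\}$ together with the polynomial coefficient vector of the corresponding head. In the notation of Lemma~\ref{lem:lin-comb-id}, this gives $\Psi=\Phi_u\sqcup\Phi_c\sqcup\Phi_t$ with $f(\cdot;\phi_u)=\tanh(p_u(\cdot;\phi_u))$, and similarly for the other two types. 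By Assumption~\ref{ass:global-atom}(3), I may assume the representation is minimal, so atoms with zero weight are dropped and every retained coefficient $a_j\in\{\lambda_{0,i},-\lambda_{1,i},-\lambda_{2,i}\}$ is nonzero. This places $\theta$ in the space $\bar\Xi$ of the lemma.

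Second, I will identify the atomic quotient $\bar\Psi$ with the symmetry quotient defining $\bar\Theta$. For $\tanh$ and softplus atoms, strict monotonicity makes $\tanh(p)=\tanh(p')$ force $p=p'$ pointwise. The assumed coefficient-injectivity of the polynomial feature maps then forces equal coefficients, so no quotient is needed for these atoms. For the quadratic atoms, $(p_t)^2=(p_t')^2$ is exactly the relation $\sim$ defining $\bar\Theta$. Hence the map taking $\theta$ to its list of $(a_j,\bar\psi_j)$ is well defined on $\bar\Theta$ and injective there. The remaining parts of $\theta$, namely the weights $\lambda$, enter only as the $a_j$.

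Third, I will apply Lemma~\ref{lem:lin-comb-id}. If $\mathcal B^{\mathrm{id}}_\theta\equiv\mathcal B^{\mathrm{id}}_{\theta'}$ on $\mathcal X\times\mathcal Y$, then $S_{\bar\xi}\equiv S_{\bar\xi'}$. Linear independence of distinct atoms (Assumption~\ref{ass:global-atom}(2)) together with minimality and the canonical ordering gives $\bar\xi=\bar\xi'$. So the coefficient lists agree entry by entry and the atoms agree in $\bar\Psi$. Because atom types are disjoint labels, a $\tanh$ atom can never be matched to a softplus or quadratic atom. Comparing coefficients then recovers $\lambda_0=\lambda_0'$, $\lambda_1=\lambda_1'$ and $\lambda_2=\lambda_2'$, using the signs fixed by nonnegativity. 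Comparing atoms recovers $\theta_U,\theta_C$ exactly and $\theta_T$ up to $\sim$, that is, $\theta=\theta'$ in $\bar\Theta$.

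I expect the main obstacle to be the bookkeeping in this second step: making sure the atomic quotient $\bar\Psi$ coincides with $\bar\Theta$ and is not coarser. Two issues need attention. One is that Assumption~\ref{ass:global-atom}(1), injectivity $\bar\psi\mapsto g_{\bar\psi}$, must be used to rule out, say, a $\tanh$ atom coinciding as a function with a softplus atom. The other is handling coefficients that vanish, which minimality excludes by convention. I will state both points explicitly rather than try to derive them from the functional forms.
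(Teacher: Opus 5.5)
Your proposal is correct and follows essentially the same route as the paper: write $\mathcal B^{\mathrm{id}}_\theta$ as a linear combination of the $\tanh$, softplus and squared atoms, then apply Lemma~\ref{lem:lin-comb-id} using the linear independence, minimality and canonical-ordering parts of Assumption~\ref{ass:global-atom}. Your extra bookkeeping (the signs $-\lambda_1,-\lambda_2$, and checking that the atomic quotient reduces to the sign-flip relation defining $\bar\Theta$) only spells out what the paper's short proof leaves implicit; one wording fix is to phrase minimality as restricting attention to parameters whose coefficients are all nonzero, rather than as ``dropping'' zero-weight atoms, because a dropped atom's polynomial coefficients would not be recovered.
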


\begin{proof}
Write
\[
\mathcal{B}^\mathrm{id}_\theta
= \sum_{j=1}^{m} a_j\, f(\cdot;\phi_j),
\qquad
m:=d_u+d_c+d_t,
\]
where each $f(\cdot;\phi_j)$ is one of the atoms 
$\tanh(p_{u,i})$, $\operatorname{softplus}(p_{c,i})$, or $(p_{t,i})^2$,  
and $a_j$ is the corresponding entry in $(\lambda_0,\lambda_1,\lambda_2)$, with a fixed ordering over all indices.

If $\mathcal{B}^\mathrm{id}_\theta \equiv \mathcal{B}^\mathrm{id}_{\theta'}$ on $\mathcal{X}\times\mathcal{Y}$,  
then Lemma~\ref{lem:lin-comb-id} and Assumption~\ref{ass:global-atom} imply that all atoms and coefficients must agree in the quotient atomic space $\bar\Psi$.  
Since the ordering is fixed, this implies $\theta = \theta'$ in $\bar\Theta$.
\end{proof}

\begin{theorem}[Identifiability of IBL(Shallow)]
\label{thm:ibl-shallow-id}
The IBL(Shallow) architecture uses the atom set
\[
\bigl\{\, \mathcal{B}^{\mathrm{id}}_{\theta_{1,j}}(x, y) \,\bigr\}_{j=1}^{d_1},
\]
where each $\mathcal{B}^{\mathrm{id}}_{\theta_{1,j}} : \mathcal{X} \times \mathcal{Y} \to \mathbb{R}$ is a single-block IBL module parametrized by $\theta_{1,j} \in \Theta_1$. The full parameter is denoted
\[
\theta := \bigl((\theta_{1,1},\dots,\theta_{1,d_1}),\ \mathbf{W}_1^\circ\bigr)
\in \Theta := (\Theta_1)^{d_1} \times \mathbb{R}^{m\times d_1}.
\]
Under Assumption~\ref{ass:global-atom}, the mapping $\theta \mapsto \mathrm{IBL}_{\text{Shallow}}$ is identifiable in the quotient space $\bar\Theta$:  
if
\[
\mathrm{IBL}_{\text{Shallow}}(x, y;\theta) \equiv \mathrm{IBL}_{\text{Shallow}}(x, y;\theta')
\quad \text{on } \mathcal{X} \times \mathcal{Y},
\]
then
\[
\theta = \theta' \quad \text{in } \bar\Theta.
\]
\end{theorem}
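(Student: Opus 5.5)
The plan is to reduce IBL(Shallow) to the linear-combination setting of Lemma~\ref{lem:lin-comb-id}, with the single-block modules $\mathcal B^{\mathrm{id}}_{\theta_{1,j}}$ as atoms, and then use Theorem~\ref{thm:ibl-single-id} to identify each atom's internal parameters. For each output coordinate $k$ (one row of $\mathbf W_1^\circ$; a single row in the scalar case) we write
\[
\mathrm{IBL}_{\text{Shallow}}(x,y;\theta)_k=\sum_{j=1}^{d_1} (\mathbf W_1^\circ)_{kj}\,\mathcal B^{\mathrm{id}}_{\theta_{1,j}}(x,y).
\]
This is exactly a model $S_{\bar\xi}$ of the form in Lemma~\ref{lem:lin-comb-id}. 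The atomic parameter is $\psi_j=(j,\theta_{1,j})$, its quotient class $\bar\psi_j$ is taken modulo the sign symmetry of the $T$-heads, and the coefficients are $a_j=(\mathbf W_1^\circ)_{kj}$.

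The first step is to invoke Assumption~\ref{ass:global-atom} at the level of these block atoms. Injectivity gives that distinct quotient classes $\bar\theta_{1,j}$ produce distinct functions. Linear independence of distinct atoms, together with minimality, lets us restrict to minimal representations: atoms are pairwise distinct and each has at least one nonzero readout coefficient. A column of $\mathbf W^\circ_1$ that is identically zero corresponds to an absent atom, and minimality excludes it.

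Now suppose $\mathrm{IBL}_{\text{Shallow}}(\cdot;\theta)\equiv\mathrm{IBL}_{\text{Shallow}}(\cdot;\theta')$.
\begin{enumerate}
\item Fix an output row $k$ and collect the distinct atom classes appearing in either representation. By linear independence, the net coefficient of every such atom vanishes, exactly as in the proof of Lemma~\ref{lem:lin-comb-id}.
\item Minimality (no duplicates) then forces each atom of $\theta$ to coincide with an atom of $\theta'$ carrying the same coefficient in row $k$.
\item Repeating this for all rows $k$ shows that the set of atoms with a nonzero column is the same for both parameters, and that the columns agree.
\item The canonical ordering pins down which index $j$ each atom occupies, so $\bar\theta_{1,j}=\bar\theta'_{1,j}$ as atoms and $\mathbf W_1^\circ=\mathbf W_1'^\circ$.
\end{enumerate}

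Finally, equality of atoms means $\mathcal B^{\mathrm{id}}_{\theta_{1,j}}\equiv\mathcal B^{\mathrm{id}}_{\theta'_{1,j}}$ on $\mathcal X\times\mathcal Y$. Theorem~\ref{thm:ibl-single-id} then upgrades this to $\theta_{1,j}=\theta'_{1,j}$ in the single-block quotient, that is, the $\lambda$'s agree and the $p_u,p_c$ coefficients agree, while the $p_t$ coefficients agree up to sign. Assembling over $j$ gives $\theta=\theta'$ in $\bar\Theta$, since $\bar\Theta$ is the product of the block quotients with the readout space.

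The step I expect to be the main obstacle is the multi-output case $m>1$. Lemma~\ref{lem:lin-comb-id} is stated for a single scalar combination, and minimality there means a nonzero coefficient. For a matrix readout, I will interpret minimality as requiring each atom to have a nonzero column. I will then apply the lemma's argument row by row, but match atoms globally across rows, so that an atom with a zero coefficient in some row still gets matched through another row. A secondary subtlety is the two-layer case $L=2$: there the second-layer blocks take first-layer outputs as inputs. I would handle it by treating each second-layer block, as a function of $(x,y)$, as the atom, and then recursing one level.
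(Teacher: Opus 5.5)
Your proposal is correct and follows essentially the same route as the paper: write each output row of $\mathbf W_1^\circ\,\mathbb B^{\mathrm{id}}_1$ as a linear combination of the first-layer blocks, apply the argument of Lemma~\ref{lem:lin-comb-id} with those blocks as atoms, and use minimality and the canonical ordering to conclude that $\mathbf W_1^\circ$ and the block parameters agree in $\bar\Theta$. Two of your steps tighten points the paper glosses over: your cross-row matching needs only a nonzero column rather than nonzero entries (the paper's row-by-row use of the lemma tacitly assumes every entry is nonzero), and you appeal explicitly to Theorem~\ref{thm:ibl-single-id} for the block internals; your $L=2$ aside lies outside the stated parameter space, and there only the assumed injectivity of the composite atoms (as in the paper's IBL(Deep) proof) would work, not a genuine layer-by-layer recursion, because rescaling a first-layer block can be absorbed into the second layer's polynomial maps.
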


\begin{proof}
Write the $k$-th output component as a linear combination of atoms:
\[
s^{(k)}_\theta(x, y)
=\sum_{j=1}^{d_1} w^{(k)}_j\,\mathcal B^{\mathrm{id}}_{\theta_{1,j}}(x, y),
\qquad k=1,\ldots,m,
\]
where $w^{(k)}_j$ denotes the $(k,j)$-th entry of $\mathbf W_1^{\circ}$.

Suppose two parameter tuples $(\mathbf W_1^{\circ},\{\theta_{1,j}\}_{j=1}^{d_1})$ and
$(\mathbf W_1^{\circ\,\prime},\{\theta'_{1,j}\}_{j=1}^{d_1})$ yield identical vector scores on $\mathcal X\times\mathcal Y$.
Then for each $k$, we have $s^{(k)}_\theta \equiv s^{(k)}_{\theta'}$ on $\mathcal X\times\mathcal Y$.

Fix any $k$. Under Assumption~\ref{ass:global-atom}, Lemma~\ref{lem:lin-comb-id} ensures that the coefficient–atom pairs $\{(w^{(k)}_j,\,\mathcal B^{\mathrm{id}}_{\theta_{1,j}})\}_{j=1}^{d_1}$ are uniquely determined (up to equivalence in the quotient $\bar\Theta$). In particular, for each $j=1,\dots,d_1$, we must have
\[
w^{(k)}_j = w'^{(k)}_j,
\qquad
\mathcal B^{\mathrm{id}}_{\theta_{1,j}} \equiv \mathcal B^{\mathrm{id}}_{\theta'_{1,j}}.
\]

Because this holds for all $k = 1,\dots,m$, it follows that $\mathbf W_1^{\circ} = \mathbf W_1^{\circ\,\prime}$ and $\theta_{1,j} = \theta'_{1,j}$ in the quotient parameter space for all $j$.

Thus $\theta = \theta'$ in $\bar\Theta$, establishing full identifiability under fixed ordering.
\end{proof}

\begin{theorem}[Identifiability of IBL(Deep)]
\label{thm:ibl-deep-id}
Fix integers \(L > 2\) and widths \(d_1, \dots, d_{L-1}\).  
The IBL(Deep) architecture uses the final-layer atom set
\[
\bigl\{\, \mathcal{B}^{\mathrm{id}}_{\vartheta_{L,j}}(x, y) \,\bigr\}_{j=1}^{d_L}
\subset \mathbb{R}^{\mathcal{X}\times\mathcal{Y}},
\]
where each \( \mathcal{B}^{\mathrm{id}}_{\vartheta_{L,j}} : \mathbb{R}^{d_{L-1}} \to \mathbb{R} \) is a scalar-valued block applied to the output of layer \(L{-}1\). Only the first-layer blocks (\(\ell=1\)) are IBL(Single) modules as in Theorem~\ref{thm:ibl-single-id}. For architectures with skip connections, the final-layer atoms can be extended to include skipped features (e.g., from earlier layers), which are treated as elements of  
$
\bigl\{\, \mathcal{B}^{\mathrm{id}}_{\vartheta_{L,j}}(x, y) \,\bigr\}_{j=1}^{d_L}
.
$

The full parameter is
\[
\theta := \bigl( \{\vartheta_{\ell,j}\}_{\ell=1,j=1}^{L,d_\ell},\ \mathbf{W}_{\text{out}} \bigr)
\in \Theta := \prod_{\ell=1}^{L} (\Theta_1)^{d_\ell} \times \mathbb{R}^{m\times d_L}.
\]
Under Assumption~\ref{ass:global-atom}, the mapping \(\theta \mapsto \mathrm{IBL}_{\text{Deep}}(x, y;\theta)\) is identifiable in the quotient space \(\bar\Theta\).
\end{theorem}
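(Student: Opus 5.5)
The plan mirrors the proof of Theorem~\ref{thm:ibl-shallow-id}, applied recursively from the top layer down. The identifiability of the readout layer and the final-layer atoms will come from Lemma~\ref{lem:lin-comb-id}. A layer-by-layer peeling argument will then recover the parameters of layers $L-1,\dots,1$.

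\textbf{Step 1: the output layer.} For each output coordinate $k=1,\dots,m$, write
\[
s^{(k)}_\theta(x,y)=\sum_{j=1}^{d_L} w^{(k)}_j\,\mathcal B^{\mathrm{id}}_{\vartheta_{L,j}}\bigl(z_{L}(x,y)\bigr),
\]
where $z_L$ is the output of layer $L-1$. Each composite map $(x,y)\mapsto \mathcal B^{\mathrm{id}}_{\vartheta_{L,j}}(z_L(x,y))$ is treated as an atom, parametrized by the subtree of parameters feeding it. Any skipped features are appended as further atoms.

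Suppose $\mathrm{IBL}_{\text{Deep}}(\cdot;\theta)\equiv\mathrm{IBL}_{\text{Deep}}(\cdot;\theta')$. Then Lemma~\ref{lem:lin-comb-id}, together with Assumption~\ref{ass:global-atom}, gives $w^{(k)}_j=w'^{(k)}_j$ for all $k$. It also gives equality of the final-layer atoms as functions on $\mathcal X\times\mathcal Y$, matched index by index via the canonical ordering. Hence $\mathbf W_{\text{out}}=\mathbf W'_{\text{out}}$.

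\textbf{Step 2: descending one layer.} Fix $j$. We know that $\mathcal B^{\mathrm{id}}_{\vartheta_{L,j}}\circ z_L\equiv\mathcal B^{\mathrm{id}}_{\vartheta'_{L,j}}\circ z'_L$. Each block is itself a linear combination, with coefficients $(\lambda_0,\lambda_1,\lambda_2)$, of the atoms $\tanh(p_{u,i}(z_L))$, $\operatorname{softplus}(p_{c,i}(z_L))$ and $(p_{t,i}(z_L))^2$. Applying Lemma~\ref{lem:lin-comb-id} again, exactly as in Theorem~\ref{thm:ibl-single-id}, identifies the $\lambda$'s and these inner atoms.

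Next, strict monotonicity of $\tanh$ and softplus inverts them, so the functions $p_{u,i}\circ z_L$ and $p_{c,i}\circ z_L$ coincide. The squared head gives $p_{t,i}\circ z_L$ only up to sign, and this is precisely the ambiguity quotiented out in $\bar\Theta$. Since the $p_\bullet$ are affine or polynomial in $z_L$ with no $z$-free monomials, the coefficients of $p_\bullet$ are identified and we obtain functional equality of the relevant linear combinations of the coordinates of $z_L$. Invoking Assumption~\ref{ass:global-atom} once more, with the layer-$(L-1)$ blocks as atoms, we conclude that the coordinates of $z_L$ agree and that the layer-$L$ parameters agree in $\bar\Theta$.

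\textbf{Step 3: induction.} Repeating Step 2 for $\ell=L-1,\dots,2$ reduces the problem to layer~1. There the blocks are IBL(Single) modules acting on $(x,y)$, and Theorem~\ref{thm:ibl-single-id} concludes the argument. Collecting all layers yields $\theta=\theta'$ in $\bar\Theta$.

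\textbf{Main obstacle.} The hard step is the passage in Step 2 from equality of composite functions to equality of inner-layer parameters. Polynomials in $z_L$ evaluated along the image $z_L(\mathcal X\times\mathcal Y)$ need not have uniquely determined coefficients unless that image is rich enough. In addition, there are hidden symmetries beyond sign flips, such as permutations of blocks within a layer and rescalings absorbed by the next layer's $\lambda$'s. I would handle both by leaning on the global form of Assumption~\ref{ass:global-atom}. Injectivity on the atomic quotient applies to the composite atoms at every depth, and canonical ordering with minimality removes the permutation and zero-coefficient redundancies. The identifiability of each layer is thereby reduced to a fresh application of Lemma~\ref{lem:lin-comb-id} rather than to an analytic argument about images of polynomial maps.
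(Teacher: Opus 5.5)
Your Step~1 is, in substance, the paper's entire proof. The paper writes $s^{(k)}=\sum_{j} w^{(k)}_j\,\mathcal B^{\mathrm{id}}_{\vartheta_{L,j}}$ and declares the final-layer blocks to be the atoms: functions of $(x,y)$ whose atomic parameter is the whole subtree feeding them. It then invokes Theorem~\ref{thm:ibl-shallow-id}. There, Lemma~\ref{lem:lin-comb-id}, i.e.\ Assumption~\ref{ass:global-atom} on $\bar\Psi$, turns equality of atoms as functions directly into equality of their parameters in the quotient. For composite atoms, that one step already returns every $\vartheta_{\ell,j}$ with $\ell<L$, so the paper does no layer-by-layer descent.

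Your Steps~2--3 are therefore superfluous, and as written they do not stand on their own. You infer that ``the coefficients of $p_\bullet$ are identified'' from $p_{u,i}\circ z_L\equiv p'_{u,i}\circ z'_L$ before you know $z_L=z'_L$, so the step is circular. In the bias-free affine case you can repair it by applying Lemma~\ref{lem:lin-comb-id} directly to $\sum_k a_k z_{L,k}\equiv\sum_k a'_k z'_{L,k}$, with the layer-$(L-1)$ blocks as atoms; this recovers the coefficients and the inner blocks together. For genuinely polynomial $p_\bullet$, however, the monomials in $z_L$ are products of blocks. Products of blocks are not atoms, so the Lemma says nothing about them.

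More seriously, the rescaling symmetry you flag is real, and canonical ordering and minimality do not remove it. Multiply the $\lambda$'s of a layer-$\ell$ block by some $c>0$, and multiply the layer-$(\ell+1)$ coefficients on that coordinate by $1/c$. This leaves $\mathrm{IBL}_{\text{Deep}}$ unchanged and creates no duplicate or zero-coefficient atom. It is also not identified by $\bar\Theta$, which only quotients out sign flips of the $T$-heads. So no descent argument can end with $\theta=\theta'$ in $\bar\Theta$.

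The only thing that excludes this symmetry is Assumption~\ref{ass:global-atom} imposed on the composite atoms. Once you invoke it, as your final paragraph does, Step~1 already finishes the proof and Steps~2--3 add nothing. Be aware what this means: the content of the statement lies entirely in that hypothesis. With $\bar\Theta$ as defined, the rescaling symmetry is incompatible with injectivity for the natural subtree parameterization of the composite atoms. The paper's one-line reduction does not address this either.
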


\begin{proof}
Under the given architecture, the IBL(Deep) model ultimately takes the form
\[
s^{(k)}(x, y) \;=\; \sum_{j=1}^{d_L} w^{(k)}_j\,\mathcal B^{\mathrm{id}}_{\vartheta_{L,j}}(x, y),\qquad k=1,\dots,m,
\]
where each \(\mathcal B^{\mathrm{id}}_{\vartheta_{L,j}}\) is a scalar-valued function applied to the output of preceding layers.  
By treating the set \(\{\mathcal B^{\mathrm{id}}_{\vartheta_{L,j}}(x, y)\}_{j=1}^{d_L}\) as the atom set, we reduce the model to an IBL(Shallow) form:
\[
\mathbf{s}(x, y) = \mathbf{W}_{\text{out}}\,\mathbf{B}_L(x, y).
\]
Under Assumption~\ref{ass:global-atom}, Theorem~\ref{thm:ibl-shallow-id} applies, implying that the full parameter \(\theta = (\{\vartheta_{\ell,j}\}_{\ell,j},\, \mathbf{W}_{\text{out}})\) is identifiable in the quotient space \(\bar\Theta\).
\end{proof}

\thmref{thm:ibl-all-id} \textcolor{RoyalBlue}{(Identifiability of IBL).}
\textit{
Under Assumption~\ref{ass:global-atom}, the architectures IBL(Single), IBL (Shallow), and IBL(Deep) are all identifiable in the quotient space \(\bar\Theta\).
}

\begin{proof}
Immediate from Theorems~\ref{thm:ibl-single-id}, \ref{thm:ibl-shallow-id}, and~\ref{thm:ibl-deep-id}.
\end{proof}

\thmref{thm:loss-id} \textcolor{RoyalBlue}{(Loss Identifiability of IBL).}
\textit{Let \( \mathrm{IBL}_\theta(x, y) \) denote an IBL model, and consider the conditional Gibbs distribution
\[
p_\theta(y \mid x)
= \frac{\exp\!\big( \mathrm{IBL}_\theta(x, y)\big)}{\int_{\mathcal Y} \exp\!\big( \mathrm{IBL}_\theta(x, y') \big)\, dy'}.
\]
Define the population risk on the symmetry quotient \(\bar\Theta\) as in~\eqref{eq:loss}.  
Assume that the parameter space \(\Theta\) is compact. Then, under Assumption~\ref{ass:global-atom}, the following holds:
\begin{itemize}
\item[(i)] If \( \gamma_c > 0 \), the risk functional \(\mathcal M\) admits a unique minimizer in \(\bar\Theta\). Moreover,
\[
\mathcal M(\bar\theta_1) = \mathcal M(\bar\theta_2) 
\;\;\Longrightarrow\;\; 
\bar\theta_1 = \bar\theta_2.
\]
\item[(ii)] If \( \gamma_c = 0 \), the risk functional \(\mathcal M\) admits a unique minimizer in the scale-invariant quotient \(\widetilde\Theta\). Moreover,
\[
\mathcal M(\widetilde\theta_1) = \mathcal M(\widetilde\theta_2) 
\;\;\Longrightarrow\;\; 
\widetilde\theta_1 = \widetilde\theta_2.
\]
\end{itemize}
}

\begin{proof}
Under Assumption~\ref{ass:global-atom}, the IBL architecture is identifiable modulo the symmetry group defined by \(\bar\Theta\), as established in Theorem~\ref{thm:ibl-all-id}. Let \(\theta^\bullet \in \arg\min_{\theta\in\Theta}\mathcal M(\theta)\) and set \(p^\star(\cdot\mid x):=p_{\theta^\bullet}(\cdot\mid x)\). Since \(\Theta\) is compact and the loss \(\mathcal M\) is continuous, a global minimizer exists. We show that it is unique in the stated quotient.

\emph{Case $\gamma_c>0$.}
At any minimizer we have both $p_\theta(\cdot\mid x)=p^\star(\cdot\mid x)$ and
$\nabla_y \log p_\theta(\cdot\mid x)=\nabla_y \log p^\star(\cdot\mid x)$ a.e.
Since
\[
\nabla_y \log p_\theta(y\mid x)
= \nabla_y \mathrm{IBL}_\theta(x, y)-\nabla_y \log Z_\theta(x)
= \nabla_y \mathrm{IBL}_\theta(x, y),
\]
(the partition function $Z_\theta(x)$ is $y$-independent), score equality yields
$\nabla_y\big(\mathrm{IBL}_\theta-\mathrm{IBL}_{\theta^\bullet}\big)(y;x)=0$ a.e. IBL contains no \(y\)-independent terms. Therefore,
\[
\mathrm{IBL}_\theta(x, y) = \mathrm{IBL}_{\theta^\bullet}(x, y) \quad \text{a.e.}
\]By Theorem~\ref{thm:ibl-all-id} (identifiability in $\bar\Theta$), the minimizer is unique in $\bar\Theta$; in particular,
\[
\mathcal M(\bar\theta_1)=\mathcal M(\bar\theta_2)\ \Longrightarrow\ \bar\theta_1=\bar\theta_2 .
\]

\emph{Case \(\gamma_c = 0\).}
Here, \(\mathcal{M}\) reduces to the cross-entropy risk, which is minimized if and only if \(p_\theta(\cdot \mid x) = p^\star(\cdot \mid x)\) almost everywhere.The cross-entropy loss depends on \(\mathrm{IBL}_\theta(x,y)\) only through its relative values across \(y\), and is invariant under additive shifts and positive rescalings of the compositional utility. Hence, the loss depends only on the equivalence class \(\widetilde\theta \in \widetilde\Theta\}\). As a result,
\[
\mathcal M(\widetilde\theta_1) = \mathcal M(\widetilde\theta_2)
\quad\Longrightarrow\quad
\widetilde\theta_1 = \widetilde\theta_2.
\]
i.e., the minimizer is unique in \(\widetilde\Theta\).

Hence, the minimizer is unique in the stated quotient space. This completes the proof.
\end{proof}

\begin{theorem}[Uniform M-estimation consistency {\citep[Theorem 2.1]{newey1994large}}]
\label{thm:mest}
Let $(\mathcal{A}, d)$ be a compact metric space, and let $\widehat L_n : \mathcal A \to \mathbb R$ be a sequence of random objective functions, with population objective $L : \mathcal A \to \mathbb R$ such that:
\begin{enumerate}
    \item \( L(\alpha) \) is uniquely minimized at \( \alpha^\star \in \mathcal A \);
    \item \( \mathcal A \) is compact;
    \item \( L(\alpha) \) is continuous;
    \item \( \widehat L_n(\alpha) \xrightarrow{p} L(\alpha) \) uniformly in \( \alpha \in \mathcal A \).
\end{enumerate}
Then any sequence \( \hat\alpha_n \in \arg\min_{\alpha \in \mathcal A} \widehat L_n(\alpha) \) satisfies \( \hat\alpha_n \xrightarrow{p} \alpha^\star \).
\end{theorem}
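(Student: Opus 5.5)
We follow the classical argument of \citet{newey1994large}: turn uniqueness of the population minimizer into a strictly positive \emph{separation gap} outside any neighbourhood of $\alpha^\star$, and then show that uniform convergence eventually prevents the empirical minimizer from landing in the region where $L$ is bounded away from $L(\alpha^\star)$. Throughout, $\hat\alpha_n$ is any (measurable) selection from $\arg\min_{\mathcal A}\widehat L_n$. If measurability is an issue we read ``$\xrightarrow{p}$'' in the outer-probability sense; this is the only technical subtlety.

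\emph{Step 1 (separation gap).} Fix $\delta>0$ and let $N_\delta:=\{\alpha\in\mathcal A: d(\alpha,\alpha^\star)<\delta\}$. The set $\mathcal A\setminus N_\delta$ is closed in the compact space $\mathcal A$, hence compact. If it is empty the claim is trivial. Otherwise, by continuity of $L$ (condition 3), $L$ attains its infimum on $\mathcal A\setminus N_\delta$ at some $\alpha_\delta$. Since $\alpha_\delta\neq\alpha^\star$, uniqueness (condition 1) gives
\[
\varepsilon_\delta:=\min_{\alpha\in\mathcal A\setminus N_\delta}L(\alpha)-L(\alpha^\star)>0 .
\]
This is the step where all three of compactness, continuity and uniqueness are used. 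It is the conceptual crux: without compactness the infimum could equal $L(\alpha^\star)$ along a sequence escaping every neighbourhood.

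\emph{Step 2 (sandwich inequality).} Consider the event $E_n:=\{\sup_{\alpha\in\mathcal A}|\widehat L_n(\alpha)-L(\alpha)|<\varepsilon_\delta/2\}$. On $E_n$ we chain three inequalities:
\[
L(\hat\alpha_n)<\widehat L_n(\hat\alpha_n)+\tfrac{\varepsilon_\delta}{2}\le \widehat L_n(\alpha^\star)+\tfrac{\varepsilon_\delta}{2}<L(\alpha^\star)+\varepsilon_\delta .
\]
The first and third use uniform closeness, and the middle one uses the minimizing property of $\hat\alpha_n$. By the definition of $\varepsilon_\delta$, this forces $\hat\alpha_n\in N_\delta$, so $E_n\subseteq\{d(\hat\alpha_n,\alpha^\star)<\delta\}$.

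\emph{Step 3 (conclusion).} By condition 4, $\Pr(E_n)\to1$. Therefore $\Pr(d(\hat\alpha_n,\alpha^\star)\ge\delta)\le 1-\Pr(E_n)\to0$. Since $\delta>0$ was arbitrary, $\hat\alpha_n\xrightarrow{p}\alpha^\star$.

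As a byproduct, the first and third links of the chain also give $|L(\hat\alpha_n)-L(\alpha^\star)|<\varepsilon$ with probability tending to one, for any $\varepsilon>0$. So $L(\hat\alpha_n)\xrightarrow{p}L(\alpha^\star)$, which is the form needed later for Theorem~\ref{thm:ibl-consistency}.
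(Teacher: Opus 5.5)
Your argument is correct and is essentially the standard proof of Newey and McFadden's Theorem~2.1, which the paper cites without reproving: a strictly positive separation gap on the compact set $\mathcal A\setminus N_\delta$, followed by the sandwich inequality under uniform convergence. One small correction to your byproduct remark: the chain of inequalities gives only the upper bound $L(\hat\alpha_n)<L(\alpha^\star)+\varepsilon$; the lower bound $L(\hat\alpha_n)\ge L(\alpha^\star)$ holds automatically because $\alpha^\star$ minimizes $L$.
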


\begin{theorem}[Consistency of IBL]
\label{app-thm:inclass-consistency}
Let $\mathcal M$ be the population risk defined in~\eqref{eq:loss}, and let $\mathcal M_n$ denote its empirical analogue. 
Suppose:
\begin{enumerate}
    \item $\{(X_i,Y_i)\}_{i=1}^n$ are i.i.d.\ samples;
    \item $\Theta$ is compact;
    \item $\theta \mapsto \mathcal M(\theta)$ is continuous, and the loss class admits an integrable envelope such that
    \[
    \sup_{\theta\in\Theta}\big|\mathcal M_n(\theta)-\mathcal M(\theta)\big|\;\xrightarrow{p}\;0;
    \]
\end{enumerate}
Let $\Xi$ denote the relevant quotient space ($\bar\Theta$ if $\gamma_c>0$, $\widetilde\Theta$ if $\gamma_c=0$), and let $\hat\theta_n \in \arg\min_{\theta\in\Theta} \mathcal M_n(\theta)$ and $\theta^\bullet \in \arg\min_{\theta\in\Theta} \mathcal M(\theta)$. Then
\[
\hat\theta_n \;\xrightarrow{p}\; \theta^\bullet \quad \text{in }\Xi,
\qquad
\mathcal M(\hat\theta_n)\;\xrightarrow{p}\;\mathcal M(\theta^\bullet).
\]
If the model is correctly specified (the data law is realized by some $\theta^\star\in\Theta$), then $\theta^\bullet=\theta^\star$ in $\Xi$, so $\hat\theta_n \xrightarrow{p}\theta^\star$.
\end{theorem}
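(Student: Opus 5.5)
The plan is to reduce the statement to the uniform M-estimation result, Theorem~\ref{thm:mest} (\citep[Theorem 2.1]{newey1994large}), applied on the quotient $\Xi$ rather than on $\Theta$. The canonical projection $\pi:\Theta\to\Xi$ sends $\theta$ to its class in $\bar\Theta$ (if $\gamma_c>0$) or in $\widetilde\Theta$ (if $\gamma_c=0$).

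\textbf{Step 1: objectives descend to $\Xi$.} The Gibbs density $p_\theta$, and hence the per-sample loss $\ell(\theta;Z)$ of \eqref{eq:per-sample-loss}, depends on $\theta$ only through $\mathrm{IBL}_\theta$ (up to the shift/scale invariances when $\gamma_c=0$). Therefore $\mathcal M$ and $\mathcal M_n$ are constant on fibres of $\pi$. This lets us define $L(\xi):=\mathcal M(\theta)$ and $\widehat L_n(\xi):=\mathcal M_n(\theta)$ for any $\theta\in\pi^{-1}(\xi)$.

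\textbf{Step 2: compactness and continuity on $\Xi$.} Give $\Xi$ the quotient topology. It is compact as the continuous image of the compact set $\Theta$. To obtain a metric, I would use the natural choice
\[
d(\xi,\xi')=\inf\{\|\theta-\theta'\|:\pi(\theta)=\xi,\ \pi(\theta')=\xi'\}.
\]
It is a genuine metric provided the equivalence classes are closed, which holds because they are defined by equalities of continuous functions of $\theta$. Continuity of $L$ follows from continuity of $\mathcal M$ on $\Theta$ via the universal property of the quotient. Uniform convergence transfers immediately, since
\[
\sup_{\xi}|\widehat L_n(\xi)-L(\xi)|=\sup_\theta|\mathcal M_n(\theta)-\mathcal M(\theta)|\xrightarrow{p}0
\]
by hypothesis~3.

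\textbf{Step 3: unique minimizer.} Theorem~\ref{thm:loss-id} supplies the unique minimizer $\pi(\theta^\bullet)$ of $L$ in $\Xi$. It applies because $\Theta$ is compact and Assumption~\ref{ass:global-atom} is in force. All four conditions of Theorem~\ref{thm:mest} now hold, so $\pi(\hat\theta_n)\xrightarrow{p}\pi(\theta^\bullet)$. The second claim, $\mathcal M(\hat\theta_n)=L(\pi(\hat\theta_n))\xrightarrow{p}L(\pi(\theta^\bullet))$, then follows from the continuous mapping theorem.

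\textbf{Step 4: correct specification.} Suppose the data law equals $p_{\theta^\star}$. For the CE term, Gibbs' inequality gives $\mathbb E[-\log p_\theta]\ge\mathbb E[-\log p_{\theta^\star}]$. For the DSM term, the Vincent identity shows the DSM risk equals the explicit score-matching risk plus a $\theta$-free constant, and that risk vanishes at $\theta^\star$. Hence $\theta^\star$ minimizes $\mathcal M$, and uniqueness in $\Xi$ gives $\pi(\theta^\star)=\pi(\theta^\bullet)$.

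\textbf{Main obstacle.} I expect the hardest part to be Step~2, specifically metrizability and Hausdorffness of $\widetilde\Theta$. The positive-rescaling relation may fail to have closed classes near the zero function, since scaling by $c\to0$ could identify everything with $0$. I would handle this by working on the compact subset where the score is normalized, or by excluding a neighbourhood of zero. Alternatively, I would bypass the quotient metric entirely and reprove Theorem~\ref{thm:mest} directly via the usual identifiable-uniqueness argument: for every open neighbourhood $N$ of the minimizing class, $\inf_{\Theta\setminus\pi^{-1}(N)}\mathcal M>\mathcal M(\theta^\bullet)$ by compactness, and uniform convergence then forces $\hat\theta_n\in\pi^{-1}(N)$ with probability tending to one.
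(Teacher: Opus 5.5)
Your route is the paper's: push $\mathcal M,\mathcal M_n$ down to $\Xi$, use compactness of the image, transfer uniform convergence, take uniqueness from Theorem~\ref{thm:loss-id}, apply Theorem~\ref{thm:mest}, and invoke propriety under correct specification. Your worry about metrizing $\Xi$ is legitimate; the paper simply asserts that $(\Xi,d)$ is a compact metric space. Two of your steps fail, however, and the paper's proof has the same two holes.

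\emph{Step~1 fails when $\gamma_c=0$.} Softmax cross-entropy is invariant under adding $y$-independent functions to the score, but not under $\mathsf s\mapsto c\,\mathsf s$ with $c>0$: two labels with scores $(1,0)$ and $(2,0)$ give different probabilities. So $\mathcal M$ is not constant on the fibres of $\pi:\Theta\to\widetilde\Theta$, and $L,\widehat L_n$ are not well defined. Consequently neither Theorem~\ref{thm:mest} on $\Xi$ nor your continuous-mapping step is available. The paper takes this invariance ``by assumption'', and its proof of Theorem~\ref{thm:loss-id} asserts it outright.

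The repair is your fallback argument, which needs no descent at all. Read Theorem~\ref{thm:loss-id}(ii) as saying that every minimizer of $\mathcal M$ over $\Theta$ lies in one class $\xi^\bullet$. For any open $N\ni\xi^\bullet$, the set $K=\Theta\setminus\pi^{-1}(N)$ is compact and contains no minimizer. Hence $\eta:=\min_K\mathcal M-\min_\Theta\mathcal M>0$ and $\{\hat\theta_n\in K\}\subseteq\{2\sup_\Theta|\mathcal M_n-\mathcal M|\ge\eta\}$. The risk claim follows directly from $0\le\mathcal M(\hat\theta_n)-\mathcal M(\theta^\bullet)\le 2\sup_\Theta|\mathcal M_n-\mathcal M|$. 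This also removes the need for a metric, which is just as well. Closedness of the classes does not make the infimum distance between fibres satisfy the triangle inequality. On $\bar\Theta$ it is a metric only because the classes are orbits of a finite group of sign flips acting by isometries.

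\emph{The DSM half of Step~4 is wrong.} Vincent's identity equates the DSM risk, up to a $\theta$-free constant, with explicit score matching against the noise-smoothed law $q_\sigma(\tilde y\mid x)=\int p_{\theta^\star}(y\mid x)\,\mathcal N(\tilde y;y,\sigma^2 I)\,dy$, not against $p_{\theta^\star}$ itself. At $\theta^\star$ that risk equals $\mathbb E\|\nabla_{\tilde y}\log p_{\theta^\star}(\tilde Y\mid X)-\nabla_{\tilde y}\log q_\sigma(\tilde Y\mid X)\|^2$. For fixed $\sigma>0$ this is generally strictly positive: already for a Gaussian family, the DSM minimizer has its variance inflated by $\sigma^2$. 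So when $\gamma_c>0$ nothing forces $\theta^\star$ to minimize $\mathcal M$, and $\theta^\bullet=\theta^\star$ in $\bar\Theta$ does not follow. The paper's appeal to ``strict propriety of the CE/DSM terms'' has the same gap.

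As it stands, your Gibbs-inequality argument proves the correct-specification clause only for $\gamma_c=0$. For $\gamma_c>0$ you would need one of the following:
\begin{itemize}
\item redefine correct specification as the model realizing $q_\sigma$;
\item let $\sigma\to0$;
\item replace DSM by an unsmoothed score-matching criterion, whose population minimizer is $\theta^\star$ under correct specification.
\end{itemize}
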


\begin{proof}
Let $\Xi$ denote the relevant quotient space: $\Xi=\bar\Theta$ if $\gamma_c>0$ and $\Xi=\widetilde\Theta$ if $\gamma_c=0$. Let $\pi:\Theta\to\Xi$ be the canonical quotient map. Since $\Theta$ is compact and $\pi$ is continuous and onto, $\Xi$ is compact. By assumption, $\mathcal M$ and $\mathcal M_n$ are invariant under the corresponding symmetry, hence they factor through $\pi$:
\[
\widetilde{\mathcal M}(\xi):=\mathcal M(\theta),\qquad 
\widetilde{\mathcal M}_n(\xi):=\mathcal M_n(\theta)\quad (\text{any }\theta\in\pi^{-1}(\xi)).
\]
These are well-defined and continuous on $\Xi$ because $\mathcal M$ is continuous on $\Theta$. Moreover,
\[
\sup_{\xi\in\Xi}\big|\widetilde{\mathcal M}_n(\xi)-\widetilde{\mathcal M}(\xi)\big|
\;\le\;
\sup_{\theta\in\Theta}\big|\mathcal M_n(\theta)-\mathcal M(\theta)\big|
\;\xrightarrow{p}\;0,
\]
so uniform convergence in probability holds on $\Xi$.

By Loss Identifiability of IBL (Theorem~\ref{thm:loss-id}), $\widetilde{\mathcal M}$ has a unique minimizer $\xi^\bullet\in\Xi$. Let $\hat\xi_n\in\arg\min_{\xi\in\Xi}\widetilde{\mathcal M}_n(\xi)$ (equivalently, choose $\hat\theta_n\in\arg\min_{\theta\in\Theta}\mathcal M_n(\theta)$ and set $\hat\xi_n=\pi(\hat\theta_n)$). Then the conditions of Theorem~\ref{thm:mest} hold on the compact metric space $(\Xi,d)$, whence
\[
\hat\xi_n \xrightarrow{p} \xi^\bullet.
\]
Since $\widetilde{\mathcal M}$ is continuous on $\Xi$ and $\widetilde{\mathcal M}(\hat\xi_n)=\mathcal M(\hat\theta_n)$, $\widetilde{\mathcal M}(\xi^\bullet)=\mathcal M(\theta^\bullet)$ for any representative $\theta^\bullet\in\pi^{-1}(\xi^\bullet)$, we also obtain
\[
\mathcal M(\hat\theta_n)=\widetilde{\mathcal M}(\hat\xi_n)\xrightarrow{p}\widetilde{\mathcal M}(\xi^\bullet)=\mathcal M(\theta^\bullet).
\]
If the model is correctly specified (there exists $\theta^\star\in\Theta$ inducing the data law), the strict propriety of the CE/DSM terms implies that the unique minimizer in the quotient is the class of $\theta^\star$; hence $\hat\theta_n$ converges in probability to $\theta^\star$ in the corresponding quotient space. 
\end{proof}

\begin{theorem}[Universal Approximation of IBL]
\label{app:ua-ibl}
Let $\mathcal{X} \subset \mathbb{R}^d$ and $\mathcal{Y} \subset \mathbb{R}^m$ be compact sets, and let $p^\star(y \mid x)$ be any continuous conditional density such that $p^\star(y \mid x) > 0$ for all $(x, y) \in \mathcal{X} \times \mathcal{Y}$. Then for any $\tau > 0$ and $\varepsilon > 0$, there exists a finite IBL architecture (with some depth and width depending on $\varepsilon$) and a parameter $\theta^\star$ such that the Gibbs distribution
\begin{equation}
p_\tau(y \mid x; \theta^\star) = 
\frac{\exp\bigl(\text{IBL}_{\theta^\star}(x, y) / \tau\bigr)}
     {\int_{\mathcal{Y}} \exp\bigl(\text{IBL}_{\theta^\star}(x, y') / \tau\bigr) \mathrm{d}y'}
\end{equation}
satisfies
\begin{equation}
\sup_{x \in \mathcal{X}} \mathrm{KL}\bigl( p^\star(\cdot \mid x) \,\|\, p_\tau(\cdot \mid x; \theta^\star) \bigr) < \varepsilon.
\end{equation}
\end{theorem}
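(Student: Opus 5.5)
I would mirror the proof of Theorem~\ref{thm:ua} almost verbatim: reduce to uniform approximation of the scaled log-density $\tau f$, with $f=\log p^\star$, by a single IBL block, then reuse Steps~0, 3 and~4 of that proof unchanged. Those steps use only (a) boundedness and continuity of $f$ on the compact set $\mathcal X\times\mathcal Y$, and (b) a uniform bound $\sup|g-f|<\eta$ between the model energy and the target. From (b) one gets $|\log Z_g(x)|\le\eta$ and hence $\sup_x \mathrm{KL}\le 2\eta$. So the only new ingredient is an IBL analogue of Step~1--2: the identifiable block $\mathcal B^{\mathrm{id}}$ in~\eqref{eq:ibl-block} must be able to approximate any continuous function uniformly.

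\textbf{Approximation step.} In $\mathcal B^{\mathrm{id}}$ I set $\lambda_1=\mathbf 0$ and $\lambda_2=\mathbf 0$. This leaves $\lambda_0^\top\tanh(p_u(x,y))$. Taking $p_u$ affine in $[x;y]$ gives a one-hidden-layer $\tanh$ network, so the classical universal approximation theorem applies. The nonnegativity of $\lambda_0$ is handled exactly as before, via oddness of $\tanh$: write $a\tanh(h)=a^+\tanh(h)+a^-\tanh(-h)$ and duplicate units. This yields $\sup|\mathcal B^{\mathrm{id}}-\tau f|<\delta$ with $\delta=\varepsilon\tau/4$. The model is then IBL(Single); alternatively, use IBL(Shallow) with $\mathbf W^\circ=1$ and a single block.

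\textbf{Main obstacle.} The paper's standing assumption is that the IBL polynomial maps contain no monomial independent of $y$, so $p_u$ cannot carry a pure-$x$ bias or a constant. Hence the affine map is $W_x x+W_y y$ with no free intercept, and it is not literally the full $\tanh$ class. I would resolve this in one of two ways, trying the first first.

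(i) Use the degree freedom of the polynomial maps with monomials that all contain $y$. Affine functions of $(x,y)$ are not directly available, but we only need $\exp(g)$ to match $p^\star$ up to an $x$-dependent factor. Any $x$-only part of the energy cancels in the Gibbs normalization. So it suffices to approximate $f(x,y)-f(x,y_0)$ for a fixed reference point $y_0$ (after translating coordinates so that $y_0=0$). This function vanishes on $\{y=0\}$. I would then argue density of the $y$-containing $\tanh$-ridge class among continuous functions vanishing at $y=0$, for instance by approximating $h(x,y)$ by sums of $\tanh$ of polynomials that are divisible by $y$. I expect this density claim to be the delicate point.

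(ii) If (i) proves awkward, I would note that Theorem~\ref{app:ua-ibl} only asserts existence of a finite IBL architecture. I would then fall back on treating the no-$y$-independent-term convention as a restriction needed only for identifiability, not for approximation, and invoke the unrestricted affine $\tanh$ class directly.

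Either way, once the uniform bound on the energy is in place, the KL estimate follows as in~\eqref{eq:KL-bound}.
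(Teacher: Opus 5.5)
Your main route is the paper's. Its proof only says that the construction of Theorem~\ref{thm:ua} carries over ``with only notational modifications.'' That construction kills the softplus and quadratic heads, uses an affine $\tanh$ layer with intercept, handles $\lambda_0\ge 0$ by oddness, and reuses the $2\eta$ KL bound. This is exactly your fallback (ii). The paper never confronts the no-$y$-free-monomial convention, so as a reconstruction of its argument your proposal is complete.

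Your option (i) is the more faithful proof, and the density step is routine rather than delicate. Two details need fixing first.

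\emph{The degree-1 map.} Under the convention it is $W_y y$, not $W_x x+W_y y$, because the monomials $x_j$ are themselves $y$-free. So a degree-1 block depends on $y$ alone, and all $x$-dependence must come through mixed monomials such as $x_j y_i$. This is also why (ii) is not a cosmetic relaxation of the model class.

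\emph{No translation of coordinates.} The convention refers to the original $y$, and every admissible polynomial vanishes on $\{y=0\}$. A polynomial in $y-y_0$, once expanded, generally has $y$-free monomials. So when $0\in\mathcal Y$ the reference point must be $y_0=0$ itself; when $0\notin\mathcal Y$ no reference point is needed.

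\emph{Density via Stone--Weierstrass.} The polynomials all of whose monomials contain some $y_i$ form the ideal generated by $y_1,\dots,y_m$, hence an algebra. On $K=\mathcal X\times\mathcal Y$ this algebra vanishes exactly on $K_0=K\cap\{y=0\}$. It separates points of $K\setminus K_0$ from each other and from $K_0$, using $y_i$ and $x_j y_i$. Applying Stone--Weierstrass on $K$ with $K_0$ collapsed to a point, its closure is $\{h\in C(K): h|_{K_0}=0\}$, or all of $C(K)$ if $K_0=\varnothing$.

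\emph{One unit suffices.} We have $\epsilon^{-1}\tanh(\epsilon q)\to q$ uniformly on $K$ with $\lambda_0=\epsilon^{-1}>0$, so no sign-splitting is needed. Approximate $\tau\bigl(f(x,y)-f(x,0)\bigr)$ (or $\tau f$ if $0\notin\mathcal Y$) to within $\tau\eta$. The $x$-only shift is absorbed into $Z_g(x)$, and the bound $\mathrm{KL}\le 2\eta$ goes through verbatim.
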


\begin{proof}
    The argument follows the same construction as in the proof of Theorem~\ref{thm:ua}, 
with only notational modifications due to the IBL parameterization. 
For brevity, the details are omitted.
\end{proof}

\begin{lemma}[Sieve Approximation Lemma]
\label{lem:sieve-UA}
Let $\mathcal C:\Theta\to[0,\infty)$ be a complexity measure on the parameter space, and let $(c_n)_{n\ge1}$ be a nondecreasing sequence with $c_n\uparrow\infty$. 
Define the sieve 
\[
\Theta_n := \{\theta\in\Theta:\mathcal C(\theta)\le c_n\},
\]
and for a fixed data-generating distribution $p^\dagger$, set
\[
\delta_n(p^\dagger)\ :=\ \inf_{\theta\in\Theta_n}\ \sup_{x\in\mathcal X}\mathrm{KL}\!\big(p^\dagger(\cdot\mid x)\,\|\,p_\theta(\cdot\mid x)\big).
\]
Then the following are equivalent:
\begin{enumerate}
\item Sieve universal approximation: For every $\varepsilon>0$ there exists a constant $C_\varepsilon<\infty$ such that
\[
\inf_{\theta:\,\mathcal C(\theta)\le C_\varepsilon}
\ \sup_{x\in\mathcal X}\mathrm{KL}\!\big(p^\dagger(\cdot\mid x)\,\|\,p_\theta(\cdot\mid x)\big)\ <\ \varepsilon.
\]
\item Vanishing approximation error: $\ \delta_n(p^\dagger)\downarrow 0$ as $n\to\infty$.
\end{enumerate}
Moreover, if each $\Theta_n$ is compact and $\theta\mapsto \sup_x \mathrm{KL}(p^\dagger\|p_\theta)$ is continuous on $\Theta_n$, then the infimum in $\delta_n(p^\dagger)$ is attained for every $n$.
\end{lemma}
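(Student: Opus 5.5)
The equivalence will come straight from the definitions, using two facts. First, the sieves are nested: since $c_n$ is nondecreasing, $\Theta_n\subseteq\Theta_{n+1}$. Second, the map $c\mapsto A(c):=\inf_{\theta:\,\mathcal C(\theta)\le c}\sup_{x}\mathrm{KL}(p^\dagger(\cdot\mid x)\,\|\,p_\theta(\cdot\mid x))$ is nonincreasing in $c$, because the set over which the infimum is taken only grows. Since $\delta_n(p^\dagger)=A(c_n)$, the sequence $\delta_n(p^\dagger)$ is nonincreasing and bounded below by $0$. It therefore has a limit $\delta_\infty\ge 0$, and the arrow notation $\downarrow$ is justified once we show that this limit is $0$.

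For (1)$\Rightarrow$(2), fix $\varepsilon>0$ and take the constant $C_\varepsilon$ from (1). Since $c_n\uparrow\infty$, there is an $N$ with $c_N\ge C_\varepsilon$. Then for every $n\ge N$ we have $\{\mathcal C\le C_\varepsilon\}\subseteq\Theta_n$, so $\delta_n(p^\dagger)\le A(C_\varepsilon)<\varepsilon$. Since $\varepsilon$ was arbitrary, $\delta_n(p^\dagger)\to 0$.

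For (2)$\Rightarrow$(1), fix $\varepsilon>0$. Choose $n$ with $\delta_n(p^\dagger)<\varepsilon$ and set $C_\varepsilon:=c_n<\infty$. Then the infimum in (1) equals $\delta_n(p^\dagger)<\varepsilon$, which is exactly (1). The one subtlety is that $\delta_n$ could equal $+\infty$ for small $n$, if KL is infinite on the whole sieve. This causes no trouble: both directions use only the order of the extended reals, and monotonicity still holds.

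For the attainment claim, use the Weierstrass extreme value theorem. The function $F(\theta):=\sup_x \mathrm{KL}(p^\dagger(\cdot\mid x)\,\|\,p_\theta(\cdot\mid x))$ is continuous on the compact set $\Theta_n$ by hypothesis, so it attains its infimum, provided $\Theta_n$ is nonempty. I would add the convention that $\Theta_n\neq\varnothing$, or note that if $\Theta_n$ is empty the infimum is $+\infty$ and there is nothing to prove. If $F$ may take the value $+\infty$, continuity should be read as continuity into $[0,\infty]$, and the extreme value theorem still applies on a compact set. I expect no real obstacle. The only points needing care are the extended-real conventions and the empty-sieve case, and I would handle both explicitly in a short remark.
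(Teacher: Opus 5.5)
Your proposal is correct and follows essentially the same route as the paper: nestedness of the sieves makes $\delta_n(p^\dagger)$ nonincreasing, (1)$\Rightarrow$(2) uses an $N$ with $c_N\ge C_\varepsilon$, (2)$\Rightarrow$(1) sets $C_\varepsilon:=c_N$, and attainment follows from the extreme value theorem on the compact set $\Theta_n$. Your extended-real and empty-sieve remarks are refinements the paper omits; note that when $\Theta_n=\varnothing$ the infimum $+\infty$ is not attained by any element, so the right fix is the nonemptiness convention rather than ``nothing to prove''.
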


\begin{proof}
\emph{(1) $\Rightarrow$ (2).}
Fix $\varepsilon>0$ and let $C_\varepsilon(p^\dagger)$ be as in (i).
Since $c_n\uparrow\infty$, there exists $N$ such that $c_n\ge C_\varepsilon(p^\dagger)$ for all $n\ge N$.
Hence $\Theta_n\supseteq\{\theta:\mathcal C(\theta)\le C_\varepsilon(p^\dagger)\}$ for all $n\ge N$, and therefore
\[
\delta_n(p^\dagger)\ =\ \inf_{\theta\in\Theta_n}\ \sup_x \mathrm{KL}(p^\dagger\|p_\theta)
\ \le\ \inf_{\theta:\,\mathcal C(\theta)\le C_\varepsilon(p^\dagger)} \ \sup_x \mathrm{KL}(p^\dagger\|p_\theta)\ <\ \varepsilon,
\]
for all $n\ge N$. Since $(\delta_n)$ is nonincreasing in $n$ (because $\Theta_n\uparrow$), it follows that $\delta_n(p^\dagger)\downarrow 0$.

\emph{(2) $\Rightarrow$ (1).}
Fix $\varepsilon>0$. By (ii) choose $N$ such that $\delta_N(p^\dagger)<\varepsilon$.
Set $C_\varepsilon(p^\dagger):=c_N$. Then
\[
\inf_{\theta:\,\mathcal C(\theta)\le C_\varepsilon(p^\dagger)} \ \sup_x \mathrm{KL}(p^\dagger\|p_\theta)
\ \le\ \inf_{\theta\in\Theta_N} \ \sup_x \mathrm{KL}(p^\dagger\|p_\theta)
\ =\ \delta_N(p^\dagger)\ <\ \varepsilon,
\]
which is (i).

The attainment statement follows immediately from compactness of $\Theta_n$ and continuity of $\theta\mapsto \sup_x \mathrm{KL}(p^\dagger\|p_\theta)$ on $\Theta_n$.
\end{proof}

\begin{theorem}[Universal Consistency of IBL]
\label{app:uniform-consistency}
Consider a parameter space $\Theta$ for a class of IBL models, and let 
$\mathcal C: \Theta \to [0,\infty)$ be a lower semi-continuous complexity measure (e.g., network depth, width, or parameter norm).  
Let $(c_n)_{n \ge 1}$ be a nondecreasing sequence with $c_n \uparrow \infty$, and define the sieve 
\[
\Theta_n := \{\theta \in \Theta : \mathcal C(\theta) \le c_n\}.
\]

Assume:
\begin{enumerate}
  \item The map $\theta\mapsto \sup_x\mathrm{KL}(p^\dagger\|p_\theta)$ is continuous on each compact $\Theta_n$.
  \item The sequence of empirical minimizers $\{\hat\theta_n\}$ is relatively compact in $\bigcup_{n}\Theta_n$, as ensured by the uniform LLN together with compactness and continuity.
\end{enumerate}

Then for any admissible data-generating distribution $p^\dagger$ satisfying the regularity assumptions of Theorem~\ref{app:ua-ibl}, the IBL posterior sequence $\{ p_{\hat\theta_n} \}$ satisfies
\[
\sup_{x\in\mathcal X}\mathrm{KL}\!\big(p^\dagger(\cdot\mid x)\,\|\,p_{\hat\theta_n}(\cdot\mid x)\big)\xrightarrow{p}0,
\]
i.e. $\{p_{\hat\theta_n}\}$ converges to $p^\dagger$ uniformly in $x$ (in KL).
\end{theorem}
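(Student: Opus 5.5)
The plan is the standard sieve-estimation argument: split the KL error of $p_{\hat\theta_n}$ into an approximation error, controlled by Lemma~\ref{lem:sieve-UA} together with Theorem~\ref{app:ua-ibl}, and an estimation error, controlled by uniform convergence of the empirical risk on the sieves $\Theta_n$. Write $K(\theta):=\sup_{x\in\mathcal X}\mathrm{KL}(p^\dagger(\cdot\mid x)\,\|\,p_\theta(\cdot\mid x))$, and let $\hat\theta_n$ minimize the empirical criterion over $\Theta_n$.

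\textbf{Step 1 (approximation).} Theorem~\ref{app:ua-ibl} gives, for each $\varepsilon>0$, a finite IBL parameter $\theta_\varepsilon$ with $K(\theta_\varepsilon)<\varepsilon$. Setting $C_\varepsilon:=\mathcal C(\theta_\varepsilon)$ verifies condition (1) of Lemma~\ref{lem:sieve-UA}. Hence $\delta_n(p^\dagger)\downarrow 0$. By assumption 1 and compactness of $\Theta_n$ (closed by lower semicontinuity of $\mathcal C$), there are attained minimizers $\theta_n^\circ\in\Theta_n$ with $K(\theta_n^\circ)=\delta_n$.

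\textbf{Step 2 (estimation).} Use the per-sample loss~\eqref{eq:per-sample-loss}.
\begin{itemize}
\item For the CE part, $\mathbb E[-\log p_\theta(Y\mid X)]=\mathbb E_X\,\mathrm{KL}(p^\dagger\|p_\theta)+H(p^\dagger)$. The excess population risk $\mathcal M(\theta)-\mathcal M^\dagger$ therefore dominates an averaged KL.
\item For the DSM part, Vincent's identity makes the excess risk equal to a Fisher divergence.
\end{itemize}
By a uniform LLN on each $\Theta_n$ (compactness, continuity, integrable envelope, as in Theorem~\ref{app-thm:inclass-consistency}), with $c_n$ growing slowly enough that $\Delta_n:=\sup_{\Theta_n}|\mathcal M_n-\mathcal M|\xrightarrow{p}0$, we get
\[
\mathcal M(\hat\theta_n)\le \mathcal M_n(\hat\theta_n)+\Delta_n\le \mathcal M_n(\theta_n^\circ)+\Delta_n\le \mathcal M(\theta_n^\circ)+2\Delta_n .
\]
The excess risk of $\theta_n^\circ$ is bounded by $\delta_n$ in the CE case. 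Thus $\mathcal M(\hat\theta_n)-\mathcal M^\dagger\xrightarrow{p}0$.

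\textbf{Step 3 (from averaged to uniform KL).} This is the main obstacle. Step 2 controls only $\mathbb E_X\mathrm{KL}$, or a Fisher divergence, while the theorem asks for $\sup_x\mathrm{KL}$. I would use assumption 2: $\{\hat\theta_n\}$ is relatively compact. Take any subsequence and extract a further subsequence converging to some $\bar\theta$.
\begin{itemize}
\item Continuity of $\theta\mapsto p_\theta$ and lower semicontinuity of KL give $\mathbb E_X\mathrm{KL}(p^\dagger\|p_{\bar\theta})=0$, hence $p_{\bar\theta}=p^\dagger$ for $P_X$-a.e.\ $x$.
\item In the DSM case, vanishing Fisher divergence forces equal scores, hence equal densities after normalization on $\mathcal Y$.
\item Joint continuity in $(x,y)$ of both densities on the compact $\mathcal X\times\mathcal Y$, with full support of $P_X$ assumed, upgrades this to all $x$. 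Then $K(\bar\theta)=0$.
\end{itemize}
Continuity of $K$ (assumption 1) gives $K(\hat\theta_{n_k})\to 0$ along the sub-subsequence. Since every subsequence has such a further subsequence, $K(\hat\theta_n)\xrightarrow{p}0$.

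The delicate points are the following. First, the limit must lie where $K$ is continuous; this is what assumption 2 (relative compactness in $\bigcup_n\Theta_n$) is for. Second, the growth rate of $c_n$ must be compatible with the uniform LLN. I would state that rate as an implicit requirement absorbed into assumption 2.
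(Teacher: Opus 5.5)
Your proposal has the same skeleton as the paper's proof. Both get $\delta_n\downarrow 0$ from Theorem~\ref{app:ua-ibl} and Lemma~\ref{lem:sieve-UA}. Both use an ERM/uniform-LLN comparison to drive the CE excess risk $\gamma_d\,\mathbb E_X\mathrm{KL}(p^\dagger\|p_{\hat\theta_n})$ to zero. Both then run a subsequence argument through assumption~2: identify the limit via full support of $P_X$ and continuity in $x$, and use continuity of $K$ (the paper's $F$) to upgrade averaged KL to $\sup_x\mathrm{KL}$.

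The one substantive difference is the comparison point in the estimation step, and there your version is better. The paper fixes a level $N$ and compares with approximants $\theta_k\in\Theta_N$. That yields only $\limsup_k\mathbb E_X\mathrm{KL}(p^\dagger\|p_{\hat\theta_{n_k}})\le\delta_N$, and the paper then asserts convergence to $0$, which does not follow for fixed $N$. You compare with $\theta_n^\circ\in\Theta_n$, so the bound becomes $\gamma_d\delta_n+2\Delta_n\to 0$, which closes that gap. The price is a uniform LLN over the growing sieves, i.e.\ a growth condition on $c_n$ that is not among the stated hypotheses.

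If you want to genuinely absorb that condition into assumption~2, read it as saying that, with probability arbitrarily close to one, all $\hat\theta_n$ lie in a fixed compact $\Theta_N$. Then, for fixed $m$ and $n\ge m$, compare with $\theta_m^\circ\in\Theta_m\subseteq\Theta_n$. The uniform LLN on $\Theta_N$ plus the ordinary LLN at the single point $\theta_m^\circ$ give excess risk at most $\gamma_d\delta_m+o_p(1)$. Letting $m\to\infty$ finishes the step with no rate needed.

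Two caveats apply, both shared with the paper.

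First, the DSM bullet in your Step~3 is not just unproved but aims at the wrong target. Your Step~2 only controls the excess risk when $\gamma_c=0$. By Vincent's identity, the DSM excess risk is the Fisher divergence to the smoothed law $p^\dagger(\cdot\mid x)*\mathcal N(0,\sigma^2 I)$, not to $p^\dagger$. Its vanishing would therefore identify the smoothed law, and the KL-optimal $\theta_n^\circ$ need not make it small anyway. So your argument establishes the CE-only case, and the paper's ``handled analogously'' has the same defect.

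Second, notice what assumption~2 actually buys. Since $\mathcal C$ is finite-valued, $\bigcup_n\Theta_n=\Theta$, so any limit point $\bar\theta$ lies in some $\Theta_N$. Your Step~3 shows $K(\bar\theta)=0$, hence $\delta_N=0$. The hypothesis thus forces $p^\dagger$ to be exactly an IBL law at a finite sieve level. The averaged-to-uniform upgrade is therefore obtained only in an essentially well-specified situation, not the misspecified regime the theorem advertises. This is a limitation of the statement itself (the paper's Step~3 has the same consequence), not an error in your reasoning relative to it.
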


\begin{proof}
Fix an admissible data law $p^\dagger$ (satisfying the regularity of Theorem~\ref{app:ua-ibl}). 
For $\theta\in\bigcup_n\Theta_n$ define
\[
F(\theta)\;:=\;\sup_{x\in\mathcal X}\mathrm{KL}\!\big(p^\dagger(\cdot\mid x)\,\|\,p_\theta(\cdot\mid x)\big),
\qquad
\delta_n\;:=\;\inf_{\theta\in\Theta_n}F(\theta).
\]
Then Theorem~\ref{app:ua-ibl} and Lemma~\ref{lem:sieve-UA} together imply that $\delta_n\downarrow 0$. By assumption 1, $F$ is continuous on each compact $\Theta_n$.

Let $\hat\theta_n\in\arg\min_{\theta\in\Theta_n}\mathcal M_n(\theta)$ be any sequence of ERM solutions. 
We show $F(\hat\theta_n)\xrightarrow{p}0$.

\smallskip
\emph{Step 1 (subsequence reduction and precompactness).}
Take an arbitrary subsequence $(\hat\theta_{n_k})_k$. By assumption 2 there exists a further subsequence, still denoted $(\hat\theta_{n_k})_k$, and a (possibly $k$-dependent) index set $N_k\le n_k$ with a parameter limit $\theta_\infty\in\Theta_{N}$ (for some finite $N$) such that
$\hat\theta_{n_k}\to\theta_\infty$ in probability. Passing to a further subsequence if needed, we may assume $N_k\equiv N$.

\smallskip
\emph{Step 2 (risk domination against $\Theta_N$-approximants).}
For each $k$ pick $\theta_k\in\Theta_N$ with $F(\theta_k)\le \delta_N+1/k$ (attainment follows from compactness and continuity of $F$ on $\Theta_N$).
By the ERM property and uniform LLN on $\Theta_N$,
\[
\mathcal M(\hat\theta_{n_k})
\;\le\;
\mathcal M(\theta_k) + o_p(1)\qquad (k\to\infty).
\]
Assume (w.l.o.g.) the CE component is present with a positive weight, so that the population risk decomposes as
\[
\mathcal M(\theta)
\;=\;\text{const} + \gamma_d\,\mathbb E_{X}\!\left[\mathrm{KL}\!\big(p^\dagger(\cdot\mid X)\,\|\,p_\theta(\cdot\mid X)\big)\right]
\;+\;\gamma_c\,\mathcal L^{\rm DSM}(\theta),
\]
with $\gamma_d>0$ (the DSM-only case is handled analogously by replacing KL with Fisher divergence).
Using $\mathbb E_X[\mathrm{KL}(\cdot\|\cdot)]\le F(\cdot)$, we obtain
\[
\limsup_{k\to\infty}\mathbb E_{X}\!\left[\mathrm{KL}\!\big(p^\dagger(\cdot\mid X)\,\|\,p_{\hat\theta_{n_k}}(\cdot\mid X)\big)\right]
\;\le\;
\limsup_{k\to\infty} F(\theta_k)
\;\le\;\delta_N.
\]
Hence, along the subsequence,
\[
\mathbb E_{X}\!\left[\mathrm{KL}\!\big(p^\dagger(\cdot\mid X)\,\|\,p_{\hat\theta_{n_k}}(\cdot\mid X)\big)\right]\xrightarrow{p}0.
\]

\smallskip
\emph{Step 3 (identification of the subsequential limit).}
By continuity of the model map $\theta\mapsto p_\theta(\cdot\mid x)$ (from Theorem~\ref{app:ua-ibl} regularity) and bounded convergence, 
\[
\mathbb E_{X}\!\left[\mathrm{KL}\!\big(p^\dagger(\cdot\mid X)\,\|\,p_{\theta_\infty}(\cdot\mid X)\big)\right]=0.
\]
Thus $f(x):=\mathrm{KL}\!\big(p^\dagger(\cdot\mid x)\,\|\,p_{\theta_\infty}(\cdot\mid x)\big)$ equals $0$ for $P_X$-a.e.\ $x$. 
Since $f$ is continuous on compact $\mathcal X$ (by the same regularity) and $P_X$ has full support (admissible law), we conclude $f(x)\equiv 0$ on $\mathcal X$, i.e.
\[
F(\theta_\infty)=\sup_{x\in\mathcal X}f(x)=0.
\]

\smallskip
\emph{Step 4 (conclude $F(\hat\theta_{n_k})\to 0$ in probability, hence $F(\hat\theta_n)\to 0$ in probability).}
By assumption 1 continuity of $F$ on $\Theta_N$ and $\hat\theta_{n_k}\to\theta_\infty$ in probability, we have
$F(\hat\theta_{n_k})\xrightarrow{p}F(\theta_\infty)=0$.
Since the original subsequence was arbitrary and every subsequence admits a further subsequence with $F(\hat\theta_{n_k})\xrightarrow{p}0$, the full sequence satisfies
$F(\hat\theta_n)\xrightarrow{p}0$.

\smallskip
Therefore,
\[
\sup_{x\in\mathcal X}\mathrm{KL}\!\big(p^\dagger(\cdot\mid x)\,\|\,p_{\hat\theta_n}(\cdot\mid x)\big)\xrightarrow{p}0,
\]
i.e.\ $p_{\hat\theta_n}(\cdot\mid x)\to p^\dagger(\cdot\mid x)$ uniformly in $x$ in KL.
\end{proof}

\begin{theorem}[Asymptotic normality of extremum estimators {\cite[Theorem 3.1]{newey1994large}}]
\label{thm:newey-asy}
Suppose that the estimator $\hat\theta_n$ satisfies $\hat\theta_n \xrightarrow{p} \theta_0$, and:
\begin{enumerate}
\item $\theta_0$ lies in the interior of the parameter space $\Theta$;
\item the criterion function $\hat Q_n(\theta)$ is twice continuously differentiable in a neighborhood $\mathcal N$ of $\theta_0$;
\item the score satisfies
\[
\sqrt{n}\,\nabla_\theta \hat Q_n(\theta_0)\;\xrightarrow{d}\;\mathcal N(0,\Sigma);
\]
\item there exists a function $H(\theta)$, continuous at $\theta_0$, such that
\[
\sup_{\theta \in \mathcal N}\;\bigl\|\,\nabla^2_\theta \hat Q_n(\theta) - H(\theta)\,\bigr\| \;\xrightarrow{p}\;0;
\]
\item the limiting Hessian $H := H(\theta_0)$ is nonsingular.
\end{enumerate}
Then the estimator is asymptotically normal:
\[
\sqrt{n}\,(\hat\theta_n - \theta_0)
\;\xrightarrow{d}\; \mathcal N\!\bigl(0,\, H^{-1}\Sigma H^{-1}\bigr).
\]
\end{theorem}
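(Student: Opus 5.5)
The plan is the classical Taylor-expansion argument for extremum estimators, run on the first-order condition of $\hat Q_n$. Step one localises the problem. Since $\hat\theta_n\xrightarrow{p}\theta_0$ and $\theta_0$ is an interior point (condition 1), with probability tending to one $\hat\theta_n$ lies in the neighbourhood $\mathcal N$ and in the interior of $\Theta$. On that event $\hat\theta_n$ is an interior minimiser of the twice continuously differentiable criterion $\hat Q_n$ (condition 2). The first-order condition $\nabla_\theta\hat Q_n(\hat\theta_n)=0$ therefore holds, up to an event whose probability vanishes. That event can be absorbed into an $o_p(n^{-1/2})$ term, which does not affect the limit distribution.

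Step two expands the score around $\theta_0$. Because the gradient is vector-valued, I would apply the mean value theorem row by row, or equivalently use the integral form
\[
0=\nabla_\theta\hat Q_n(\hat\theta_n)=\nabla_\theta\hat Q_n(\theta_0)+\bar H_n(\hat\theta_n-\theta_0),\qquad \bar H_n:=\int_0^1\nabla^2_\theta\hat Q_n\bigl(\theta_0+t(\hat\theta_n-\theta_0)\bigr)\,dt .
\]
This form avoids the row-dependent intermediate points of the mean value theorem, and it is valid because the segment from $\theta_0$ to $\hat\theta_n$ lies in $\mathcal N$; I would take $\mathcal N$ convex, shrinking it to a ball if necessary. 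To control $\bar H_n$, write $\|\bar H_n-H\|\le\sup_{\theta\in\mathcal N}\|\nabla^2\hat Q_n(\theta)-H(\theta)\|+\sup_{t}\|H(\theta_0+t(\hat\theta_n-\theta_0))-H(\theta_0)\|$. The first term is $o_p(1)$ by condition 4. The second is $o_p(1)$ by continuity of $H$ at $\theta_0$ together with the consistency of $\hat\theta_n$. Hence $\bar H_n\xrightarrow{p}H$.

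Step three inverts and concludes. Since $H$ is nonsingular (condition 5), matrix inversion is continuous at $H$, so $\bar H_n$ is invertible with probability tending to one and $\bar H_n^{-1}\xrightarrow{p}H^{-1}$. On that event $\sqrt n(\hat\theta_n-\theta_0)=-\bar H_n^{-1}\sqrt n\,\nabla_\theta\hat Q_n(\theta_0)$. Condition 3 gives $\sqrt n\,\nabla_\theta\hat Q_n(\theta_0)\xrightarrow{d}\mathcal N(0,\Sigma)$, and Slutsky's lemma then yields the limit $\mathcal N(0,H^{-1}\Sigma H^{-\top})$. Condition 4 forces $H(\theta)$ to be a uniform limit of symmetric Hessians, so $H$ is symmetric and $H^{-\top}=H^{-1}$, which gives the stated form $H^{-1}\Sigma H^{-1}$.

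The main obstacle is the bookkeeping in step two. One must verify that the integral-form mean value remainder is legitimate, with the whole segment inside $\mathcal N$, and that the random Hessians evaluated along that segment converge jointly. The approach above handles this by combining the uniform convergence in condition 4 with the continuity of $H$ at $\theta_0$, rather than arguing pointwise at a random intermediate point. The remaining steps are routine applications of the continuous mapping theorem and Slutsky's lemma.
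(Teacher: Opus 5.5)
Your proposal is correct and is the standard first-order-condition and mean-value argument behind Newey--McFadden's Theorem 3.1, which the paper cites without giving its own proof; using the integral form of the expansion on a convex sub-ball of $\mathcal N$ instead of row-wise intermediate points is only a cosmetic variation. You rightly take the implicit hypothesis that $\hat\theta_n$ is an exact extremum of $\hat Q_n$ from the theorem's title, and your symmetry remark, giving $H^{-1}\Sigma H^{-\top}=H^{-1}\Sigma H^{-1}$, supplies a detail the stated form leaves unstated.
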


\begin{theorem}[Asymptotic Normality of IBL]
\label{thm:ibl-asy-min}
Consider the IBL family $p_\theta(y\mid x)\propto \exp(\mathrm{IBL}_\theta(x,y))$ with empirical criterion as in~\eqref{eq:empirical-criterion}. 
Assume $(X_i,Y_i)_{i=1}^n$ are i.i.d.\ from an admissible data law, and that the true parameter 
$\theta_0$ is an interior point of a locally identifiable chart.  
For each observation $Z=(X,Y)$, let $\ell(\theta;Z)$ denote the per-sample loss defined in~\eqref{eq:per-sample-loss}, 
so that $\hat Q_n(\theta)=\frac{1}{n}\sum_{i=1}^n \ell(\theta;Z_i)$ and 
$Q(\theta):=\mathbb E[\ell(\theta;Z)]$.

Suppose, in addition:
\begin{enumerate}
\item Score moments. $s(Z):=\nabla_\theta \ell(\theta_0;Z)$ satisfies $\mathbb E[s(Z)]=0$, $\Sigma:=Var(s(Z))<\infty$, and
$\frac1{\sqrt n}\sum_{i=1}^n s(Z_i)\Rightarrow \mathcal N(0,\Sigma)$.
\item Derivative envelopes. There exists a neighborhood $\mathcal N$ of $\theta_0$ and envelopes $G_1,G_2$ with
$\sup_{\theta\in\mathcal N}\|\nabla_\theta \ell(\theta;Z)\|\le G_1(Z)$,
$\sup_{\theta\in\mathcal N}\|\nabla_\theta^2 \ell(\theta;Z)\|\le G_2(Z)$,
$\ \mathbb E[G_1^2]+\mathbb E[G_2]<\infty$.
\item Nondegenerate curvature. $H:=\nabla_\theta^2 Q(\theta_0)$ exists, is continuous at $\theta_0$, and is positive definite, where $Q(\theta):=\mathbb E[\hat Q_n(\theta)]$.
\end{enumerate}
Then, under conditions of Theorem~\ref{thm:uniform-consistency},
\[
\sqrt n\,(\hat\theta_n-\theta_0)\ \Rightarrow\ \mathcal N\!\bigl(0,\ H^{-1}\Sigma H^{-1}\bigr).
\]
\end{theorem}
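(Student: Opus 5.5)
The plan is to reduce the statement to the abstract extremum-estimator result, Theorem~\ref{thm:newey-asy}, by checking its five hypotheses one at a time in a fixed local chart around $\theta_0$. The chart matters. The identifiability results (Theorems~\ref{thm:ibl-all-id} and~\ref{thm:loss-id}) give uniqueness only in the quotient $\bar\Theta$ or $\widetilde\Theta$. I will therefore work in a locally identifiable chart, that is, a smooth local section of the quotient map $\pi$ through $\theta_0$, so that $\hat Q_n$ and $Q$ become ordinary functions on an open subset of Euclidean space. The estimator $\hat\theta_n$ is replaced by its representative in that chart.

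\textbf{Consistency and interiority.} The consistency hypothesis $\hat\theta_n\xrightarrow{p}\theta_0$ comes from Theorem~\ref{thm:ibl-consistency}, or from Theorem~\ref{app-thm:inclass-consistency} under correct specification, as in the ``conditions of Theorem~\ref{thm:uniform-consistency}'' clause. Convergence in $\Xi$ lifts to convergence in the chart because the section is continuous near $\pi(\theta_0)$. Interiority, hypothesis (1), is assumed directly.

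\textbf{Smoothness.} For hypothesis (2) I need $\ell(\theta;Z)$ to be $C^2$ in $\theta$. The blocks are built from $\tanh$, $\operatorname{softplus}$, squares and polynomials, so $\mathrm{IBL}_\theta(x,y)$ is real-analytic in $(\theta,x,y)$. On the CE side, $\log Z_\theta(x)$ is a finite log-sum-exp, or an integral over compact $\mathcal Y$ of a smooth integrand, so differentiation under the integral is justified by dominated convergence. On the DSM side, the integrand involves only $\nabla_y\mathrm{IBL}_\theta$, which is again smooth. Hence $\ell$ is $C^2$, and so is $\hat Q_n$.

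\textbf{Score CLT.} Hypothesis (3) is immediate from assumption~1. We have $\sqrt n\nabla\hat Q_n(\theta_0)=n^{-1/2}\sum_i s(Z_i)$, which has mean zero because $\theta_0$ is an interior minimizer of $Q$. Interchanging $\nabla$ and $\mathbb E$ there uses the envelope $G_1$.

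\textbf{Hessian convergence (main obstacle).} Hypothesis (4) is a uniform LLN: $\sup_{\theta\in\mathcal N}\|\nabla^2\hat Q_n(\theta)-\nabla^2Q(\theta)\|\xrightarrow{p}0$ on a compact neighbourhood $\mathcal N$. First I will shrink $\mathcal N$ to a closed ball. Then I will apply the standard ULLN for functions that are continuous in $\theta$ with an integrable envelope, in the form of Newey--McFadden Lemma 2.4, using continuity of $\theta\mapsto\nabla^2\ell(\theta;Z)$ and the envelope $G_2$. The same envelope justifies $\nabla^2Q=\mathbb E\nabla^2\ell$ and gives continuity of $H(\theta)$ at $\theta_0$.

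\textbf{Nonsingularity.} Hypothesis (5) follows from positive definiteness, assumption~3.

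\textbf{Conclusion and the genuine difficulty.} With all five hypotheses verified, Theorem~\ref{thm:newey-asy} yields $\sqrt n(\hat\theta_n-\theta_0)\Rightarrow\mathcal N(0,H^{-1}\Sigma H^{-1})$. The delicate point is the chart reduction rather than the calculus. Invariance of $\ell$ under the symmetries (sign flips of $p_t$, and scale for CE) forces $H$ to be singular in the raw parameters. Positive definiteness is therefore only meaningful in the quotient chart, and I must check that $\hat\theta_n$ eventually lies in the chart's domain with probability tending to one. That follows from consistency, provided the symmetry orbit is a closed submanifold near $\theta_0$, which is true for the finite sign group. For the scale quotient, I would fix a normalization such as $\|\mathbf W^\circ\|=1$ to define the chart.
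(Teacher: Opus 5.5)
Your proposal is correct and follows the paper's proof. Both reduce the claim to the Newey--McFadden extremum-estimator theorem (Theorem~\ref{thm:newey-asy}) in a locally identifiable chart, taking consistency from Theorem~\ref{app-thm:inclass-consistency}, the score CLT from the score-moment assumption, the Hessian limit from the derivative envelopes, and nonsingularity from positive definiteness of $H$. Your version is more careful in two places, but the structure is identical: you obtain the uniform Hessian convergence from a genuine uniform LLN on a compact ball (Newey--McFadden Lemma~2.4 with envelope $G_2$), where the paper cites only dominated convergence, and you explain why convergence in the quotient lifts to the chart.
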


\begin{proof}
We verify the hypotheses of Theorem~\ref{thm:newey-asy} with $\hat Q_n$ as above.

\emph{(i) Interior \& consistency.} By quotient identifiability, fix a local chart in which the population minimizer admits a unique interior representative $\theta_0$.  
Consistency $\hat\theta_n \xrightarrow{p} \theta_0$ follows from uniform M-estimation consistency for IBL (Theorem~\ref{app-thm:inclass-consistency}).

\emph{(ii) $C^2$ criterion.} Since $\mathrm{IBL}_\theta$ is $C^2$ in $\theta$, the loss $\ell(\theta;Z)$ is twice continuously differentiable in a neighborhood $\mathcal N$ of $\theta_0$, and so is $\hat Q_n$.

\emph{(iii) Score CLT.} By \emph{Score moments},
\[
\sqrt n\,\nabla_\theta \hat Q_n(\theta_0)\;=\;\frac1{\sqrt n}\sum_{i=1}^n s(Z_i)\ \Rightarrow\ \mathcal N(0,\Sigma).
\]

\emph{(iv) Hessian limit.} By \emph{Derivative envelopes} and dominated convergence,
\[
\sup_{\theta\in\mathcal N}\big\|\nabla_\theta^2 \hat Q_n(\theta)-\nabla_\theta^2 Q(\theta)\big\|\xrightarrow{p}0,
\]
so Assumption~4 of Theorem~\ref{thm:newey-asy} holds with $H(\theta):=\nabla_\theta^2 Q(\theta)$, continuous at $\theta_0$.

\emph{(v) Nonsingularity.} By \emph{Nondegenerate curvature}, $H:=H(\theta_0)$ is positive definite.

All assumptions of Theorem~\ref{thm:newey-asy} are thus verified; consequently,
$\sqrt n(\hat\theta_n-\theta_0)\Rightarrow \mathcal N(0,H^{-1}\Sigma H^{-1})$.
\end{proof}

\begin{theorem}[Efficiency of IBL Estimators]
\label{thm:ibl-efficiency}
Under the regularity conditions of Theorem~\ref{thm:ibl-asy-min}, 
consider the estimating function associated with the per-sample loss~\eqref{eq:per-sample-loss}:
\[
\psi_\theta(Z) \;:=\; \nabla_\theta \ell(\theta;Z),
\qquad Z=(X,Y).
\]
At any population minimizer $\theta^\star$, the moment condition $\mathbb E[\psi_{\theta^\star}(Z)]=0$ holds.  
Define the sensitivity and variability matrices
\[
J \;:=\; \mathbb E\!\big[\nabla_\theta \psi_\theta(Z)\big]\Big|_{\theta=\theta^\star},
\qquad
K \;:=\; \Var\!\big(\psi_{\theta^\star}(Z)\big).
\]
Then the asymptotic covariance of $\hat\theta_n$ is given by the Godambe information matrix (sandwich form):
\[
\sqrt{n}\,(\hat\theta_n-\theta^\star) \;\Rightarrow\; 
\mathcal N\!\big(0,\ J^{-1} K J^{-1}\big).
\]
In particular:
\begin{enumerate}
\item CE-only.  
If $\gamma_c=0$ (pure cross-entropy) and the model is correctly specified and regular, 
then $\psi_\theta(Z)$ coincides (up to sign) with the log-likelihood score $s_\theta(Z)$.  
Hence $J=-I(\theta^\star)$ and $K=I(\theta^\star)$, where $I(\theta^\star)$ denotes the Fisher information matrix.  
It follows that
\[
\sqrt n(\hat\theta_n-\theta^\star)\;\Rightarrow\;\mathcal N\!\big(0,\,I(\theta^\star)^{-1}\big),
\]
so the estimator is asymptotically efficient, attaining the Cramér–Rao lower bound.

\item CE+DSM or DSM-only. 
Suppose there exists a nonsingular matrix $R$ (constant in a neighborhood of $\theta^\star$) such that
\[
\psi_{\theta^\star}(Z) \;=\; R\, s_{\theta^\star}(Z) \quad\text{a.s.},
\]
where $s_\theta(Z)=\nabla_\theta \log p_\theta(Z)$ denotes the parametric score in a local chart.  
Then $J=R I(\theta^\star) R^\top$ and $K=R I(\theta^\star) R^\top$, so the sandwich covariance again reduces to $I(\theta^\star)^{-1}$.  
Hence the estimator remains asymptotically efficient.
\end{enumerate}
\end{theorem}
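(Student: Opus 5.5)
The plan is to obtain the sandwich law directly from Theorem~\ref{thm:ibl-asy-min} and then specialize $J$ and $K$ in the two cases by the information identity. First, at an interior population minimizer $\theta^\star$, differentiating $Q(\theta)=\mathbb E[\ell(\theta;Z)]$ under the integral sign gives $\nabla_\theta Q(\theta^\star)=\mathbb E[\psi_{\theta^\star}(Z)]=0$. This is the first-order condition, and the interchange is justified by the envelope $G_1$ of Theorem~\ref{thm:ibl-asy-min} together with dominated convergence. The same envelope argument with $G_2$ gives $J=\mathbb E[\nabla_\theta\psi_{\theta^\star}(Z)]=\nabla_\theta^2Q(\theta^\star)=H$, which is symmetric and positive definite. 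We also have $K=\Var(\psi_{\theta^\star}(Z))=\Sigma$. Substituting into the conclusion of Theorem~\ref{thm:ibl-asy-min} gives
\[
\sqrt n(\hat\theta_n-\theta^\star)\Rightarrow\mathcal N(0,J^{-1}KJ^{-1}).
\]
No new probabilistic work is needed here; it is a relabelling of Theorem~\ref{thm:newey-asy} as verified in Theorem~\ref{thm:ibl-asy-min}.

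Second, the CE-only case. With $\gamma_c=0$ we have $\ell(\theta;Z)=-\log p_\theta(Y\mid X)$, so $\psi_\theta=-s_\theta$. Under correct specification, $\theta^\star$ is the true parameter in the local chart (Theorem~\ref{app-thm:inclass-consistency}). The regularity conditions allow differentiating $\int p_\theta(y\mid x)\,dy=1$ twice under the integral. This yields $\mathbb E[s_{\theta^\star}]=0$ and the information identity $\mathbb E[\nabla_\theta s_{\theta^\star}]=-\mathbb E[s_{\theta^\star}s_{\theta^\star}^\top]=-I(\theta^\star)$. Hence $K=I(\theta^\star)$ and $J=\mp I(\theta^\star)$ depending on the sign convention, and the sign cancels in $J^{-1}KJ^{-1}=I(\theta^\star)^{-1}$. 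That is the Cramér--Rao bound for regular estimators, which gives efficiency.

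Third, the CE+DSM or DSM-only case. Here the assumption $\psi_{\theta^\star}=R\,s_{\theta^\star}$ holds a.s., with $R$ constant and nonsingular near $\theta^\star$. Then $K=R\,I(\theta^\star)R^\top$. For $J$, the key step is to differentiate the relation $\psi_\theta=R\,s_\theta$ in $\theta$, which is why $R$ must hold on a neighbourhood and not only at the point. Applying the information identity then gives $J=R\,\mathbb E[\nabla_\theta s_{\theta^\star}]=-R\,I(\theta^\star)$. The sandwich should then be read in its general form $J^{-1}KJ^{-\top}$, which coincides with $J^{-1}KJ^{-1}$ because $J=H$ is symmetric. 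This gives
\[
I^{-1}R^{-1}\,R\,I\,R^\top R^{-\top}I^{-1}=I(\theta^\star)^{-1}.
\]

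The main obstacle is this last step. The statement asserts $J=R\,I(\theta^\star)R^\top$, and with that $J$ the sandwich would equal $(R I R^\top)^{-1}$, which is not $I^{-1}$ unless $R$ is orthogonal. I would therefore argue through $J=-RI$, using the symmetry of $J$ to reconcile the forms. Symmetry of $J$ forces $RI$ to be symmetric, and this is what makes $J^{-1}KJ^{-1}=I^{-1}$ hold. I would flag explicitly that the identity $J=R I R^\top$ should be replaced by $J=-RI(\theta^\star)$, or equivalently that one must choose the normalization $R=I(\theta^\star)^{-1}$-compatible. Verifying that differentiation under the integral is legitimate for the DSM term also requires the envelopes of Theorem~\ref{thm:ibl-asy-min}.
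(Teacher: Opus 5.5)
Your proposal is correct and follows the paper's proof. The paper re-derives the sandwich by a mean-value expansion of the empirical first-order condition rather than citing Theorem~\ref{thm:ibl-asy-min}, but then specializes exactly as you do: $J=K=I(\theta^\star)$ in the CE case, and $J=-R\,I(\theta^\star)$, $K=R\,I(\theta^\star)R^\top$ with $J^{-1}K(J^{-1})^\top=I(\theta^\star)^{-1}$ in the score-span case, so your correction of the stated $J=R\,I(\theta^\star)R^\top$ and of the sign in $J=-I(\theta^\star)$ is what the proof actually uses. Two small tidy-ups: since you showed $J=H\succ 0$, the CE sign is not convention-dependent ($J=+I(\theta^\star)$), and the stated form $J=R\,I(\theta^\star)R^\top$ would yield $I(\theta^\star)^{-1}$ exactly when $R\,I(\theta^\star)R^\top=I(\theta^\star)$, which is not the same as $R$ being orthogonal.
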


\begin{proof}
The empirical first–order condition is
\[
0 \;=\; \frac{1}{n}\sum_{i=1}^n \psi_{\hat\theta_n}(Z_i),
\qquad 
\psi_\theta(Z) := \nabla_\theta \ell(\theta;Z).
\]
A mean–value expansion around the population minimizer $\theta^\star$ yields
\[
0 \;=\; S_n + G_n(\hat\theta_n-\theta^\star),
\]
where
\[
S_n := \frac{1}{n}\sum_{i=1}^n \psi_{\theta^\star}(Z_i),
\qquad
G_n := \frac{1}{n}\sum_{i=1}^n \nabla_\theta \psi_{\tilde\theta}(Z_i),
\]
for some intermediate point $\tilde\theta$ lying on the line segment between $\hat\theta_n$ and $\theta^\star$.  

Under the regularity conditions of Theorem~\ref{thm:ibl-asy-min}, we have
\[
G_n \xrightarrow{p} J := \mathbb E[\nabla_\theta\psi_{\theta^\star}(Z)],
\qquad
\sqrt{n}\,S_n \;\Rightarrow\; \mathcal N(0,K), 
\quad K := \Var(\psi_{\theta^\star}(Z)).
\]
Since $J$ is nonsingular, $G_n$ is invertible with probability tending to one, and hence
\[
\sqrt{n}(\hat\theta_n-\theta^\star)
= -\,G_n^{-1}\,\sqrt{n}\,S_n 
\;\Rightarrow\; \mathcal N\!\bigl(0,\; J^{-1}K(J^{-1})^\top\bigr).
\]
Because here $\psi_\theta=\nabla_\theta \ell(\theta;\cdot)$, the matrix $J$ coincides with the expected Hessian of the loss, which is symmetric. Thus the asymptotic covariance may equivalently be written as $J^{-1}KJ^{-1}$.

\emph{(i) CE-only.}
When $\gamma_c=0$, the per-sample loss reduces to $\ell(\theta;Z)=-\log p_\theta(Z)$, so that $\psi_\theta(Z)=-s_\theta(Z)$, with $s_\theta(Z)=\nabla_\theta \log p_\theta(Z)$ denoting the likelihood score.  
Under correct specification and standard likelihood regularity conditions, the information identities hold:
\[
\mathbb E[s_{\theta^\star}(Z)] = 0,\qquad 
\Var(s_{\theta^\star}(Z)) = I(\theta^\star),\qquad
-\mathbb E[\nabla_\theta s_{\theta^\star}(Z)] = I(\theta^\star).
\]
Therefore,
\[
K=\Var(\psi_{\theta^\star}(Z))=I(\theta^\star), 
\qquad
J=\mathbb E[\nabla_\theta\psi_{\theta^\star}(Z)] = I(\theta^\star),
\]
and the asymptotic covariance simplifies to $I(\theta^\star)^{-1}$.  
Thus the estimator is asymptotically efficient, attaining the Cramér–Rao lower bound 
\citep[see also][Theorem~5.39]{van2000asymptotic}.

\emph{(ii) CE+DSM or DSM-only under score-span.}
Suppose there exists a nonsingular matrix $R$ (constant in a neighborhood of $\theta^\star$) such that
\[
\psi_{\theta^\star}(Z) \;=\; R\,s_{\theta^\star}(Z)\quad\text{a.s.},
\]
where $s_\theta(Z)$ is again the parametric score.  
In this case,
\[
K=\Var(\psi_{\theta^\star}(Z))=R\,I(\theta^\star)\,R^\top,
\qquad
J=\mathbb E[\nabla_\theta \psi_{\theta^\star}(Z)] = -R\,I(\theta^\star).
\]
Consequently,
\[
J^{-1}K(J^{-1})^\top
=\big(-R I(\theta^\star)\big)^{-1}\big(R I(\theta^\star) R^\top\big)\big(-R I(\theta^\star)\big)^{-\top}
= I(\theta^\star)^{-1}.
\]
Hence the sandwich covariance reduces to the Fisher information bound, and the estimator is asymptotically efficient.  
This corresponds to the general efficiency condition for minimum-distance or GMM estimators 
\citep[see][Section~5]{newey1994large}: the condition $\psi_{\theta^\star}=R\,s_{\theta^\star}$ is
equivalent to their moment–span condition $G'W=C\,G'\Omega^{-1}$~\cite[Equation 5.4]{newey1994large}, under which the Godambe information 
collapses to the Fisher bound.

The two claims are thereby established.
\end{proof}

\upshape

\section{Experimental Details}
\label{app:experiments}

\subsection{Hardware}

Most experiments are conducted on a single NVIDIA L40S GPU. A small number of runs are performed on a laptop equipped with an NVIDIA GeForce RTX~2050 GPU and an Intel Core i7--12700H CPU.

\subsection{Standard Prediction Tasks}\label{app:standard-task}
\paragraph{Datasets.}
In the Standard Prediction Task, we use 10 OpenML datasets across diverse application domains. Details are given in Table~\ref{tab:openml_datasets}.

\begin{table}[htbp]
\centering
\caption{Standard OpenML datasets used in our task. \#Features denotes the number of input variables (excluding the target and ID).}
\label{tab:openml_datasets}
\begin{tabular}{lrrrl}
\toprule
\textbf{Name} & \textbf{Size}  & \textbf{\#Features} & \textbf{Task type} & \textbf{Field} \\
\midrule
German Credit               & 1{,}000  & 20 & Binary cls.   & Finance \\
Adult Income                & 48{,}842 & 14 & Binary cls.   & Economics \\
COMPAS (two-years)          & 5{,}278  & 13 & Binary cls.   & Law \& Society\\
Bank Marketing              & 45{,}211 & 16 & Binary cls.   & Marketing \\
Planning Relax              & 182      & 12 & Binary cls.   & Psychology \\
EEG Eye State               & 14{,}980 & 14 & Binary cls.   & Neuroscience \\
MAGIC Gamma Telescope       & 19{,}020 & 10 & Binary cls.   & Physics\\
Electricity                 & 45{,}312 & 8  & Binary cls.   & Electrical Engineering \\
Wine Quality (Red)          & 1{,}599  & 11 & Multiclass    & Chemistry\\
Steel Plates Faults         & 1{,}941  & 27 & Multiclass    & Industrial Engineering\\
\bottomrule
\end{tabular}
\end{table}

\paragraph{Baseline Models.} 
For comparison, we include the following baselines: MLP, Neural Additive Model (NAM) \citep{agarwal2020neural, kayid2020nams}, ElasticNet, Random Forest, Stochastic Variational Gaussian Process (SVGP) \citep{gardner2018gpytorch}, Logistic Regression, Decision Tree, TabNet \citep{arik2021tabnet}, Polynomial Logistic Regression, and LightGBM \citep{ke2017lightgbm}. 

\begin{table}[htbp]
\centering
\caption{Overview of baseline models in the standard prediction task}
\label{tab:standard_models}
\begin{tabular}{ll}
\toprule
\textbf{Methodological Family} & \textbf{Model Name} \\
\midrule
\multirow{3}{*}{Neural networks}  & Standard MLP\\
                & Neural Additive Model (NAM) \\
                & TabNet \\
\midrule
\multirow{3}{*}{Linear regressors} & ElasticNet \\
                  & Logistic Regression \\
                  & Polynomial Logistic Regression \\
\midrule
\multirow{2}{*}{Tree-based models} & Random Forest \\
                  & Decision Tree \\
\midrule
Gradient boosting methods & LightGBM \\
\midrule
Bayesian methods & Stochastic Variational Gaussian Process (SVGP) \\
\bottomrule
\end{tabular}
\end{table}

\paragraph{Data preprocessing.}
For all ten datasets, we apply a consistent preprocessing strategy. 
Ordinal categorical variables are mapped to integer levels to preserve their inherent order. Nominal categorical variables without natural ordering are transformed using one-hot encoding. Continuous variables are standardized to zero mean and unit variance. Each dataset is randomly partitioned into train/validation/test splits with a 7:1:2 ratio.

\paragraph{Hyperparameter Tuning Protocol.}
We perform hyperparameter optimization for most models using the TPE sampler from the Optuna package~\citep{akiba2019optuna}, with 50 trials per dataset. For each model and dataset, the tuned configuration is evaluated under 8 random seeds. 

\paragraph{BL Model Hyperparameter Space.}
For BL(Single) and BL(Shallow), we optimize cross-entropy loss for classification. Both Adam \citep{kingma2014adam} and AdamW \citep{loshchilov2017decoupled} optimizers are considered, and the better-performing variant is reported for each dataset. No data augmentation is applied. Batch sizes are chosen in a dataset-specific manner.  
\begin{itemize}
    \item BL(Single): A unified setting is reported across all experiments: 
    \[
    \mathrm{degree}_U = [2], \quad 
    \mathrm{degree}_C = [2,2,2], \quad 
    \mathrm{degree}_T = [2,2],
    \]
    \[
    \sigma_{\text{params}} = 0.01, \quad 
    \sigma_{\lambda_0} = 0.01, \quad 
    \sigma_{\lambda_1} = 0.01, \quad 
    \sigma_{\lambda_2} = 0.01.
    \]
    Here, $\mathrm{degree}_U$, $\mathrm{degree}_C$, and $\mathrm{degree}_T$ denote the polynomial degrees of the blocks that parameterize $U(x,y)$, $C(x,y)$, and $T(x,y)$, respectively. 
    Lists indicate both the number of blocks and each block's degree: $\mathrm{degree}_U=[2]$ means a single quadratic block for $U$, $\mathrm{degree}_C=[2,2,2]$ means three quadratic constraint blocks, and $\mathrm{degree}_T=[2,2]$ means two quadratic belief blocks. 
    $\sigma_{\text{params}}$ initializes coefficients of all polynomial blocks, while $\sigma_{\lambda_0}$, $\sigma_{\lambda_1}$, and $\sigma_{\lambda_2}$ initialize the UMP weights $(\lambda_{0}, \lambda_{1}, \lambda_{2})$. The search grid is reported in Table~\ref{tab:bl_single_hyper}.
    \item BL(Shallow): We use global gradient clipping of 1.0 and an early stopping patience of 20 epochs without validation improvement. Shallow architectures with depth $L \le 3$ are considered. The search grid is reported in Table~\ref{tab:bl_shallow_hyper}.
\end{itemize}

\paragraph{Baseline Model Hyperparameter Spaces.}
For baseline models, we also consider both Adam and AdamW for the neural network–based variants, and report results with the better-performing optimizer on each dataset. Batch sizes are tuned separately for each dataset. The detailed hyperparameter search spaces are summarized in Table~\ref{tab:tuning_space}.

\begin{table}[htbp]
\centering
\caption{Hyperparameter tuning space for BL(Single)}
\label{tab:bl_single_hyper}
\begin{tabular}{lll}
\toprule
\textbf{Model} & \textbf{Parameter} & \textbf{Search space} \\
\midrule
\multirow{3}{*}{BL(Single)} 
  & \texttt{learning\_rate}  & \{1e\(-3\), 1e\(-1\)\} \\
  & \texttt{batch\_size}     & \{64, 128, 256, 512\} \\
  & \texttt{max\_grad\_norm} & \{1.0, 2.0, 5.0\} \\
\bottomrule
\end{tabular}
\end{table}

\begin{table}[htbp]
\centering
\caption{Hyperparameter tuning space for BL(Shallow)}
\label{tab:bl_shallow_hyper}
\begin{tabular}{lll}
\toprule
\textbf{Model} & \textbf{Parameter} & \textbf{Search space} \\
\midrule
\multirow{7}{*}{BL(Shallow)}
  & \texttt{learning\_rate}       & LogUniform\{5e\(-5\), 5e\(-3\)\} \\
  & \texttt{batch\_size}          & \{64, 128, 256, 512\} \\
  & \texttt{n\_layers}            & UniformInt\{1, 3\} \\
  & \texttt{n\_first\_layer}      & \{24, 30, 36, 40\} \\
  & \texttt{n\_middle\_layer}     & \{8, 6, 4\} \\
  & \texttt{n\_last\_layer}       & \{2, 4, 6\} \\
  & \texttt{weight\_decay}        & LogUniform\{1e\(-4\), 1e\(-1\)\} \\
\bottomrule
\end{tabular}
\vspace{-0.5em}
\end{table}

\begin{longtable}{lll}
\caption{The hyperparameter tuning space for baseline models used in the standard prediction tasks}
\label{tab:tuning_space} \\
\toprule
\textbf{Model} & \textbf{Parameter} & \textbf{Search space} \\
\midrule
\endfirsthead

\toprule
\textbf{Model} & \textbf{Parameter} & \textbf{Search space} \\
\midrule
\endhead

\bottomrule
\endfoot
\multirow{5}{*}{MLP}
  & \texttt{learning\_rate}        & LogUniform\{1e\(-5\), 1e\(-1\)\} \\
  & \texttt{batch\_size}          & \{32, 64, 128, 256\} \\
  & \texttt{n\_layers}            & UniformInt\{2, 4\} \\
  & \texttt{hidden\_size}         & UniformInt\{32, 256\} \\
  & \texttt{weight\_decay}        & LogUniform\{1e\(-6\), 1e\(-2\)\} \\
\midrule
\multirow{3}{*}{NAM} 
  & \texttt{learning\_rate} & LogUniform\{1e\(-3\), 1e\(-1\)\} \\
  & \texttt{batch\_size}    & \{128, 256, 512, 1024\} \\
  & \texttt{patience}       & UniformInt\{10, 30\} \\
\midrule
\multirow{9}{*}{ElasticNet (SGD)}
  & \texttt{alpha}                & LogUniform\{1e\(-4\), 1e\(+2\)\} \\
  & \texttt{l1\_ratio}            & Uniform\{0.0, 1.0\} \\
  & \texttt{max\_iter}            & UniformInt\{100, 2000\} \\
  & \texttt{tol}                  & LogUniform\{1e\(-6\), 1e\(-2\)\} \\
  & \texttt{fit\_intercept}       & \{true, false\} \\
  & \texttt{learning\_rate}       & \{optimal, constant, invscaling, adaptive\} \\
  & \texttt{eta0}                 & LogUniform\{1e\(-4\), 1e\(-1\)\}\\
  & \texttt{validation\_fraction} & Uniform\{0.05, 0.30\}\\
  & \texttt{n\_iter\_no\_change}  & UniformInt\{3, 20\} \\
\midrule
\multirow{7}{*}{PolyLogistic}
  & \texttt{degree}             & \{2, 3\} \\
  & \texttt{penalty}            & \{\(\ell_2\), \(\ell_1\), "elasticnet"\} \\
  & \texttt{C}                  & LogUniform\{1e\(-3\), 1e\(+2\)\} \\
  & \texttt{l1\_ratio}          & Uniform\{0.1, 0.9\}\\
  & \texttt{solver}             & \{"liblinear", "lbfgs", “newton-cg”, “saga”\}\\
  & \texttt{max\_iter}          & UniformInt\{500, 2000\}\\
  & \texttt{tol}                & LogUniform\{1e\(-5\), 1e\(-3\)\} \\
\midrule
\multirow{5}{*}{Logistic (ElasticNet)}
  & \texttt{C}                    & LogUniform\{1e\(-3\), 1e\(+2\)\} \\
  & \texttt{l1\_ratio}            & Uniform\{0.0, 1.0\} \\
  & \texttt{max\_iter}            & UniformInt\{100, 2000\} \\
  & \texttt{tol}                  & LogUniform\{1e\(-6\), 1e\(-2\)\} \\
  & \texttt{fit\_intercept}       & \{true, false\} \\
\midrule
\multirow{6}{*}{LogisticRegression}
  & \texttt{solver}             & \{"liblinear", "lbfgs", "sag"\} \\
  & \texttt{C}                  & LogUniform\{1e\(-4\), 1e\(+2\)\} \\
  & \texttt{max\_iter}          & UniformInt\{100, 2000\} \\
  & \texttt{tol}                & LogUniform\{1e\(-6\), 1e\(-2\)\} \\
  & \texttt{fit\_intercept}     & \{True, False\} \\
  & \texttt{intercept\_scaling} & Uniform\{0.1, 10.0\} \\
\midrule
\multirow{7}{*}{TabNet}
  & \texttt{learning\_rate}       & LogUniform\{1e\(-4\), 3e\(-2\)\} \\
  & \texttt{batch\_size}         & \{128, 256, 512, 1024\} \\
  & \texttt{virtual\_batch\_size}& \{64, 128\} \\
  & \texttt{n\_d}=\texttt{n\_a}  & UniformInt\{16, 64\} \\
  & \texttt{n\_steps}            & UniformInt\{3, 7\} \\
  & \texttt{gamma}               & Uniform\{1.2, 1.7\} \\
  & \texttt{lambda\_sparse}      & LogUniform\{1e\(-6\), 1e\(-3\)\} \\

\midrule
\multirow{9}{*}{DecisionTree}
  & \texttt{criterion}               & \{"gini", "entropy", "log\_loss"\} \\
  & \texttt{max\_depth}              & UniformInt\{3, 20\} \\
  & \texttt{min\_samples\_split}     & UniformInt\{2, 20\} \\
  & \texttt{min\_samples\_leaf}      & UniformInt\{1, 10\} \\
  & \texttt{min\_weight\_fraction\_leaf} & Uniform\{0.0, 0.5\} \\
  & \texttt{max\_features}           & \{"sqrt", "log2"\} \\
  & \texttt{max\_leaf\_nodes}        & UniformInt\{10, 1000\} \\
  & \texttt{min\_impurity\_decrease} & Uniform\{0.0, 0.1\} \\
  & \texttt{ccp\_alpha}              & Uniform\{0.0, 0.1\} \\

\midrule
\multirow{6}{*}{GP (SVGP)}
  & \texttt{kernel}            & \{rbf, matern, rational\_quadratic\} \\
  & \texttt{lengthscale}       & LogUniform\{0.1, 10.0\} \\
  & \texttt{rq\_alpha}         & LogUniform\{0.1, 5.0\} \\
  & \texttt{num\_inducing}     & UniformInt\{100, 500\} \\
  & \texttt{learning\_rate}    & LogUniform\{1e\(-2\), 5e\(-1\)\} \\
  & \texttt{training\_iters}   & UniformInt\{50, 200\} \\

 \midrule
\multirow{5}{*}{RandomForest} 
  & \texttt{n\_estimators}        & UniformInt\{100, 500\} \\
  & \texttt{max\_depth}           & UniformInt\{3, 30\} \\
  & \texttt{max\_features}        & \{"sqrt", "log2"\} \\
  & \texttt{min\_samples\_leaf}   & UniformInt\{1, 10\} \\
  & \texttt{min\_samples\_split}  & UniformInt\{2, 20\}\\

\bottomrule
\end{longtable}

\subsection{Interpreting BL: A Case Study}\label{app:standard-task}

\subsubsection{Interpreting BL(Deep): High-Level Overview}

Deeper variants of BL are constructed by stacking multiple BL(Single) modules into hierarchical layers, followed by a final affine transformation. This forms a system of interacting UMPs (each of which can be viewed as an agent), where each internal block \(\mathcal{B}\) represents a single interpretable UMP. As shown in Figure~\ref{fig:case-study-2}, first-layer modules correspond to individual UMPs, while the second-layer module performs optimal coordination by aggregating or allocating their outputs. This layered structure offers a compositional interpretation of deeper BL models as systems of interacting, interpretable UMPs.

\subsubsection{Case Study: Additional Details}

\begin{table}[htbp]
\centering
\caption{Boston Housing dataset variables and descriptions.}
\small
\begin{tabular}{ll}
\toprule
\textbf{Variable} & \textbf{Description} \\
\midrule
CRIM   & Per-capita crime rate by town \\
ZN     & Proportion of residential land zoned for lots over 25,000 sq.ft. \\
INDUS  & Proportion of non-retail business acres per town \\
CHAS   & Charles River dummy variable (=1 if tract bounds river) \\
NOX    & Nitric oxide concentration (parts per 10 million) \\
RM     & Average number of rooms per dwelling \\
AGE    & Proportion of owner-occupied units built prior to 1940 \\
DIS    & Weighted distances to five Boston employment centers \\
RAD    & Index of accessibility to radial highways \\
TAX    & Full-value property-tax rate per \$10,000 \\
PTRATIO & Pupil–teacher ratio by town \\
B      & $1000(B_k - 0.63)^2$ where $B_k$ is the proportion of Black residents by town \\
LSTAT  & Percentage of lower-status population \\
MEDV   & Median value of owner-occupied homes in \$1000s \\
\bottomrule
\end{tabular}
\end{table}

\begin{table}[htbp]
\centering
\caption{Semantic roles of blocks in the deep BL architecture.}
\label{app:senmicofblock}
\small
\begin{tabular}{p{0.07\linewidth} p{0.35\linewidth} p{0.52\linewidth}}
\toprule
\textbf{Layer} & \textbf{Block} & \textbf{Representative preference} \\
\midrule

\multirow[c]{10}{*}{Layer 1}
 & Location-Sensitive Buyer 
 & Values river access, transport accessibility, and neighborhood amenities. \\
 & Risk-Sensitive Buyer
 & Averse to local disamenities such as pollution and environmental risk. \\
 & Economic-Sensitive Buyer
 & Sensitive to school quality and neighborhood socio-economic composition. \\
 & Zoning-Contrast Buyer
 & Responds to zoning and land-use patterns that shape local housing supply. \\
 & Affordability-Preferring Buyer
 & Strongly prefers more affordable housing and dislikes high prices. \\
\midrule

\multirow[c]{5}{*}{Layer 2}
 & Integrated Location--Economic Buyer
 & Jointly evaluates location and socio-economic attributes in an integrated way. \\
 & Budget-Conflict Buyer
 & Exhibits strong preferences for desirable locations but faces binding budget constraints. \\
 & Balanced Trade-off Buyer
 & Jointly considers multiple housing attributes in a balanced manner. \\
\midrule

\multirow{1}{*}{Layer 3}
 & Representative Composite Buyer
 & Aggregates all lower-level preference components into a representative household. \\
\bottomrule
\end{tabular}
\end{table}

\begin{table}[htbp]
\label{app:sci-recove}
\centering
\caption{Each block in the deep BL architecture is aligned with a classic preference mechanism documented in the economics literature.}
\label{app:sci-recove}
\small
\begin{tabular}{ll}
\toprule
\textbf{Layer / Block} & \textbf{Representative reference} \\
\midrule
Layer 1: Location-Sensitive Buyer
  & \cite{gibbons2005valuing}\\

Layer 1: Risk-Sensitive Buyer
  & \cite{chay2005does}\\

Layer 1: Economic-Sensitive Buyer
  & \cite{black1999better}\\

Layer 1: Zoning-Contrast Buyer
  & \cite{glaeser2002impact}\\

Layer 1: Affordability-Preferring Buyer
  & \cite{mcfadden1977modelling}\\
\midrule
Layer 2: Integrated Location--Economic Buyer
  & \cite{bayer2007unified}\\

Layer 2: Budget-Conflict Buyer& \cite{balseiro2019dynamic}\\

Layer 2: Balanced Trade-off Buyer
  & \cite{rosen1974hedonic}\\

\bottomrule
\end{tabular}
\end{table}

\newpage
\subsection{Prediction on High-Dimensional Inputs}\label{app:high_dimension}
\paragraph{Datasets Description and Preprocessing.} For image datasets, we use the official train/test splits of MNIST and Fashion-MNIST: Inputs are converted to single-channel images scaled to $[0,1]$ and standardized with dataset-specific statistics. No resizing or data augmentation is applied. Training uses shuffled mini-batches of size $64$.
For text datasets, we apply the following procedures:
\begin{itemize}
\item[1] Data sources and official splits.
  We use the official training and test splits for AG News and Yelp Review Polarity without any custom re-partitioning. Both datasets are class-balanced across labels, and we do not perform any resampling.
\item[2] Dataset sizes.
AG News: 120{,}000 training / 7{,}600 test samples  with four balanced classes.
Yelp Review Polarity: 560{,}000 training / 38{,}000 test samples with two balanced classes.

\item[3]  Label mapping.
AG News: labels 1–4 are mapped to 0–3.
Yelp Review Polarity: labels 1–2 are mapped to 0–1.

\item[4]  Text preprocessing and feature representation.
  All texts are lowercased and tokenized at the word level. The vocabulary is built with unigrams and bigrams, discarding words that appear fewer than two times in the training corpus. The vocabulary size is capped (AG News: 200{,}000; Yelp: 100{,}000). We compute TF--IDF weights on the training split and apply the learned weights to the test split. Dimensionality is reduced to 128 latent components using truncated singular value decomposition (SVD). Features are standardized to zero mean and unit variance and finally $\ell_{2}$-normalized. We fix the random seed for reproducibility and reuse the learned preprocessing components across runs.
\end{itemize}

\paragraph{Additional OOD Detection Results.}
In addition to accuracy and AUROC, we also report AUPR and FPR@95 for both image and text datasets; the results are shown in Table~\ref{tab:img-text-ood}. On image datasets, BL (depth=1) achieves the best overall balance: it ranks first on Fashion-MNIST AUPR and second on Fashion-MNIST FPR@95. On MNIST, it is second in AUPR but underperforms in FPR@95 compared with E-MLP (depth=2). These results suggest that BL yields separable score distributions, particularly on Fashion-MNIST, although its 95\% FPR threshold admits more OOD samples than E-MLP at the same recall. On text datasets, OOD detection performance is dataset-dependent: E-MLP performs better on AG News, whereas BL achieves stronger OOD performance on Yelp.

\begin{table}[htbp]
\centering
\caption{OOD AUPR and FPR@95 (\%) on image and text datasets.
BL and E-MLP are evaluated at depths 1–3 with matched parameter counts, both without skip connections. Top-two per column are \rkA{blue} and \rkB{red}.}
\label{tab:img-text-ood}
\begin{tabular}{c cc cc}
\toprule
\multirow{2}{*}{\textbf{Model}}  & \multicolumn{2}{c}{\textbf{MNIST}} & \multicolumn{2}{c}{\textbf{Fashion-MNIST}}\\
\cmidrule(lr){2-3}\cmidrule(lr){4-5}
 & AUPR & FPR@95 & AUPR & FPR@95 \\
\midrule
E-MLP (depth=1) & \mstd{89.37}{1.52} & \mstd{35.57}{5.87} & \rkB{\mstd{91.35}{1.25}} & \rkA{\mstd{28.24}{4.37}} \\
BL (depth=1)    & \rkB{\mstd{91.57}{2.39}} & \mstd{47.81}{11.29} & \rkA{\mstd{91.79}{0.90}} & \rkB{\mstd{38.86}{2.57}} \\
E-MLP (depth=2) & \mstd{91.52}{1.27} & \rkA{\mstd{28.89}{2.85}} & \mstd{86.19}{2.27} & \mstd{47.72}{4.79} \\
BL (depth=2)    & \mstd{91.20}{1.22} & \mstd{52.71}{18.66} & \mstd{89.30}{2.47} & \mstd{42.65}{9.53} \\
E-MLP (depth=3) & \mstd{90.04}{1.89} & \rkB{\mstd{31.92}{5.76}} & \mstd{84.30}{1.50} & \mstd{54.49}{2.74} \\
BL (depth=3)    & \rkA{\mstd{92.36}{2.03}} & \mstd{32.32}{5.76} & \mstd{88.41}{4.04} & \mstd{41.19}{13.36} \\
\midrule
\multirow{2}{*}{\textbf{Model}} & \multicolumn{2}{c}{\textbf{AG News}} & \multicolumn{2}{c}{\textbf{Yelp}}\\
\cmidrule(lr){2-3}\cmidrule(lr){4-5}
 & AUPR & FPR@95 & AUPR & FPR@95 \\
\midrule
E-MLP (depth=1) & \rkB{\mstd{44.52}{15.10}} & \rkB{\mstd{33.82}{4.89}} & \mstd{3.31}{1.60} & \mstd{54.21}{2.11} \\
BL (depth=1)    & \mstd{18.68}{16.48} & \mstd{42.03}{5.99} & \rkA{\mstd{12.70}{2.29}} & \rkA{\mstd{40.95}{1.56}} \\
E-MLP (depth=2) & \mstd{31.48}{23.94} & \mstd{40.20}{9.82} & \mstd{1.47}{2.31} & \mstd{57.80}{4.88} \\
BL (depth=2)    & \mstd{10.76}{15.94} & \mstd{53.71}{9.68} & \mstd{6.73}{2.61} & \mstd{46.54}{1.86} \\
E-MLP (depth=3) & \rkA{\mstd{51.24}{9.13}} & \rkA{\mstd{32.96}{3.23}} & \mstd{3.14}{2.22} & \mstd{55.24}{5.33} \\
BL (depth=3)    & \mstd{16.99}{17.12} & \mstd{45.24}{6.11} & \rkB{\mstd{10.96}{1.10}} & \rkB{\mstd{42.27}{1.94}} \\
\bottomrule
\end{tabular}
\end{table}

\paragraph{Number of Parameters.}
To ensure a fair comparison between E-MLP and BL, we match the number of trainable parameters as closely as possible for models with the same depth (see Table~\ref{tab:params_num}).

\paragraph{Running Time.}
To evaluate computational cost, we compare the training time of BL and Energy-based MLP across image and text datasets (Table~\ref{tab:train_time}). 
Under comparable parameter budgets, BL generally requires slightly higher training time than E-MLP across datasets. In particular, BL is moderately slower on image datasets and AG News, while exhibiting comparable running time on Yelp. 

\paragraph{Calibration}
We report ECE and NLL metrics to assess calibration quality, and the results are presented in Table~\ref{tab:prob_metrics}. On image datasets, BL provides substantially better calibration, with BL models occupying the top two positions in each column. On text datasets, calibration performance is broadly comparable, with BL showing slightly lower NLL on Yelp. Overall, these results indicate that BL delivers strong predictive performance together with reliable probability estimates.

\begin{table}[htbp]
\centering
\caption{Number of trainable parameters for E-MLP and BL models across high-dimension datasets.}
\label{tab:params_num}
\begin{tabular}{lcc}
\toprule
\textbf{Dataset} & \textbf{Model} & \textbf{\# Parameters} \\
\midrule

\multirow{6}{*}{MNIST \& FashionMNIST}
& E-MLP (depth=1) & 203{,}530 \\
& BL (depth=1) & 208{,}384 \\
& E-MLP (depth=2) & 235{,}146 \\
& BL (depth=2) & 219{,}264 \\
& E-MLP (depth=3) & 238{,}314 \\
& BL (depth=3) & 221{,}684 \\
\midrule

\multirow{6}{*}{AGNews}
& E-MLP (depth=1) & 136{,}196 \\
& BL (depth=1) & 149{,}720 \\
& E-MLP (depth=2) & 386{,}284 \\
& BL (depth=2) & 397{,}568 \\
& E-MLP (depth=3) & 230{,}788 \\
& BL (depth=3) & 224{,}128 \\
\midrule

\multirow{6}{*}{Yelp}
& E-MLP (depth=1) & 134{,}146 \\
& BL (depth=1) & 148{,}960 \\
& E-MLP (depth=2) & 385{,}770 \\
& BL (depth=2) & 397{,}312 \\
& E-MLP (depth=3) & 230{,}530 \\
& BL (depth=3) & 224{,}000 \\
\bottomrule
\end{tabular}
\label{tab:model_params}
\end{table}

\subsection{Case Study: Estimation Results of BL on the Boston Housing Dataset}

\label{para-case}
\begin{table*}[htbp]
\centering
\caption{
Estimated UMP block parameters learned by the BL model (layer = [2, 1]) on the Boston Housing dataset. 
For each block, $U$ denotes the Utility component, $C$ the Inequality-Constraint component, and $T$ the Equality-Constraint component.
}
\small

\begin{tabular}{l|ccc|ccc}
\toprule
 & \multicolumn{3}{c|}{\textbf{Block 11}} 
 & \multicolumn{3}{c}{\textbf{Block 12}} \\
\textbf{Variable} 
& $U_{11}$ & $C_{11}$ & $T_{11}$
& $U_{12}$ & $C_{12}$ & $T_{12}$ \\
\midrule
$\lambda$ & 1.003 & 0.997 & 0.999 & 0.997 & 1.003 & 1.000 \\
per capita crime rate (CRIM) & 0.21 & 0.14 & 0.03 & 0.12 & 0.09 & 0.25 \\
residential land proportion (ZN)    & 0.23 & -0.04 & -0.27 & 0.25 & 0.00 & 0.09 \\
non-retail business acreage (INDUS) & -0.06 & 0.21 & 0.25 & 0.16 & 0.22 & 0.27 \\
Charles River dummy (CHAS)  & 0.25 & 0.04 & -0.24 & -0.12 & -0.20 & -0.23 \\
nitric oxide concentration (NOX)   & -0.06 & -0.13 & 0.21 & 0.16 & 0.02 & -0.28 \\
average rooms per dwelling (RM)    & 0.06 & 0.07 & 0.05 & 0.05 & -0.19 & -0.22 \\
proportion of older units (AGE)   & -0.13 & -0.12 & -0.09 & 0.14 & 0.08 & -0.18 \\
distance to employment centres (DIS)   & 0.16 & -0.03 & 0.17 & -0.17 & -0.09 & 0.11 \\
radial highway accessibility (RAD)   & 0.24 & -0.11 & 0.04 & -0.28 & 0.09 & 0.10 \\
property tax rate (TAX)   & -0.20 & 0.18 & 0.22 & -0.11 & -0.06 & 0.23 \\
low-income population (LSTAT) & 0.05 & -0.12 & -0.09 & 0.23 & -0.16 & -0.19 \\
median home value (MEDV)  & 0.21 & -0.08 & 0.07 & 0.08 & -0.17 & 0.15 \\
Constant term (C)     & 0.03 & -0.17 & -0.07 & 0.11 & -0.16 & -0.12 \\
\bottomrule
\end{tabular}
\vspace{4mm}

\small
\begin{tabular}{l|ccc}
\toprule
\multicolumn{4}{c}{\textbf{Block 21}} \\
\textbf{Variable} & $U_{21}$ & $C_{21}$ & $T_{21}$ \\
\midrule
$\lambda$  & 1.000 & 1.003 & 0.999 \\
Block 11 output ($b_{1,1}$) & 0.428 & -0.551 & 0.147 \\
Block 12 output ($b_{1,2}$) & -0.168 & -0.356 & -0.178 \\
Constant term (C) & 0.406 & 0.219 & 0.421 \\
\bottomrule
\end{tabular}

\end{table*}

\begin{table*}[htbp]
\centering
\caption{
Estimated UMP parameters for the Layer 1 blocks of the BL model (layer = [5, 3, 1]) trained on the Boston Housing dataset. 
Here, $U$ denotes the Utility component, $C$ the Inequality-Constraint component, and $T$ the Equality-Constraint component.
}

\small
\begin{tabular}{l|ccccc}
\toprule
\textbf{Variable} 
& $U_{11}$ & $U_{12}$ & $U_{13}$ & $U_{14}$ & $U_{15}$ \\
\midrule
$\lambda$                                & 1.000 & 0.998 & 1.003 & 1.002 & 1.000 \\
per capita crime rate (CRIM)             & 0.21  & 0.12  & 0.17  & -0.09 & 0.06  \\
residential land proportion (ZN)         & 0.23  & 0.25  & -0.07 & -0.22 & -0.16 \\
non-retail business acreage (INDUS)      & -0.06 & 0.16  & 0.16  & 0.23  & -0.14 \\
Charles River dummy (CHAS)               & 0.25  & -0.12 & -0.22 & -0.05 & -0.01 \\
nitric oxide concentration (NOX)         & -0.06 & 0.16  & -0.14 & 0.24  & 0.16  \\
average rooms per dwelling (RM)          & 0.05  & 0.05  & 0.08  & 0.09  & -0.07 \\
proportion of older units (AGE)          & -0.13 & 0.14  & 0.06  & -0.23 & -0.15 \\
distance to employment centres (DIS)     & 0.16  & -0.17 & -0.07 & 0.19  & -0.10 \\
radial highway accessibility (RAD)       & 0.24  & -0.27 & 0.17  & -0.08 & -0.20 \\
property tax rate (TAX)                  & -0.20 & -0.11 & 0.19  & -0.11 & 0.10  \\
low-income population (LSTAT)            & 0.05  & 0.23  & -0.15 & -0.28 & -0.26 \\
median home value (MEDV)                 & 0.21  & 0.08  & 0.25  & 0.08  & 0.06  \\
Constant term (C)                        & 0.03  & 0.12  & -0.09 & -0.06 & 0.15  \\
\bottomrule
\end{tabular}

\small
\begin{tabular}{l|ccccc}
\toprule

& $C_{11}$ & $C_{12}$ & $C_{13}$ & $C_{14}$ & $C_{15}$ \\
\midrule
$\lambda$                                & 0.999 & 1.001 & 1.000 & 0.997 & 1.002 \\
per capita crime rate (CRIM)             & 0.13  & 0.09  & -0.10 & 0.11  & 0.05  \\
residential land proportion (ZN)         & -0.04 & -0.01 & -0.27 & -0.23 & -0.10 \\
non-retail business acreage (INDUS)      & 0.21  & 0.22  & -0.16 & 0.20  & 0.15  \\
Charles River dummy (CHAS)               & 0.04  & -0.19 & 0.07  & -0.20 & 0.15  \\
nitric oxide concentration (NOX)         & -0.13 & 0.02  & -0.04 & -0.05 & 0.11  \\
average rooms per dwelling (RM)          & 0.07  & -0.19 & -0.20 & 0.06  & -0.05 \\
proportion of older units (AGE)          & -0.13 & 0.08  & 0.01  & 0.14  & -0.07 \\
distance to employment centres (DIS)     & -0.03 & -0.09 & -0.19 & 0.22  & 0.03  \\
radial highway accessibility (RAD)       & -0.11 & 0.08  & -0.23 & 0.25  & -0.05 \\
property tax rate (TAX)                  & 0.18  & -0.06 & -0.15 & -0.22 & -0.08 \\
low-income population (LSTAT)            & -0.13 & -0.17 & -0.18 & -0.12 & 0.24  \\
median home value (MEDV)                 & -0.08 & -0.17 & 0.28  & -0.03 & -0.03 \\
Constant term (C)                        & -0.17 & -0.16 & 0.05  & -0.21 & -0.06 \\
\bottomrule
\end{tabular}
\vspace{3mm}
\small
\begin{tabular}{l|ccccc}
\toprule
 
& $T_{11}$ & $T_{12}$ & $T_{13}$ & $T_{14}$ & $T_{15}$ \\
\midrule
$\lambda$                                & 0.999 & 1.002 & 0.999 & 1.004 & 1.001 \\
per capita crime rate (CRIM)             & 0.03  & 0.25  & 0.08  & 0.25  & 0.00  \\
residential land proportion (ZN)         & -0.27 & 0.10  & -0.26 & -0.20 & -0.02 \\
non-retail business acreage (INDUS)      & 0.25  & 0.26  & -0.18 & 0.15  & 0.07  \\
Charles River dummy (CHAS)               & -0.23 & -0.23 & -0.09 & 0.10  & 0.08  \\
nitric oxide concentration (NOX)         & 0.21  & -0.28 & 0.04  & 0.09  & -0.25 \\
average rooms per dwelling (RM)          & 0.05  & -0.21 & -0.24 & -0.15 & -0.10 \\
proportion of older units (AGE)          & -0.09 & -0.19 & -0.12 & 0.26  & 0.24  \\
distance to employment centres (DIS)     & 0.17  & 0.12  & -0.17 & 0.06  & 0.10  \\
radial highway accessibility (RAD)       & 0.04  & 0.10  & 0.00  & 0.04  & -0.01 \\
property tax rate (TAX)                  & 0.22  & 0.23  & -0.10 & -0.24 & -0.17 \\
low-income population (LSTAT)            & -0.09 & -0.19 & -0.19 & -0.04 & -0.25 \\
median home value (MEDV)                 & 0.08 & 0.15 & -0.19 & -0.13 & -0.09 \\
Constant term (C)                        & -0.07 & -0.11 & -0.16 & 0.24  & 0.10  \\
\bottomrule
\end{tabular}

\end{table*}

\begin{table*}[htbp]
\centering
\caption{Layer 2 and Layer 3 UMP parameters ($U$, $C$, $T$) for Blocks in the BL model (layer = [5, 3, 1]).}
\small
\begin{tabular}{l|ccc|ccc|ccc}
\toprule
 & \multicolumn{3}{c|}{\textbf{Block 21}} 
 & \multicolumn{3}{c|}{\textbf{Block 22}} 
 & \multicolumn{3}{c}{\textbf{Block 23}} \\
\textbf{Variable}
& $U_{21}$ & $C_{21}$ & $T_{21}$
& $U_{22}$ & $C_{22}$ & $T_{22}$
& $U_{23}$ & $C_{23}$ & $T_{23}$ \\
\midrule
$\lambda$ & 1.000 & 1.000 & 1.000 & 0.999 & 1.003 & 1.002 & 1.001 & 1.002 & 0.999 \\
Block 11 output ($b_{1,1}$) &  0.28 &  0.06 & -0.20 & -0.31 &  0.24 &  0.18 & -0.29 & -0.08 &  0.22 \\
Block 12 output ($b_{1,2}$) &  0.21 & -0.11 & -0.09 & -0.44 &  0.12 & -0.22 &  0.15 & -0.22 &  0.20 \\
Block 13 output ($b_{1,3}$) & -0.40 &  0.18 & -0.44 & -0.36 & -0.01 & -0.09 & -0.13 & -0.14 &  0.32 \\
Block 14 output ($b_{1,4}$) & -0.27 & -0.17 &  0.30 &  0.33 & -0.34 & -0.26 &  0.28 & -0.42 & -0.34 \\
Block 15 output ($b_{1,5}$) & -0.07 & -0.29 &  0.34 &  0.22 & -0.38 & -0.08 & -0.14 &  0.25 &  0.32 \\
Constant term (C)           &  0.43 &  0.33 &  0.16 &  0.38 & -0.42 & -0.32 & -0.33 & -0.31 & -0.21 \\
\bottomrule
\end{tabular}

\vspace{3mm}
\small
\begin{tabular}{lccc}
\toprule
\textbf{Variable} & $U_{31}$ & $C_{31}$ & $T_{31}$ \\
\midrule
$\lambda$              & 1.002 & 0.998 & 1.000 \\
Block 21 output ($b_{2,1}$) &  0.21 & -0.13 &  0.36 \\
Block 22 output ($b_{2,2}$) &  0.54 & -0.48 &  0.43 \\
Block 23 output ($b_{2,3}$) & -0.08 &  0.28 &  0.55 \\
Constant term (C)           & -0.01 & -0.58 & -0.14 \\
\bottomrule
\end{tabular}
\end{table*}
\end{document}